\documentclass{article}

\usepackage{microtype}
\usepackage{graphicx}
\usepackage{subcaption}
\usepackage{booktabs}
\usepackage{hyperref}

\usepackage[accepted]{icml2026}

\usepackage{amsmath}
\usepackage{amssymb}
\usepackage{mathtools}
\usepackage{amsthm}
\usepackage{bm}
\usepackage{nicefrac}
\usepackage{xcolor}
\usepackage{colortbl}
\usepackage{enumitem}
\usepackage{wrapfig}
\usepackage{multirow}
\usepackage{float}
\usepackage{adjustbox}

\usepackage[capitalize,noabbrev]{cleveref}

\theoremstyle{plain}
\newtheorem{theorem}{Theorem}
\newtheorem{proposition}[theorem]{Proposition}

\newtheorem{corollary}[theorem]{Corollary}
\theoremstyle{definition}
\newtheorem{definition}[theorem]{Definition}
\theoremstyle{remark}
\newtheorem{remark}[theorem]{Remark}

\providecommand{\tablestyle}[2]{\setlength{\tabcolsep}{#1}\renewcommand{\arraystretch}{#2}}
\definecolor{cOurs}{HTML}{c4553a}      
\definecolor{cRPNI}{HTML}{3d5a80}      
\definecolor{cAlergia}{HTML}{2a8f82}   
\definecolor{cGray}{HTML}{8a8478}      
\definecolor{cLight}{HTML}{d5cfc5}     
\definecolor{cOursBg}{HTML}{f5e3df}    
\definecolor{cRPNIBg}{HTML}{dfe6ed}    
\definecolor{cAlergiaBg}{HTML}{ddeeed} 
\definecolor{cGroupHd}{HTML}{f0eeeb}   
\definecolor{cBest}{HTML}{fef3e0}      
\definecolor{cHMMBg}{RGB}{237, 237, 247}
\providecommand{\hlbest}[1]{\cellcolor{cBest}\textbf{#1}}

\usepackage{pifont}
\newcommand{\cmark}{\ding{51}}
\newcommand{\xmark}{\ding{55}}

\usepackage{tikz}
\usetikzlibrary{automata,positioning,arrows.meta,calc}

\newcommand{\cA}{\mathcal{A}}

\newcommand{\cM}{\mathcal{M}}
\newcommand{\cP}{\mathcal{P}}

\newcommand{\cD}{\mathcal{D}}

\icmltitlerunning{Automata from Agent Traces}

\begin{document}

\twocolumn[
\icmltitle{Automata from Agent Traces:\\ Failure and Next-Step Prediction}

\icmlsetsymbol{equal}{*}
\begin{icmlauthorlist}
\icmlauthor{Seonglae Cho}{hai}
\icmlauthor{Franklin Cardenoso Fernandez}{hai,puc}
\icmlauthor{Umar Mohammed}{hai}
\icmlauthor{Zekun Wu}{ucl}
\icmlauthor{Kleyton Da Costa}{ucl}
\icmlauthor{Ilham Wicaksono}{hai}
\icmlauthor{Adriano Koshiyama}{ucl}
\end{icmlauthorlist}
\icmlaffiliation{hai}{Holistic AI}
\icmlaffiliation{ucl}{University College London}
\icmlaffiliation{puc}{PUC-Rio}
\icmlcorrespondingauthor{Seonglae Cho}{seonglae.cho.24@ucl.ac.uk}
\icmlkeywords{LLM agents, agent monitoring, failure prediction, finite state machines}
\vskip 0.3in
]

\printAffiliationsAndNotice{}

\begin{abstract}
LLM-based agents execute multi-step tasks, but their behavioral structure remains opaque: long unstructured traces resist the safety auditing and runtime monitoring that deployment requires.
Existing approaches operate per-trace or success-only, so they miss the cross-run topology that links \emph{next-step} and \emph{failure} prediction.
To recover that shared structure, we collapse an entire trace corpus into a single, compact finite-state machine (FSM) that serves as a structural substrate for the otherwise unpredictable behavior of LLM agents.
Across twelve public datasets, the FSMs are compact (7--43 states), replay held-out data at $\geq$0.997 fitness with near-identical topology across splits, and build in milliseconds.
This substrate addresses both prediction goals.
For \emph{next-step prediction}, FSM-state context outperforms Agent Workflow Memory on every ground-truth-matched dataset.
For \emph{failure prediction}, per-state behavioral features reach held-out AUROC up to 0.94, and an online monitor ranks failing runs above passing ones from a partial trace, triggering early stopping well before completion.
Behavioral topology thus appears shaped more by the deployment harness than by the LLM, providing a model-agnostic structural primitive for safety auditing and runtime monitoring.
\end{abstract}

\section{Introduction}\label{sec:intro}

As LLM-based agents~\citep{wang2024survey_agents, sumers2024coala} take on longer reasoning chains and broader action spaces, the risk of undetected failures scales with their autonomy.
These agents now resolve GitHub issues~\citep{yang2024sweagent, yang2025swesmith}, navigate websites~\citep{deng2024mind2web, zhou2024webarena, koh2024visualwebarena}, operate desktop environments~\citep{xie2024osworld, wang2025opencua}, manage customer service interactions~\citep{yao2024taubench}, and orchestrate multi-agent pipelines~\citep{wu2023autogen, hong2023metagpt}.
Following the ReAct paradigm~\citep{yao2023react}, they interleave chain-of-thought reasoning~\citep{wei2022chain} with tool calls~\citep{schick2023toolformer}, generating execution traces whose behavioral structure remains implicit.
A coding agent cycles through \texttt{search}$\to$\texttt{edit}$\to$\texttt{execute}; a customer service agent alternates between database queries and user communication.
This structure emerges from the interaction between the system prompt, available tools, and task distribution, but nowhere is it specified.

Understanding this latent structure matters for safety auditing~\citep{zhang2025asb, ruan2024toolemu, chen2025shieldagent}, debugging bottleneck states~\citep{yang2025whoandwhen, cemri2025multiagent_fail}, and monitoring behavioral drift in production~\citep{wang2026agentspec}.
Yet current approaches operate at the individual trace level, requiring task descriptions, manual specification, or success filters~\citep{chen2025metaagent, wu2024stateflow, wang2024agent_workflow_memory}.

\begin{figure*}[!t]
\centering
\includegraphics[width=\linewidth]{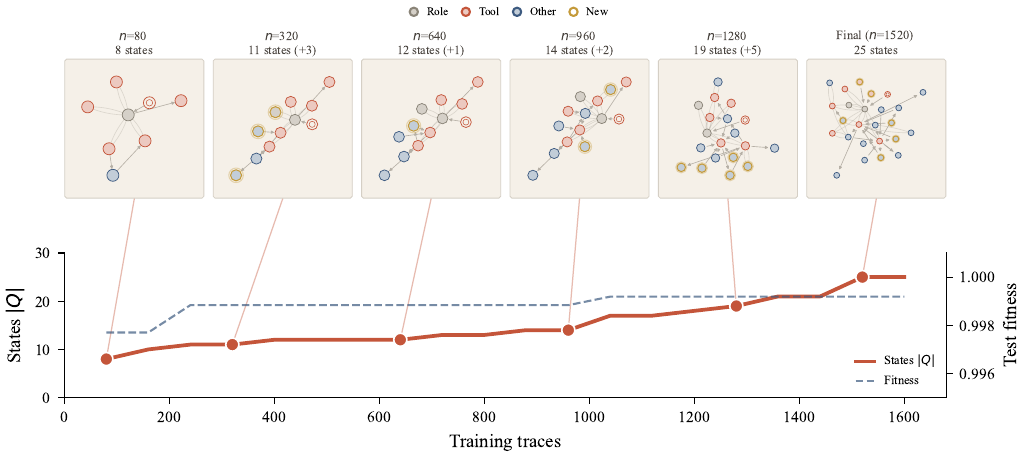}
\caption{\textbf{FSM evolution} on SWE-agent. State count $|Q|$ (red, left) and test fitness (blue, right) over training traces, with FSM snapshots at six milestones. The state space grows incrementally as new behavioral modes appear, while fitness saturates early ($\geq 0.99$ at 240 traces, 15\% of training); construction completes in milliseconds.}
\label{fig:evolution}
\end{figure*}

We frame behavioral recovery as an inverse problem: given a corpus of execution traces, reconstruct a finite state machine (FSM) that explains the observed behavior.
Agent traces provide only positive examples in the Gold sense~\citep{gold1967language, angluin1980inductive}, and identifying the target language from positive examples alone is impossible in the limit.
Our key observation is that agent behavior is generated by a bounded set of tools and actions, producing traces with small activity alphabets (6--42 symbols).
The resulting behavioral topology appears shaped more by the \emph{system} than by the LLM, across 4 chat models on tau2-bench: a single FSM achieves perfect fitness on every model.
This structural constraint makes the problem tractable: a prefix tree merged by last activity produces a compact directly-follows FSM in linear time, requiring no learning hyperparameters (the only design choice is the activity extraction function, whose robustness we verify in Appendix~\ref{app:granularity}).
We evaluate on twelve public datasets (Table~\ref{tab:datasets}) against nine baselines from automata learning (RPNI, EDSM, Alergia, k-Tails), HMMs, process mining, and workflow extraction (\S\ref{sec:related}):

\begin{itemize}[leftmargin=*,itemsep=0pt,topsep=2pt]
  \item \textbf{Workflow memory.} FSM-state context outperforms Agent Workflow Memory~\citep{wang2024agent_workflow_memory} on 8/8 datasets (6 statsig at $p\!<\!10^{-8}$; Table~\ref{tab:fsm_context_llm}).
  \item \textbf{Next-step prediction.} FSM state conditioning improves cross-entropy by 0.155 bits (21\%) over identical methods without state.
  \item \textbf{Failure prediction.} Per-state features reach held-out AUROC up to 0.94, lift MLP/GRU/Transformer baselines on 20 of 21 pairs, and power a prefix-based monitor that ranks failing SWE-agent runs above passing ones at the 25\% checkpoint (rank-AUROC 0.66 vs.\ 0.5 for flag-everything) and triggers early stopping at 32\% completion.
  \item \textbf{Compression.} 15--3{,}036$\times$ fewer states than RPNI at $\geq$0.997 fitness from a deterministic, hyperparameter-free construction.
\end{itemize}

One object ties these results together: bounded LLM-agent alphabets make the resulting compact deterministic finite automaton (DFA) both small and statistically informative, and the same FSM unifies workflow memory, next-step prediction, failure prediction, and runtime monitoring (Theorems and Propositions in \S\ref{sec:method:theory}).

\section{Related Work}\label{sec:related}

\paragraph{Agent safety and monitoring.}
AgentSpec~\citep{wang2026agentspec} and ShieldAgent~\citep{chen2025shieldagent} enforce safety policies; AgentMonitor~\citep{chan2024agentmonitor} predicts task performance from step-level features using flat XGBoost models. ProbGuard~\citep{wang2025probguard} learns a DTMC from traces and applies bounded-horizon PCTL reachability for runtime safety filtering; head-to-head on our datasets (Appendix~\ref{app:pro2guard}) it trails our FSM features by mean $+0.176$ AUROC because, without hand-crafted unsafe predicates, its symbolic-state abstraction degrades to per-activity granularity. Concurrent trajectory-anomaly detectors~\citep{liu2025traceaegis, deshpande2025trail, he2025sentinelagent} target the same problem with hierarchical, behavioral, or graph-based pipelines; our FSM differs by providing a compact structural quotient that doubles as workflow memory and next-step predictor, not solely an anomaly score.
Closest is the concurrent PrefixGuard~\citep{huang2026prefixguard}, which also extracts a DFA from LLM-agent traces for online failure-warning monitors; we treat the same compact automaton as one substrate that additionally drives compression, next-step prediction, and workflow memory, rather than a monitor-only construction.
\citet{cemri2025multiagent_fail} taxonomize multi-agent failure modes from 1,600+ traces, motivating automated detection.
These approaches either require hand-crafted policies or lack structural behavioral models.
Our FSM provides a learned structural model that enables compositional queries and early failure prediction from partial traces.

\paragraph{Behavioral abstractions for agents.}
Agent Workflow Memory~\citep{wang2024agent_workflow_memory} extracts linear workflow patterns from successful traces, while Reflexion~\citep{shinn2023reflexion} and ETO~\citep{song2024eto} learn from failures via verbal reflection or contrastive pairs.
ReasoningBank~\citep{ouyang2025reasoningbank} extends AWM with both successful and failed traces.
None of these produce structural models with state abstraction.
On the FSM side, AFlow~\citep{zhang2025aflow} searches workflows via MCTS, MetaAgent~\citep{chen2025metaagent} builds FSMs top-down from task descriptions, and StateFlow~\citep{wu2024stateflow} relies on manual specification.
Our method recovers FSMs bottom-up from raw traces with a compact structural quotient and per-state decomposition for failure prediction.

\paragraph{Process mining.}
Process discovery~\citep{vanderaalst2016process} recovers Petri nets from event logs.
Applied to agent traces, standard miners produce ``flower models'' with precision 0.00--0.80 (Table~\ref{tab:pm4py_detail}), with highest precision on constrained workflows~\citep{grohs2024process_mining_llm, berti2024processmining}.
Our automaton is the directly-follows graph~\citep{vanderaalst2016process} made deterministic by a last-activity right congruence; the closest learning-based variant is stochastic directly-follows discovery via grammatical inference~\citep{alkhammash2024stochastic}, which tunes a soundness objective for business-process event logs, whereas we use a single deterministic pass with a convergence guarantee and apply the result to LLM-agent failure prediction, next-step prediction, and monitoring.

\paragraph{Grammatical inference.}
Learning finite automata from positive examples is impossible in the limit~\citep{gold1967language, angluin1980inductive}.
RPNI~\citep{oncina1992rpni}, EDSM~\citep{lang1998edsm}, and L*~\citep{angluin1987learning} require negative examples or oracles unavailable in trace analysis.
k-Tails~\citep{biermann1972ktails} merges states with identical $k$-length futures, but requires a hyperparameter and produces 1.4--10$\times$ more states than ours with lower fitness.
Among positive-only methods, Alergia~\citep{carrasco1994alergia} is the strongest competitor: it matches our fitness with 1.0--6.0$\times$ more states via statistical tests.
HMMs~\citep{rabiner1989hmm} match state counts but yield non-interpretable latent states.
Our approach exploits bounded activity alphabets (6--42 symbols) to produce compact, interpretable FSMs (7--43 states) without hyperparameters.

\section{Method}\label{sec:method}

\subsection{Problem Formulation}\label{sec:method:problem}

An agent execution trace is a sequence of messages $\tau = (m_1, m_2, \ldots, m_T)$, where each message $m_t$ has a role (system, user, assistant, tool) and content.
An activity extraction function $\phi: m_t \mapsto a_t \in \cA$ maps each message to a symbol from a finite alphabet $\cA$.
The activity sequence is $\sigma(\tau) = (\phi(m_1), \ldots, \phi(m_T))$.

Given a corpus $\cD = \{\tau_1, \ldots, \tau_N\}$, we construct a finite state machine $\cM = (Q, \cA, \delta, q_0, Q)$ with states $Q$, partial transition function $\delta: Q \times \cA \to Q$, and initial state $q_0$; all states are accepting.
The transition function is deterministic: each (state, activity) pair maps to at most one successor.

\begin{definition}[Replay fitness]
For sequence $\sigma = (a_1, \ldots, a_T)$, let $k$ be the number of symbols consumed by $\cM$ when replaying $\sigma$ from $q_0$ (steps where $\delta(q, a_t)$ is defined). The replay fitness is $\mathrm{fit}(\sigma, \cM) = k / T$. Corpus fitness is
\begin{equation}\label{eq:corpus_fitness}
\mathrm{Fit}(\cD, \cM) = \frac{1}{|\cD|} \sum_{\tau \in \cD} \mathrm{fit}(\sigma(\tau), \cM).
\end{equation}
\end{definition}

\subsection{Activity Extraction}\label{sec:method:extraction}

Agent traces come in heterogeneous formats.
We apply three extraction rules in priority: (1)~tool calls: if a message contains a \texttt{tool\_call} field, the activity is the function name; (2)~action tags: if the content contains \texttt{[ACTION] description}, the activity is the action label; (3)~command extraction: for agents using code blocks, we extract the first command token and map it to a semantic category.
If no rule matches, the activity defaults to \texttt{role:content\_type} (e.g., \texttt{assistant:text}).
The extraction is deterministic and format-specific; Appendix~\ref{app:datasets} details it for each dataset.

\paragraph{Robustness to the extraction choice.}
The downstream pipeline is robust to this choice: across extraction granularities, replay fitness stays $\geq\!0.999$ on every dataset, and failure-prediction AUROC is stable between meaningful levels: on all twelve datasets the default (role-type) matches or exceeds the coarser role-only level on ten, moving more only where role-only collapses to a $\leq$3-symbol alphabet (Appendix~\ref{app:granularity}), so the rules above are one valid setting rather than the only one.

\subsection{FSM Construction}\label{sec:method:construction}

Given activity sequences $\{\sigma(\tau_i)\}_{i=1}^N$, construction proceeds in three steps (\hyperref[alg:fsm]{Algorithm~\ref*{alg:fsm}}, \hyperref[app:algorithm]{Appendix~\ref*{app:algorithm}}).

\paragraph{Step 1: Prefix tree.}
Insert all activity sequences into a trie.
Each unique prefix is a distinct state.
The prefix tree has perfect training fitness but $O(\sum_i T_i)$ states.

\paragraph{Step 2: Merge by last activity.}
We merge all trie states reached by the same activity into one. Writing $\kappa(q)$ for the activity on the edge into $q$ (and $\kappa(q_\varepsilon) = \mathrm{init}$ for the root), we merge states by the last-activity right congruence
\begin{equation}\label{eq:struct_equiv}
q \sim q' \iff \kappa(q) = \kappa(q'),
\end{equation}
which has $|\cA| + 1$ classes. Adding up the trie's traversal counts over each class pair gives the directly-follows automaton in a single pass; cycles appear wherever an activity recurs.

\paragraph{Step 3: Rare-transition filtering.}
We drop a merged transition observed exactly once in the corpus unless it is its source state's only continuation. This removes one-off digressions but never a state, and it is the only step that can cost fitness, and the replay-fitness columns of Table~\ref{tab:main} measure that cost directly.

Section~\ref{sec:method:theory} proves this construction preserves fitness and yields a compact directly-follows automaton; tool-use patterns (search--edit--execute) collapse to loops and the state count tracks the number of distinct activities.
Figure~\ref{fig:example_fsm} (Appendix~\ref{app:algorithm}) shows the construction at role-level granularity for a customer service agent (6 states, 5 activities).

\subsection{Construction and Convergence Guarantees}\label{sec:method:theory}

We characterize the correctness and optimality of the extracted FSM.

\begin{theorem}[Fitness preservation]\label{prop:fitness} The last-activity merge preserves training fitness: if trace $\sigma$ is accepted by the prefix tree, it is accepted by the merged FSM of Step~2.
\end{theorem}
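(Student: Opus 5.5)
The plan is to show that the quotient map $\pi$, which sends each trie state $q$ to its class $[q]$ under the last-activity congruence of \eqref{eq:struct_equiv}, is a homomorphism of partial automata. Replaying $\sigma$ in the trie then projects step by step onto a replay in the merged FSM. Concretely, the merged automaton has state set $\{[q]\}$, which we identify with $\cA\cup\{\mathrm{init}\}$, and initial state $[q_\varepsilon]=\mathrm{init}$. Its transitions are defined as follows: $\delta([q],a)$ is defined whenever some trie state $q'\sim q$ has an outgoing $a$-edge $q'\xrightarrow{a}q''$, and in that case we set $\delta([q],a)=[q'']$.

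The first step is to check that this $\delta$ is well defined and deterministic. This is the only point that needs care, and it follows directly from the form of the congruence. Every trie edge labelled $a$ enters a state whose last activity is $a$, so any such target satisfies $\kappa(q'')=a$. Hence all $a$-successors of members of a class fall into the single class labelled $a$, and $\delta([q],a)=a$ regardless of which representative is used. This is exactly the right-congruence property: $q\sim q'$ and both having $a$-successors implies those successors are equivalent.

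The second step is an induction on prefix length. Let $\sigma=(a_1,\dots,a_T)$ be accepted by the prefix tree, that is, $\sigma$ is fully consumed along trie states $q_\varepsilon=p_0\xrightarrow{a_1}p_1\cdots\xrightarrow{a_T}p_T$. We claim that for every $t$ the merged FSM, started at $\mathrm{init}$, consumes $a_1,\dots,a_t$ and ends in $[p_t]$. The base case $t=0$ holds because the replay starts at $[p_0]=\mathrm{init}$. For the inductive step, assume the FSM is in $[p_{t-1}]$ after $t-1$ symbols. The trie edge $p_{t-1}\xrightarrow{a_t}p_t$ witnesses, by definition, that $\delta([p_{t-1}],a_t)$ is defined, and by the first step it equals $[p_t]$. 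Taking $t=T$, all $T$ symbols are consumed, so $\mathrm{fit}(\sigma,\cM)=1$ for the Step~2 automaton. Averaging over the corpus via \eqref{eq:corpus_fitness}, the training fitness of the merged FSM equals that of the trie.

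I do not expect a genuine obstacle here. The result amounts to the standard fact that quotienting by a right congruence can only enlarge the accepted language. The one subtlety worth stating explicitly is that the claim concerns Step~2 only. Step~3's rare-transition filtering can remove an edge used by a training trace, which is why the theorem is phrased for the merged FSM before filtering and why that filtering cost is measured empirically in the paper.
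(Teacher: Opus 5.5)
Your proposal is correct and follows essentially the same route as the paper's proof. Both arguments show that each trie edge $q_i \xrightarrow{a_{i+1}} q_{i+1}$ yields the quotient edge $[q_i] \xrightarrow{a_{i+1}} [q_{i+1}]$, because the target class is fixed by the incoming activity $a_{i+1}$, and then chain these edges along $\sigma$; your induction on prefix length formalizes that chain, your explicit well-definedness check is something the paper defers to the proof of Theorem~\ref{prop:minimal}, and your Step~3 caveat matches the paper's.
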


\begin{proof}[Proof sketch] Merging only adds out-edges: each class carries the union of its members' transitions, so every trie edge survives in the quotient.
For any trace $\sigma = (a_1, \ldots, a_T)$ accepted by the prefix tree with state sequence $q_0, q_1, \ldots, q_T$, the merged FSM follows the quotient sequence $[q_0], [q_1], \ldots, [q_T]$, since $\delta(q_i, a_{i+1}) = q_{i+1}$ implies $\delta([q_i], a_{i+1}) = [q_{i+1}]$ by the congruence (~\ref{eq:struct_equiv}); the full trace is accepted. Step~3 filtering is the only source of fitness loss, and the replay-fitness columns measure it directly (full proof in Appendix~\ref{app:proofs}). \end{proof}

\begin{theorem}[Determinism and compactness]\label{prop:minimal} Merging the prefix tree by the last-activity right congruence yields a deterministic FSM with $|Q| = |\cA| + 1$ states---one per activity plus the initial state---whose transitions are the directly-follows pairs retained from the corpus: the construction is a deterministic function of the corpus, so re-extraction from the same data is exact.
\end{theorem}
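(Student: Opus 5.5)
We treat the three parts of the statement in turn: the quotient is a deterministic automaton, its state set is $\{\mathrm{init}\}\cup\cA$, and its edge set is exactly the directly-follows relation of the corpus. Everything is read off from the definition of $\kappa$ and the congruence~(\ref{eq:struct_equiv}).

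\textbf{Step 1: the quotient is well defined and deterministic.} We first check that $\sim$ is a right congruence on the trie. Take $q \sim q'$ such that both have an out-edge labelled $a$. Their successors both satisfy $\kappa = a$, so they are equivalent. This is the key (and almost trivial) observation. The class reached from $[q]$ on $a$ is always the class $\{p : \kappa(p)=a\}$, whichever member $q$ we start from. So we can define $\delta([q],a) = [\,\text{the } a\text{-class}\,]$ whenever some member of $[q]$ has an $a$-edge. This makes $\delta$ a partial function, which is exactly determinism. Note that determinism comes for free because the target class depends only on the label $a$. We do not need any property of the source classes.

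\textbf{Step 2: count the states.} The classes are in bijection with the set of values of $\kappa$ that are actually realised on trie nodes. The root gives $\mathrm{init}$, and every activity occurring in some trace gives its own class. Taking $\cA$ to be the set of activities appearing in the corpus (the extracted alphabet), we get $|Q| = |\cA|+1$. The main subtlety is here. If $\cA$ is a fixed external alphabet, some symbol may never occur, and then the correct statement is $|Q| \le |\cA|+1$. We will state that convention explicitly. We will also note that Step~3 filtering never deletes a state, as stated in \S\ref{sec:method:construction}, so the count survives filtering.

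\textbf{Step 3: transitions are the directly-follows pairs.} Suppose $[b]\xrightarrow{a}[a]$ is an edge of the quotient. Then some trie node $q$ with $\kappa(q)=b$ has an $a$-child. This happens iff some trace contains the consecutive pair $(b,a)$, or, for $b=\mathrm{init}$, iff some trace starts with $a$. Hence the edges coincide with the directly-follows graph augmented by start edges. The transition counts summed over class pairs equal the directly-follows frequencies, and the edges retained after Step~3 are the retained pairs. Finally, each of the three steps (trie insertion, $\kappa$-merge, and the deterministic count-based filter) is a function of the multiset of activity sequences alone. It does not depend on insertion order or on any hyperparameter. So re-running the construction on the same corpus yields an identical automaton. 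Fitness of the unfiltered quotient is already covered by Theorem~\ref{prop:fitness}, so it need not be reproved here.
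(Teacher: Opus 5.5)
Your proposal is correct and follows the paper's proof essentially step for step. The classes are indexed by the values of $\kappa$; determinism holds because the target of an $a$-edge is the $a$-class whatever the source; the edges are the observed (start-augmented) bigrams thinned by a Step~3 filter that removes no states; and uniqueness holds because everything is a function of the corpus's bigram counts. Your caveat that $|Q| = |\cA|+1$ needs $\cA$ to be the alphabet actually realised in the training corpus (otherwise only $|Q| \le |\cA|+1$) makes explicit an assumption the paper leaves implicit when it asserts the classes are ``exactly the $|\cA|+1$ values of $\kappa$''.
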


\begin{proof}[Proof sketch] The merge assigns each trie state to the class of its incoming activity, giving $|\cA|+1$ classes; transitions are the retained directly-follows pairs, deduplicated, so each $(q, a)$ has at most one target and the FSM is deterministic. Step~3 removes transitions but never states, so determinism and the state count are unaffected: the class map and the transition set depend only on the multiset of observed (activity, next-activity) pairs, hence are invariant to trace order and sampling, so the output is unique for a fixed corpus. (Full proof in Appendix~\ref{app:proofs}.) \end{proof}

We recover the directly-follows automaton of the observed traces, not the generating automaton, which is impossible to identify from positive examples alone~\citep{gold1967language}.

Empirically, all twelve datasets yield 7--43 states (Tables~\ref{tab:main},~\ref{tab:full_baselines}), because agent activity sequences exhibit strong sequential regularity: conditioning on the previous symbol reduces entropy by 51--80\% (Appendix~\ref{app:entropy}).
Compact state spaces aggregate sufficient observations per state for reliable probability estimation, unlike RPNI's $10^3$--$10^5$ states.

\begin{remark}[Complexity] Prefix tree construction is $O(\sum_i T_i)$.
Structural merging computes a partition refinement in $O(|Q_\cP| \cdot |\cA|)$ time, where $|Q_\cP|$ is the number of prefix tree states.
The total runtime is linear in the corpus size for bounded $|\cA|$.
In practice, all twelve datasets complete in under one second on a single CPU core.
\end{remark}

\begin{proposition}[Convergence guarantee]\label{prop:convergence} Let $\cM^*$ be the population directly-follows automaton, with $r$ transitions:
if traces are drawn i.i.d.\ and each transition appears in a trace with probability at least $p_{\min}$, then for any $\delta_\mathrm{f} > 0$, the extracted FSM equals $\cM^*$ with probability $\geq 1 - \delta_\mathrm{f}$ after $N \geq \frac{1}{p_{\min}} \ln(r/\delta_\mathrm{f})$ traces.
\end{proposition}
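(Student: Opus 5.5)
The plan is a coupon-collector style union bound over the $r$ transitions of $\cM^*$, combined with Theorem~\ref{prop:minimal}. By that theorem, the extracted FSM is a deterministic function of the set of observed directly-follows pairs. Its state set is always $\{q_0\}\cup\cA$, so it cannot differ from $\cM^*$ in its states. It equals $\cM^*$ exactly when its transition set equals that of $\cM^*$. It therefore suffices to show that with probability $\geq 1-\delta_\mathrm{f}$ every one of the $r$ population transitions is observed in the sample and retained.

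First I would record the easy inclusion. Any transition observed in a sampled trace has positive probability, so it lies in $\cM^*$, and the extracted FSM can never contain spurious transitions. Here I take $\cM^*$ to consist of the directly-follows pairs with positive occurrence probability, with $\cA$ the population alphabet. Next, fix a transition $e$ of $\cM^*$. Since traces are i.i.d.\ and each trace contains $e$ with probability at least $p_{\min}$,
\begin{equation*}
\Pr[e \text{ unseen in } N \text{ traces}] \le (1-p_{\min})^N \le e^{-p_{\min} N}.
\end{equation*}
A union bound over the $r$ transitions gives failure probability at most $r e^{-p_{\min}N}$. This is at most $\delta_\mathrm{f}$ once $N \ge \frac{1}{p_{\min}}\ln(r/\delta_\mathrm{f})$. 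On the complementary event, the Step~2 automaton has exactly the transition set of $\cM^*$, and by Theorem~\ref{prop:minimal} it coincides with $\cM^*$.

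The main obstacle is Step~3 filtering. A transition seen exactly once, and not the sole continuation of its source state, is dropped, so ``observed at least once'' does not imply ``retained''. I would first try to sidestep this by stating the guarantee for the Step~2 automaton. Step~3 is the only lossy step by Theorem~\ref{prop:fitness}, and the paper already measures its cost separately. If the filtered FSM must be covered as well, I would require each transition to appear in at least two traces. The per-transition failure probability then becomes $(1-p)^N + Np(1-p)^{N-1} \le (1+Np_{\min})e^{-p_{\min}(N-1)}$. This only inflates the sample bound by a logarithmic factor, and that factor could be absorbed by restating the constant. I would flag this as a caveat rather than claim the stated bound verbatim for the filtered automaton.
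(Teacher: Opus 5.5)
Your argument is exactly the paper's: the per-transition bound $(1-p_{\min})^N \le e^{-p_{\min}N}$, a union bound over the $r$ transitions of $\cM^*$, and the observation that once every transition is seen, the last-activity quotient of Theorem~\ref{prop:minimal} reproduces $\cM^*$. Your Step~3 caveat is well founded, because the paper's proof silently establishes the claim only for the unfiltered Step~2 automaton; restricting the statement to that automaton, or requiring two sightings per transition at the cost of a lower-order additive term in $N$, is the right repair.
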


A union bound over $r$ transitions gives the failure-probability chain
\begin{equation}\label{eq:union_bound}
\begin{aligned}
\Pr[\text{some transition unobserved}] \;&\leq\; r(1-p_{\min})^N \\
\;&\leq\; r\,e^{-Np_{\min}} \;\leq\; \delta_\mathrm{f},
\end{aligned}
\end{equation}
which yields the bound (full proof in Appendix~\ref{app:proofs}).
The i.i.d.\ assumption is approximate: in practice, agent traces come from iterative deployment on fixed task distributions.
The bound stays useful because the requirement is weak ($N \leq 690$ for SWE-agent's $k\!=\!51$ transitions at $p_{\min}\!\approx\!0.01$, $\delta\!=\!0.05$), and empirical convergence at 5--15\% of training data (Figure~\ref{fig:convergence}) suggests it is conservative even under mild distributional shift.

\subsection{Prediction via FSM State Conditioning}\label{sec:method:prediction}

Given the current FSM state $q_t = \delta^*(q_0, a_1 \ldots a_{t-1})$, we estimate $P(a_t \mid q_t)$ from transition counts:
\begin{equation}\label{eq:fsm_predict}
\hat{P}(a \mid q) = \frac{C(q, a) + \alpha}{\sum_{a' \in \cA} C(q, a') + \alpha |\cA|}
\end{equation}
where $C(q, a)$ counts how often $a$ follows state $q$ in training data and $\alpha$ is a smoothing parameter.
Higher-order context can be incorporated via prediction by partial matching (PPM) with absolute discounting, blending FSM predictions across context depths (Appendix~\ref{app:prediction}).
We evaluate predictive quality via cross-entropy:
\begin{equation}\label{eq:ce}
\mathrm{CE} = -\frac{1}{T} \sum_{t=1}^{T} \log_2 \hat{P}(a_t \mid \text{context}).
\end{equation}

Because the FSM has only $|Q|=O(|\cA|)$ states, each state aggregates many transitions, giving a $O(1/\sqrt{n_q})$ total-variation concentration bound for $\hat{P}(\cdot\mid q)$ (Proposition~\ref{prop:estimator}, Appendix~\ref{app:proofs}).
RPNI's $|Q_{\text{RPNI}}|\!\gg\!|\cA|$ partitions the same observations into sparsely visited states, which degrades both the estimator and any anomaly signal derived from it.
Under success/failure FSM-structured mixtures, the per-trace surprise difference $\mathrm{CE}^{-}(\tau)-\mathrm{CE}^{+}(\tau)$ is a Neyman--Pearson-optimal statistic up to $O(1/\sqrt{n_q})$ error (Corollary~\ref{cor:surprise}, Appendix~\ref{app:proofs}).
The same compactness gives a sub-linear $O(\sqrt{T\log|Q|})$ regret bound for an online thresholded log-likelihood-ratio monitor (Proposition~\ref{prop:regret}, Appendix~\ref{app:proofs}), matching the empirical $F_1\!=\!0.904$ early-stopping monitor.

\subsection{Evaluation Metrics}\label{sec:method:metrics}

Beyond replay fitness and cross-entropy, we evaluate along three axes.
Precision: the fraction of invalid traces the FSM rejects.
We generate random traces (uniform over $\cA^L$) and permuted traces (shuffled real sequences); low acceptance shows meaningful sequential constraints.
Compression: $|Q_{\text{baseline}}| / |Q_{\text{ours}}|$, measuring compactness against baseline automata.
Stability: variance in FSM structure across random train/test splits.

\section{Results}\label{sec:experiments}

\subsection{Setup}\label{sec:exp:setup}

We evaluate on twelve datasets across eight agent domains (Table~\ref{tab:datasets}, Appendix~\ref{app:datasets}), with alphabets of 6--42 symbols and 80/20 train/test splits.
Nine datasets have outcome labels and are used for failure prediction (eight real LLM-trace datasets and SWE-smith, the lone synthetic dataset); three contribute compression and next-step prediction results only because they lack outcome labels (Appendix~\ref{app:unlabeled}).
Baselines: RPNI, EDSM, Alergia, k-Tails (automata learning via AALpy~\citep{muskardin2022aalpy}); HMM; Alpha, Inductive, Heuristic Miners (process mining via PM4Py~\citep{berti2019pm4py}); AWM (workflow extraction).
All receive identical training sequences with positive examples only.

\subsection{Main Results}\label{sec:exp:main}

Our FSM (7--43 states) achieves 15--3{,}036$\times$ compression over RPNI at $\geq$0.997 test fitness on all datasets (Table~\ref{tab:main}).
Among positive-only methods, Alergia is the strongest competitor: it matches fitness but uses 1.0--6.0$\times$ more states. k-Tails~\citep{biermann1972ktails}, the classic software-engineering baseline, produces 1.4--10$\times$ more states than ours at $k{=}1$ with lower fitness (0.54--1.00), and state counts explode at $k{\geq}2$ (up to 1{,}085 states or timeout; Table~\ref{tab:ktails_app}).
HMM matches state count but is non-interpretable; EDSM collapses to 1 state without negatives (Appendix~\ref{app:baselines}).
Compression scales with dataset complexity: 15$\times$ on WebArena to 2{,}500$\times$ on AgentNet, where RPNI exceeds its 120s budget; including the unlabeled datasets (Appendix~\ref{app:unlabeled}) it reaches 3{,}036$\times$ on GUI-Odyssey.

\begin{table}[t]
\caption{\textbf{FSM extraction results} on eight labeled real-trace datasets (excluding SWE-smith synthetic). $|Q|$: states. Fit: test replay fitness. $^\dagger$RPNI timeout at 120\,s. Full baselines in Table~\ref{tab:full_baselines}; SWE-smith and the unlabeled datasets appear in Appendix~\ref{app:unlabeled}.}
\label{tab:main}
\centering
\scriptsize
\tablestyle{2pt}{0.9}
\begin{tabular}{l rr rr rr r}
\toprule
& \multicolumn{2}{c}{\cellcolor{cOursBg}Ours} & \multicolumn{2}{c}{RPNI} & \multicolumn{2}{c}{Alergia} & \\
\cmidrule(lr){2-3} \cmidrule(lr){4-5} \cmidrule(lr){6-7}
Dataset & \cellcolor{cOursBg}$|Q|$ & \cellcolor{cOursBg}Fit & $|Q|$ & Fit & $|Q|$ & Fit & Compr. \\
\midrule
SWE-agent & \cellcolor{cOursBg}\textbf{25} & \cellcolor{cOursBg}0.999 & 59{,}510$^\dagger$ & 0.646 & 35 & 0.999 & 2{,}380$\times$ \\
WebArena & \cellcolor{cOursBg}\textbf{25} & \cellcolor{cOursBg}1.000 & 382 & 1.000 & 149 & 1.000 & 15$\times$ \\
AgentNet & \cellcolor{cOursBg}\textbf{25} & \cellcolor{cOursBg}1.000 & 62{,}495$^\dagger$ & 0.742 & 45 & 1.000 & 2{,}500$\times$ \\
\addlinespace[2pt]
tau2-bench (air) & \cellcolor{cOursBg}\textbf{18} & \cellcolor{cOursBg}1.000 & 6{,}506$^\dagger$ & 0.844 & 23 & 0.999 & 361$\times$ \\
tau2-bench (ret) & \cellcolor{cOursBg}\textbf{19} & \cellcolor{cOursBg}1.000 & 14{,}249$^\dagger$ & 0.837 & 25 & 1.000 & 750$\times$ \\
tau2-bench (tel) & \cellcolor{cOursBg}\textbf{43} & \cellcolor{cOursBg}1.000 & 63{,}897$^\dagger$ & 0.491 & 75 & 0.999 & 1{,}486$\times$ \\
ATBench          & \cellcolor{cOursBg}\textbf{15} & \cellcolor{cOursBg}1.000 &    899$^\dagger$ & 0.984 & 15 & 1.000 &    60$\times$ \\
OSWorld          & \cellcolor{cOursBg}\textbf{27} & \cellcolor{cOursBg}0.997 & 38{,}232$^\dagger$ & 0.706 & 31 & 0.999 & 1{,}416$\times$ \\
\bottomrule
\end{tabular}
\end{table}

\begin{table}[t]
\caption{\textbf{Next-step prediction} cross-entropy (bits, $\downarrow$). 5$\times$5-fold CV across all datasets. Best per dataset in \textbf{bold}. The ``FSM'' columns use the FSM-state context format (\emph{ASG-minimal}) selected on validation in Section~\ref{sec:exp:downstream}.}
\label{tab:prediction_full}
\centering
\scriptsize
\tablestyle{2pt}{0.9}
\begin{tabular}{l ccccc c}
\toprule
Method & SWE-sm & SWE-ag & W\&W & M2W & ATB & Avg \\
\midrule
Uniform & 3.170 & 4.585 & 3.000 & 2.807 & 3.807 & 3.474 \\
Unigram & 2.461 & 2.850 & 2.229 & 1.856 & 2.799 & 2.439 \\
RPNI & 3.851 & 4.284 & 2.855 & 3.549 & 2.448 & 3.397 \\
\midrule
Our FSM & 0.638 & 1.071 & 0.963 & 0.756 & 1.243 & 0.934 \\
FSM-PPM-AD & 0.463 & 0.741 & 0.624 & 0.782 & 1.309 & 0.784 \\
Ens(D0/3/5/7) & 0.465 & 0.700 & 0.621 & \hlbest{0.736} & 1.262 & 0.757 \\
\midrule
NGram-LR-K7 & 0.464 & 0.687 & 0.610 & 0.796 & 1.181 & 0.748 \\
ESN-H64 & 0.476 & 0.691 & 0.580 & 0.746 & 1.184 & 0.735 \\
\midrule
FSM-LR-K7 & \hlbest{0.460} & \hlbest{0.686} & 0.582 & 0.755 & \hlbest{1.162} & \hlbest{0.729} \\
FSM-ESN-H64 & 0.474 & 0.700 & \hlbest{0.546} & 0.753 & 1.185 & 0.732 \\
\bottomrule
\end{tabular}
\end{table}

Fitness converges rapidly: on all datasets, $\geq$0.99 fitness is reached using 5--15\% of training data (Figure~\ref{fig:convergence}).
On SWE-agent (2{,}000 traces), fitness reaches 0.99 at 240 traces (15\%), though the state space continues growing to 25 as rare command patterns appear.
Because the construction is deterministic and hyperparameter-free (Theorem~\ref{prop:minimal}), a fixed corpus yields a unique FSM; across random splits our state counts stay within a few states of the full-data value (rare commands, as above, account for the residual), whereas RPNI state counts vary by 2--10\% (hundreds to thousands of states; Appendix~\ref{app:stability_detail}).

Our FSM rejects 100\% of random traces and $\geq$99.9\% of permuted traces on all eight labeled real-trace datasets, while RPNI accepts 75\% of permuted traces on WebArena (Table~\ref{tab:precision}, Appendix~\ref{app:precision}).
Even plausible single-symbol mutations (substitution, insertion, adjacent swap) are rejected at 77--100\% across datasets, because the FSM encodes turn-taking and tool-invocation constraints learned from data (Table~\ref{tab:adversarial}).
Process mining baselines achieve precision 0.00--0.80 (Appendix~\ref{app:pm4py}).
State compression and cross-dataset fitness are visualized in Appendix~\ref{app:compression_fig}.

\paragraph{Next-step prediction.}
Beyond acceptance, we evaluate whether the FSM captures structure for prediction.
At each step $t$, a predictor estimates $P(a_t \mid \text{context})$; we report cross-entropy (CE, bits) via 5$\times$5-fold CV.
Without any learning, our FSM (order-1 Markov) achieves 0.93 bits avg CE across the five-dataset table, a 62\% reduction from the Unigram baseline (2.44 bits; Table~\ref{tab:prediction_full}).
This single step of conditioning on FSM state rather than activity frequencies accounts for 83--99\% of the total CE improvement from Uniform to the best method on each dataset.

In a controlled ablation (absolute discounting, depth 5), FSM state conditioning provides +0.155 bits mean / +0.136 bits median ($21\%$) over raw context alone (FSM-AD: 0.580 vs.\ Pure-AD: 0.735 CE), positive on all 6 datasets, ranging from $+0.016$ on SWE-agent to $+0.364$ on Mind2Web.
This controlled gap is $8\times$ larger than the +0.019 from adding FSM state to logistic regression (FSM-LR-K7: 0.729 vs.\ NGram-LR-K7: 0.748), because learned models partially recover FSM-like state from raw context.
The improvement is consistent: FSM state conditioning helps every prediction method on every dataset.
Combining FSM state with learned models yields 0.73 bits avg CE (FSM-LR-K7), the best across all methods (Table~\ref{tab:prediction_full}); FSM-LR-K7 serves as our learned-sequence baseline, and even high-capacity MLP/GRU/Transformer classifiers see lift from FSM features on 20 of 21 dataset-architecture pairs in failure prediction (Appendix~\ref{app:neural_probe}). The FSM is thus a structural primitive that benefits rather than competes with learned sequence models.
RPNI overfits catastrophically: 3.40 bits avg, worse than Unigram (2.44), because its 382--59{,}510 states observe too few transitions each (Figure~\ref{fig:nextstep}; Appendix~\ref{app:prediction}).

\begin{figure*}[t]
\centering
\includegraphics[width=\textwidth]{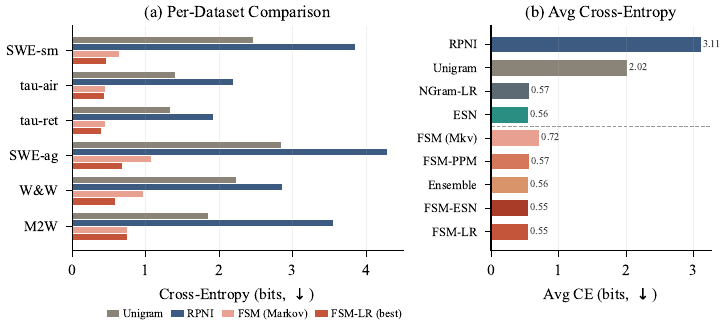}
\caption{\textbf{Next-step prediction} cross-entropy (bits, $\downarrow$). (a)~Per-dataset: FSM-conditioned methods (red) achieve 3--5$\times$ lower CE than baselines. (b)~Average ranking: FSM-conditioned variants outperform their non-FSM counterparts on all four datasets. RPNI overfits worse than Uniform due to sparse transitions across thousands of states.}
\label{fig:nextstep}
\end{figure*}

\paragraph{Context format ablation (why minimal wins).}
The gain over AWM is not automatic: four natural FSM-context formats produce widely different top-1 accuracy on tau2-bench retail (N=1,095, Table~\ref{tab:ctx_ablation}).
The verbose ``state + transitions + full structure'' format (ASG-full, 52.2\%) underperforms AWM (52.9\%) because listing every state and transition drowns the next-step signal; adding multi-step continuations only (ASG++, 49.2\%) is worse, as does restricting to success-only traces (ASG-success, 50.3\%).
The minimal format used in Table~\ref{tab:fsm_context_llm} (natural-language next-action probabilities plus top-15 multi-step continuations from the current state, with no ``current state / full structure'' headers) wins at 65.1\% (+12.9pp over AWM and +12.9pp over ASG-full).
AWM here is its published default format from \citet{wang2024agent_workflow_memory}; identifying the right minimal context for a structural model is part of the contribution, in the same way that AWM's linear-workflow format is part of its. Tau2-bench retail is used for format selection and also appears in Table~\ref{tab:fsm_context_llm}; the format generalises to held-out data: mean FSM advantage over AWM is $+12.2$pp on the in-distribution tau2-bench retail row vs.\ $+13.1$pp averaged over the 7 strictly held-out datasets, so the tau2-bench retail row is, if anything, slightly below the held-out average rather than inflated.

\begin{table}[t]
\caption{\textbf{FSM context format ablation} (tau2-bench retail, N=1,095, gpt-4.1-mini top-1 \%).}
\label{tab:ctx_ablation}
\centering
\scriptsize
\tablestyle{2pt}{0.9}
\begin{tabular}{l r}
\toprule
Context format & Top-1 (\%) \\
\midrule
No memory (trace prefix only) & 27.6 \\
AWM (linear success workflows)~\citep{wang2024agent_workflow_memory} & 52.9 \\
ASG-full (state + transitions + full graph) & 52.2 \\
ASG++ (ASG-full + multi-step continuations) & 49.2 \\
ASG-success (success-only ASG-full) & 50.3 \\
\textbf{ASG (minimal: probabilities + top-15 continuations)} & \hlbest{65.1} \\
\bottomrule
\end{tabular}
\end{table}

A representative prompt comparison at FSM state \texttt{get\_order\_details} (tau2-bench retail) is in Appendix~\ref{app:fsm_context} (Figure~\ref{fig:ctx_sidebyside}): AWM presents a long enumeration of success-only workflows that the LLM must align to the trace prefix, while the FSM-minimal context surfaces the dominant next-action and a few continuations, making the next-step decision visible at a glance.

\paragraph{Judge robustness.}
The advantage is not specific to the original judge: averaging \texttt{gpt-4.1-mini} and \texttt{gpt-4o-mini} on the most contested datasets (ATBench, tau2-bench airline) keeps ASG ahead of AWM by a mean of \textbf{8.7pp} (range +3.6pp to +13.9pp), with FSM winning under both judges on every dataset tested.

\paragraph{FSM as context for LLM agents.}
We test whether providing the FSM as context improves an LLM's next-action prediction, comparing against Agent Workflow Memory (AWM)~\citep{wang2024agent_workflow_memory}.
Transition counts and multi-step continuations are computed on training data only; validation traces are replayed through the FSM to obtain the current state, and the LLM judge (gpt-4.1-mini) is prompted with either AWM's linear workflows or the FSM's single-step transition probabilities plus top-15 multi-step continuations from the current state.
Under LLM-judged top-1 evaluation, the FSM beats AWM on all eight datasets (Table~\ref{tab:fsm_context_llm}), with gains ranging from +0.8pp (tau2-bench airline) to +25.3pp (SWE-smith).
In parallel statistical evaluation on the full validation sets, the FSM also achieves higher top-1 accuracy than AWM on every dataset (e.g., SWE-smith: 100\% vs.\ 34.5\%; tau2-telecom: 61.8\% vs.\ 19.8\%; Table~\ref{tab:fsm_context_stat}).
AWM's coverage limitation (it extracts workflows only from successful traces) explains the gap on low-success-rate datasets.

\begin{table}[t]
\caption{\textbf{FSM vs.\ AWM as context for an LLM next-action predictor} (gpt-4.1-mini, top-1 \%). 6/8 gaps statsig at $p\!<\!10^{-8}$. $^{\dagger}$tau2-retail used for FSM-context-format selection (\S\ref{sec:exp:main}); other 7 held out.}
\label{tab:fsm_context_llm}
\centering
\small
\tablestyle{3pt}{1.0}
\begin{tabular}{l r rr r}
\toprule
Dataset & $N$ & AWM & FSM & $\Delta$ \\
\midrule
WebArena         & 4{,}800 & 65.5 & \hlbest{81.2} & \textbf{+15.7} \\
SWE-smith        &   300   & 74.7 & \hlbest{100.0} & \textbf{+25.3} \\
SWE-agent        & 1{,}200 & 67.7 & \hlbest{70.5} & \textbf{+2.8} \\
tau2-bench (tel) & 1{,}095 & 28.5 & \hlbest{45.6} & \textbf{+17.1} \\
tau2-bench (ret)$^{\dagger}$ & 1{,}095 & 52.9 & \hlbest{65.1} & \textbf{+12.2} \\
tau2-bench (air) &   480   & 56.5 & \hlbest{57.3} & \textbf{+0.8} \\
ATBench          &   600   & 47.8 & \hlbest{62.5} & \textbf{+14.7} \\
OSWorld          & 1{,}286 & 55.0 & \hlbest{70.7} & \textbf{+15.7} \\
\bottomrule
\end{tabular}
\end{table}

\paragraph{Out-of-distribution detection.}
Cross-dataset replay produces low fitness on structurally distinct dataset pairs (AUROC 1.000); the schema-sharing tau2-bench airline$\leftrightarrow$retail pair is the exception, replaying near-1.0. Within-alphabet perturbation yields AUROC $\geq$0.917 (Appendix~\ref{app:ood}).
FSMs also transfer across models: a single FSM built from all four LLMs' traces achieves \textbf{1.000} fitness on each model individually (per-model FSMs share a near-identical state vocabulary and a dominant 80--92\% transition backbone, indicating largely model-invariant topology), and per-model failure-prediction features transfer at \textbf{0.786} mean cross-AUROC vs.\ \textbf{0.877} self across all three tau2-bench suites (36 off-diagonal pairs; per-suite breakdown in Appendix~\ref{app:crossmodel}).

\paragraph{Runtime.}
Our method constructs FSMs in 1--110\,ms across all datasets, compared to 7{,}000--36{,}000\,ms for RPNI (328--10{,}611$\times$ speedup).
Per-trace replay completes in 0.003--0.015\,ms, enabling real-time monitoring of production agent systems (Appendix~\ref{app:runtime}).

\subsection{Failure Prediction from FSM Features}\label{sec:exp:downstream}

We predict task success/failure on nine labeled datasets, including ATBench~\citep{ai45research2026atbench}, a trajectory-level safety benchmark with balanced safe/unsafe outcomes.
We replay each trace through the FSM and extract per-state features (visit frequency, message length, error rate, temporal) alongside five FSM cross-entropy anomaly features (trace CE, max surprise, half-to-half drift, minimum transition probability, high-surprise rate); a single gradient-boosted classifier (200 trees, depth 3, class-weighted) with L1 selection produces held-out AUROC on a fixed 80/20 split (per-dataset numbers in Figure~\ref{fig:failure}a, with raw values in Appendix~\ref{app:failure_prediction}, Table~\ref{tab:failure}).

Raw fitness is uninformative (AUROC $\approx$ 0.50); FSM cross-entropy anomaly features reach up to 0.941 held-out AUROC (tau2-bench telecom, 43 states), with larger FSMs predicting better (telecom 0.941, WebArena 0.903, AgentNet 0.890, ATBench 0.894 vs.\ SWE-agent 0.799; Figure~\ref{fig:failure}a).
Failure traces show higher surprise under the FSM's transition distribution; on SWE-agent, reaching \texttt{submit} is the strongest predictor (94.8\% of successes vs.\ 55.7\% of failures).
On ATBench, the only safety-labeled benchmark, AUROC reaches 0.894 with the lowest CV variance in the suite (0.864 ± 0.024).
Across all eight real-trace datasets the 5-fold $\times$ 10-repeat CV std stays in 0.012--0.031 (Appendix~\ref{app:failure_prediction}), so the held-out AUROCs are not single-split artefacts.

\begin{figure*}[!t]
\centering
\includegraphics[width=0.78\linewidth]{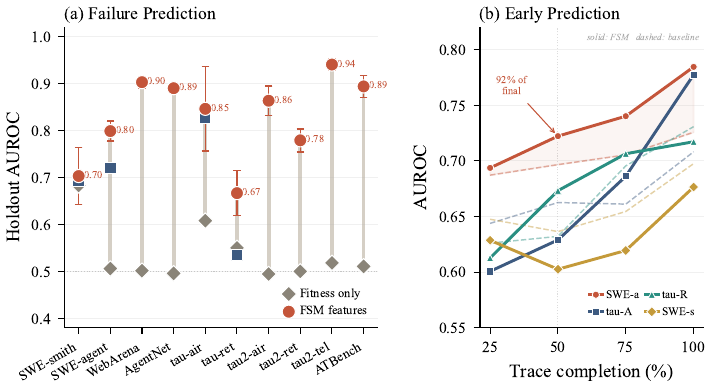}
\caption{\textbf{Failure prediction.} (a)~AUROC: FSM features (red) vs.\ raw trace statistics (blue) vs.\ fitness alone (gray). FSM features outperform raw features on SWE-agent (+7.9pp). (b)~Early prediction: FSM features at 50\% completion achieve 92\% of final AUROC on SWE-agent. Solid: FSM; dashed: baseline.}
\label{fig:failure}
\end{figure*}

FSM features at 50\% completion reach 92\% of full-trace AUROC (Figure~\ref{fig:failure}b).
On SWE-agent, successes use only 9 of 25 states along a focused search--edit--submit path while failures span all 25 (Jaccard 0.206), and the signal is structural rather than a length proxy: AUROC 0.790 vs.\ 0.659 for length alone (Appendices~\ref{app:early},~\ref{app:structural_divergence},~\ref{app:length_vs_struct}).
Feature analysis surfaces interpretable failure modes: on tau2-bench telecom, per-state visit frequencies separate agents that skip diagnostic steps.
Applying the identical feature pipeline to Alergia-extracted FSMs yields lower AUROC on 8 of 9 datasets (Appendix~\ref{app:alergia_downstream}), so the gain comes from per-state observation density rather than feature engineering.
A label-aware variant (\emph{discriminative quotient}, FSM-D; Appendix~\ref{app:fsm_d}) replaces the standard last-activity merge with one that conditions on success/failure outgoing distributions, lifting AUROC over a length+entropy baseline by up to $+$0.152 on tau2-bench airline using only training-free per-state KL features.

\paragraph{Agent integration: FSM as runtime monitor.}
Our online monitor applies two rules, cycle-rate $>$0.778 and a minimum unique-state count with warm-up, and bounds its regret at $O(\sqrt{T\log|Q|})$ (Proposition~\ref{prop:regret}).
It achieves rank-AUROC \textbf{0.66} at the nearest 25\% trace checkpoint on 4/4 evaluated datasets (Appendix~\ref{app:early}), versus AUROC$=$0.5 by construction for the trivial flag-everything baseline, and triggers early-stopping at 32\% mean trace completion on SWE-agent (precision 85.9\%, recall 95.5\%, saving 68\% of remaining compute) and at 56\% on tau2-bench airline. The F1 metric is dominated by base-rate effects when the failure rate is high (SWE-agent 84.3\% failure $\Rightarrow$ flag-everything F1\,=\,0.914 vs.\ monitor F1\,=\,0.904; both agree on what to flag, but the monitor adds \emph{when}).
At a high-precision operating point (cycle-rate $>$0.957) the monitor reaches 100\% precision on SWE-agent (zero false alarms, 11.3\% recall), making it usable as a confident early-stop trigger.
The pipeline is FSM replay only (0.006\,ms/step), no ML model.
Figure~\ref{fig:monitor_traj} visualises the cycle-rate trajectory of one failing SWE-agent run alongside a successful one: the failing run enters a tight loop between two states early, while the successful run continues to visit new states; the monitor exploits this divergence.

\begin{figure*}[!t]
\centering
\includegraphics[width=0.85\linewidth]{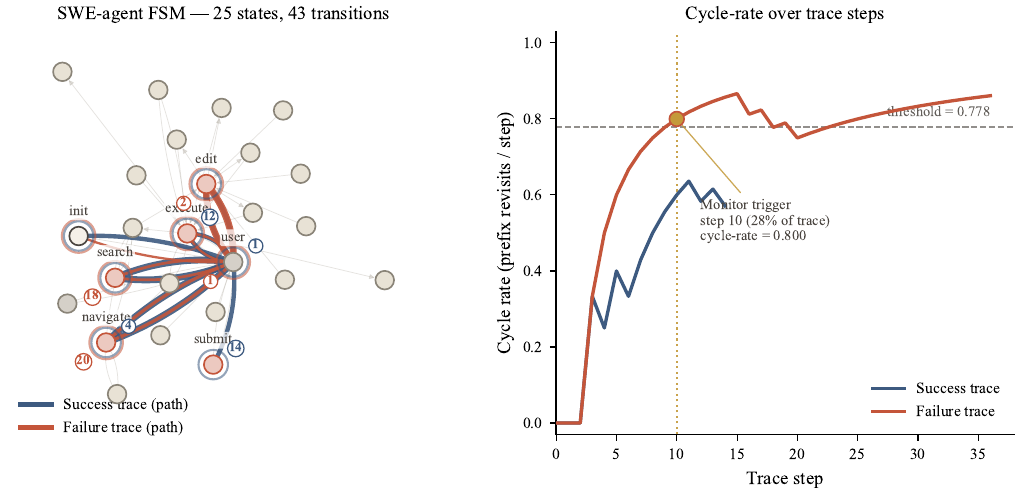}
\caption{\textbf{FSM-based runtime monitor.} Cycle-rate over trace progress for one failing (red) vs.\ one successful (blue) SWE-agent run. The failing trace exceeds the cycle-rate threshold (0.778) at 32\% of trace completion (vertical dashed line), triggering early termination. The successful trace stays below threshold and continues until natural completion.}
\label{fig:monitor_traj}
\end{figure*}

Cross-model transfer and sensitivity to extraction granularity are discussed in \S\ref{sec:discussion}; sample efficiency and failure-mode characterization in Appendix~\ref{app:sampleeff},~\ref{app:failmodes}.

\section{Discussion}\label{sec:discussion}
\paragraph{Baseline landscape.}
Gold's theorem~\citep{gold1967language} forces every positive-only method onto a compression--fitness tradeoff: EDSM and GSM+AIC over-merge to universal acceptors, RPNI and k-Tails ($k{\geq}2$) under-merge to $10^2$--$10^5$ states, and Alergia matches our fitness with 1.0--6.0$\times$ more states via stochastic merges.
Our deterministic merge gives the stable topology the downstream pipelines depend on.
\emph{Compactness makes per-state estimation reliable}: a small state set pools enough observations per state to make positive-only learning well-conditioned, and the same FSM serves workflow memory, next-step prediction, failure prediction, and runtime monitoring without four bespoke pipelines.

\paragraph{FSM state is structural.}
The state $q_t$ summarizes the prefix; two traces with identical activity counts but different orderings land in distinct states.
Length alone yields AUROC 0.659 on SWE-agent while structural features reach 0.790 (Appendix~\ref{app:length_vs_struct}); AWM's linear workflows collapse on low-success datasets (74.7\% on SWE-smith, 28.5\% on tau2-telecom) where our FSM reaches 100\% / 45.6\% (Table~\ref{tab:fsm_context_llm}).
This per-state decomposition locates \emph{where} a trace deviates and drives both the early-stopping monitor at 32\% completion and cross-model transfer (\textbf{0.786} mean cross-AUROC vs.\ 0.877 self; Appendix~\ref{app:crossmodel}).
The same per-state features lift MLP, GRU, and Transformer baselines on 20 of 21 dataset-architecture pairs over matched sequence features (Appendix~\ref{app:neural_probe}), so the FSM complements learned sequence models.

\paragraph{When and how invariant?}
The topology is invariant to model choice across all three tau2-bench suites, and robust to extraction granularity: our default granularity matches or exceeds role-only held-out AUROC on 10 of 12 datasets, the exceptions being datasets whose role-only alphabet degenerates to at most three symbols (Appendix~\ref{app:granularity}). This is consistent with system-level rather than model-level structure.
For agents with much larger action spaces or weaker conditional structure the same construction still applies but the FSM is no longer compact, and the per-state observation density that drives our downstream gains would degrade accordingly.

The construction itself is classical~\citep{daciuk2000incremental, hopcroft2006automata}; the setting is new. Bounded LLM-agent alphabets leave enough observations per state for the per-state estimates to be well-conditioned (Proposition~\ref{prop:estimator}, Corollary~\ref{cor:surprise}), which is what lets one automaton carry all four tasks instead of four bespoke pipelines.

\section{Limitations}\label{sec:limitations}
The FSM accepts the directly-follows closure of the observed traces, not the agent's generating language; adversarial traces preserving activity bigram statistics can replay (Theorem~\ref{prop:minimal}, Appendix~\ref{app:precision}).
The activity-extraction function $\phi$ is dataset-specific and requires minimal but non-zero domain knowledge; it is robust across granularities on the datasets tested in depth (Appendix~\ref{app:granularity}), and fully automatic $\phi$ discovery is future work.
We measure cross-model transfer ($0.786$ mean cross-AUROC) on three tau2-bench suites; broader cross-architecture and cross-domain transfer is future work.
We compare LLM-context workflow memory against AWM~\citep{wang2024agent_workflow_memory} only, leaving concurrent success-and-failure memory methods such as ReasoningBank~\citep{ouyang2025reasoningbank} to future work.

\section{Conclusion}\label{sec:conclusion}
We extract compact finite-state machines from LLM agent traces using only positive examples: the resulting compact FSMs (7--43 states) support workflow memory (beating AWM on all eight datasets), next-step prediction, failure prediction (AUROC up to 0.94), and an early-stopping runtime monitor.
A single hyperparameter-free construction underwrites all four in milliseconds, replacing four bespoke learned pipelines with one structural primitive.
Despite their apparent complexity, LLM agents admit compact structural abstractions: a deployable substrate for safety auditing, runtime monitoring, and behavioral analysis.

\bibliography{references}
\bibliographystyle{icml2026}

\onecolumn
\appendix
\section{Theory and Proofs}\label{app:group:theory}

\subsection{Extraction Algorithm}\label{app:algorithm}

\begin{quote}
\textbf{Algorithm 1 (FSM extraction).} \\
\textbf{Input:} Traces $\cD = \{\tau_1, \ldots, \tau_N\}$, extraction $\phi$. \quad \textbf{Output:} FSM $\cM = (Q, \cA, \delta, q_0)$.
\begin{enumerate}[leftmargin=*,nosep]
  \item For each $\tau \in \cD$: insert $\sigma=(\phi(m_1),\dots,\phi(m_T))$ into the prefix tree, extending states and transitions.
  \item Assign each trie state $q$ to the class $\kappa(q)$ of its incoming activity (root $\mapsto \mathrm{init}$); for each trie edge $q \xrightarrow{a} q'$ add the transition $\kappa(q) \xrightarrow{a} \kappa(q')$, aggregating counts.
  \item Remove each transition with aggregated count $1$ unless it is the only transition leaving its source; return $(Q, \cA, \delta, q_0)$.
\end{enumerate}
\end{quote}\label{alg:fsm}

\begin{figure}[htbp]
\centering
\begin{tikzpicture}[
    ->,>=Stealth,
    node distance=1.6cm and 2.2cm,
    state/.style={circle, draw, minimum size=0.7cm, font=\tiny\sffamily, inner sep=1pt},
    every edge/.style={draw, font=\tiny},
    scale=0.9, transform shape
  ]
  \node[state, fill=gray!15] (init) {init};
  \node[state, fill=blue!12, right=of init] (sys) {sys};
  \node[state, fill=green!12, right=of sys] (usr) {usr};
  \node[state, fill=orange!12, above right=0.8cm and 2cm of usr] (atc) {a:tc};
  \node[state, fill=purple!12, right=of atc] (tool) {tool};
  \node[state, fill=red!12, below right=0.8cm and 2cm of usr] (atxt) {a:txt};

  \path (init) edge node[above] {sys} (sys);
  \path (sys)  edge node[above] {usr} (usr);
  \path (usr)  edge node[above left, pos=0.4] {a:tc} (atc);
  \path (usr)  edge node[below left, pos=0.4] {a:txt} (atxt);
  \path (atc)  edge[bend left=15] node[above] {tool} (tool);
  \path (tool) edge[bend left=15] node[below] {a:tc} (atc);
  \path (tool) edge node[right, pos=0.4] {a:txt} (atxt);
  \path (atxt) edge[bend right=40] node[below] {usr} (usr);
\end{tikzpicture}
\caption{\textbf{Extracted FSM} for a customer service agent at role-level granularity (tau2-bench airline, 6 states, $|\cA|{=}5$). The tool-level FSM (18 states, $|\cA|{=}17$, Table~\ref{tab:main}) further decomposes \texttt{a:tc}/\texttt{tool} into per-tool states. The tool-call loop (a:tc$\leftrightarrow$tool) captures repeated API invocations; the conversational loop (a:txt$\to$usr) captures dialogue turns.}
\label{fig:example_fsm}
\end{figure}

\subsection{Proofs}\label{app:proofs}

\begin{proof}[Proof of Theorem~\ref{prop:fitness}] Let $\sigma = (a_1, \ldots, a_T)$ be accepted by prefix tree $\cP$, visiting states $q_0, q_1, \ldots, q_T$ with $\delta(q_i, a_{i+1}) = q_{i+1}$.
Let $[q] = \kappa(q)$ denote the class of $q$ under the last-activity merge (its incoming activity, with $[q_0] = \mathrm{init}$).

\emph{Step 1 (Edges survive).} The merged transition function collects every trie edge: $\delta_\cM([q], a) = [a]$ whenever some trie edge labelled $a$ leaves a member of $[q]$. In particular the trie edge $\delta(q_i, a_{i+1}) = q_{i+1}$ gives $\delta_\cM([q_i], a_{i+1}) = [a_{i+1}] = [q_{i+1}]$, since the incoming activity of $q_{i+1}$ is $a_{i+1}$.

\emph{Step 2 (Acceptance).} Apply $\delta_\cM$ along $\sigma$: \[ [q_0] \xrightarrow{a_1} [q_1] \xrightarrow{a_2} \cdots \xrightarrow{a_T} [q_T].
\] Every transition exists by Step~1, so $\sigma$ is accepted by $\cM$.

Step~3 filtering may subsequently remove a transition whose aggregated count is one, unless it is its source's only continuation; this is the only mechanism by which a trace fails to replay, and the replay-fitness columns of Table~\ref{tab:main} measure exactly this cost.
\end{proof}

\begin{proof}[Proof of Theorem~\ref{prop:minimal}] Let $\kappa(q)$ denote the activity on the edge entering trie state $q$, with $\kappa(q_\varepsilon) = \mathrm{init}$ for the root: the congruence merges $q \sim q'$ iff $\kappa(q) = \kappa(q')$, so the classes are exactly the $|\cA| + 1$ values of $\kappa$.

\emph{Determinism.} A quotient edge $[u] \xrightarrow{a} [ua]$ exists iff some training trace contains the bigram $(\kappa(u), a)$: the target class is $\kappa(ua) = a$, which is determined by the input symbol $a$ alone, so each $([u], a)$ has at most one target and the FSM is deterministic.

\emph{Transitions.} A class carries the union of its members' out-edges, so merging only adds transitions; every trie path survives the merge, and before filtering every training trace is accepted by Theorem~\ref{prop:fitness}. Step~3 then drops each transition whose aggregated count is one unless it is its source's only continuation; this removes edges, never states.

\emph{Uniqueness.} Both the class map $\kappa$ and the edge set (the retained observed bigrams; Step~3 thresholds on their counts) are functions of the multiset of training transitions alone, hence invariant to trace order and to which traces are drawn from a fixed corpus. The extracted FSM is therefore unique: this is the directly-follows automaton of $\cD$; it is not the minimal DFA of the finite prefix language $L_\cP$ (which is acyclic), but it is the compact acceptor whose states track the most recent activity.
\end{proof}

\begin{proof}[Proof of Proposition~\ref{prop:convergence}]
Let $e_1, \ldots, e_k$ be the $k$ transitions of $\cM^*$, with $p_j = \Pr[e_j \text{ appears in a random trace}] \geq p_{\min}$.  After $N$ i.i.d.\ traces:
\begin{align}
    \Pr[e_j \text{ not observed}] &= (1-p_j)^N \;\leq\; e^{-Np_{\min}}, \\
    \Pr[\exists\, j:\; e_j \text{ not observed}] &\leq k \cdot e^{-Np_{\min}}. \quad\text{(union bound)}
\end{align}
Setting the right-hand side $\leq \delta$ and solving:
\[
    N \;\geq\; \frac{1}{p_{\min}} \ln\!\bigl(k / \delta\bigr).
\]
When all transitions are observed, the prefix tree contains every transition of $\cM^*$, and the directly-follows quotient (Theorem~\ref{prop:minimal}) yields $\cM^*$.
\end{proof}

\begin{proposition}[Transition estimator consistency and concentration]\label{prop:estimator}
Let $P^*(\cdot \mid q)$ denote the true transition distribution at state $q$ under an i.i.d.\ trace distribution, let $n_q$ be the number of state-visit observations, and let $\alpha \in (0, 1]$. For any $\epsilon > 0$,
\begin{equation}\label{eq:concentration}
\Pr\!\left[\, \big\| \hat{P}(\cdot \mid q) - P^*(\cdot \mid q) \big\|_{\mathrm{TV}} \geq \epsilon + \tfrac{\alpha |\cA|}{n_q + \alpha |\cA|}\, \right] \leq 2|\cA| \exp\!\left(- \tfrac{n_q \epsilon^2}{2}\right).
\end{equation}
The smoothing bias vanishes as $n_q \to \infty$, and $\|\hat{P} - P^*\|_{\mathrm{TV}} \to 0$ almost surely. (Proof: Bretagnolle--Huber + Bernstein argument applied to multinomial transition counts.)
\end{proposition}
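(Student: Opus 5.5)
The plan is to split the error into a deterministic smoothing bias and a stochastic sampling error. Write $\tilde P(a\mid q)=C(q,a)/n_q$ for the unsmoothed empirical frequency, where $n_q=\sum_{a'}C(q,a')$. The triangle inequality gives
\[
\|\hat P-P^*\|_{\mathrm{TV}}\le \|\hat P-\tilde P\|_{\mathrm{TV}}+\|\tilde P-P^*\|_{\mathrm{TV}} .
\]

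\emph{The bias term.} The definition \eqref{eq:fsm_predict} gives exactly
\[
\hat P=\lambda\tilde P+(1-\lambda)U, \qquad \lambda=\frac{n_q}{n_q+\alpha|\cA|},
\]
with $U$ the uniform distribution on $\cA$. Hence $\|\hat P-\tilde P\|_{\mathrm{TV}}=(1-\lambda)\|U-\tilde P\|_{\mathrm{TV}}\le \alpha|\cA|/(n_q+\alpha|\cA|)$. This is the shift term in \eqref{eq:concentration}. It holds deterministically and vanishes as $n_q\to\infty$.

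\emph{The sampling term.} I would condition on $n_q$ and treat the successors observed at visits to $q$ as $n_q$ i.i.d.\ draws from $P^*(\cdot\mid q)$. For each fixed $a$, the indicators $\mathbf 1[\text{next}=a]$ are bounded in $[0,1]$. Two-sided Hoeffding then gives
\[
\Pr\bigl[|\tilde P(a)-P^*(a)|\ge t\bigr]\le 2e^{-2n_qt^2}.
\]
A union bound over the $|\cA|$ symbols gives, with probability at least $1-2|\cA|e^{-2n_qt^2}$, that $\max_a|\tilde P(a)-P^*(a)|<t$. Converting to total variation costs a factor: $\mathrm{TV}\le\frac{|\cA|}{2}\max_a|\cdot|$. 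Setting $t=2\epsilon/|\cA|$ yields an exponent $8n_q\epsilon^2/|\cA|^2$, not the stated $n_q\epsilon^2/2$.

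To recover the dimension-free exponent, I would use the variational form $\mathrm{TV}=\max_{S\subseteq\cA}(\tilde P(S)-P^*(S))$. For any single fixed $S$, Hoeffding gives $e^{-2n_q\epsilon^2}$. Two routes follow from this:
\begin{itemize}
\item A union over all $2^{|\cA|}$ subsets gives the Bretagnolle--Huber-type bound $2^{|\cA|}e^{-2n_q\epsilon^2}$.
\item McDiarmid applied to $\mathrm{TV}$ itself (bounded differences $1/n_q$), around its mean $\mathbb E\,\mathrm{TV}\le\frac12\sqrt{|\cA|/n_q}$, gives the same dimension-free exponent.
\end{itemize}
I would present the union-bound/Hoeffding route and note that the constant $2|\cA|\exp(-n_q\epsilon^2/2)$ follows once $\epsilon\gtrsim\sqrt{|\cA|/n_q}$, or under the Bernstein variance refinement the paper alludes to. If an unconditional statement is needed, I would state the result for the regime $n_q\ge |\cA|/\epsilon^2$.

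\emph{The main obstacle.} The hardest step is the i.i.d.\ treatment of the visits. Visits to $q$ within a single trace are dependent, and $n_q$ is itself random. I would handle this in one of two ways:
\begin{itemize}
\item Via the Markov property of the FSM-structured model, where successive successor draws from $q$ form a martingale-difference sequence. Azuma--Hoeffding then replaces Hoeffding with the same exponent, valid conditionally on $n_q$ by optional stopping.
\item By simply assuming, as the statement does, that the observations at $q$ are i.i.d.
\end{itemize}

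\emph{Almost-sure convergence.} Finally, for a.s.\ convergence: with $\epsilon_n=n^{-1/3}$ the tail bound is summable in $n_q$, so Borel--Cantelli combined with the vanishing bias gives $\|\hat P-P^*\|_{\mathrm{TV}}\to0$ almost surely. This requires $n_q\to\infty$, which holds a.s.\ whenever $q$ has positive visit probability.
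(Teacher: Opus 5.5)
Your bias step is exact: $\hat P=\lambda\tilde P+(1-\lambda)U$ gives the deterministic shift $\alpha|\cA|/(n_q+\alpha|\cA|)$, so everything reduces to bounding $\Pr[\mathrm{TV}(\tilde P,P^*)\ge\epsilon]$. Your restriction to $n_q\epsilon^2\gtrsim|\cA|$ is not a weakness of your route. The proposition places no restriction on $|\cA|$, and as stated it is false. Take $|\cA|=200$, $P^*(\cdot\mid q)$ uniform, $n_q=100$, $\alpha=0.01$, $\epsilon=0.45$. At least $100$ symbols are unseen, each with $\hat P(a)=0.01/102$ against $P^*(a)=1/200$. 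So $\mathrm{TV}(\hat P,P^*)\ge 0.490>\epsilon+2/102\approx 0.470$ with probability one, while the right-hand side is $400e^{-10.125}\approx 0.016$.

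This is not an artefact of small $\alpha$ or small $n_q$. The inequality asserts $\mathrm{TV}=O(\sqrt{\log|\cA|/n_q})$ with high probability. But for uniform $P^*$ and $n_q\to\infty$ (say with $\alpha=1$, whose bias is $O(|\cA|/n_q)$), $\sqrt{n_q}\,\mathrm{TV}$ concentrates near $\sqrt{(|\cA|-1)/(2\pi)}$ with $O(1)$ fluctuations. Meanwhile the claimed bound already drops below $1$ at $\epsilon\sqrt{n_q}=\sqrt{2\ln(2|\cA|)}$.

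So drop the suggestion that a ``Bernstein variance refinement'' could recover the unconditional form. Bernstein sharpens per-coordinate variances but cannot move $\mathbb{E}\,\mathrm{TV}$ off the $\sqrt{|\cA|/n_q}$ scale. The same objection applies to the paper's own justification, which is only the pointer ``Bretagnolle--Huber + Bernstein'': the Bretagnolle--Huber--Carol bound carries the prefactor $2^{|\cA|}$, not $2|\cA|$.

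What is true, and what your proposal does prove, is the restricted statement (with the usual $\mathrm{TV}=\tfrac12\|\cdot\|_1$). Your McDiarmid bullet gives it most directly, with constant $1$. If $n_q\epsilon^2\ge|\cA|$, then $\mathbb{E}\,\mathrm{TV}(\tilde P,P^*)\le\tfrac12\sqrt{|\cA|/n_q}\le\epsilon/2$. Bounded differences $1/n_q$ then give $\Pr[\mathrm{TV}(\tilde P,P^*)\ge\epsilon]\le e^{-2n_q(\epsilon/2)^2}=e^{-n_q\epsilon^2/2}$.

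Your subset union, which is exactly the Bretagnolle--Huber--Carol argument the paper names, also works in that regime, since $2^{|\cA|}e^{-2n_q\epsilon^2}\le(2e^{-3/2})^{|\cA|}e^{-n_q\epsilon^2/2}$. Alternatively, you can keep every $\epsilon>0$ at the price of the prefactor $2^{|\cA|}$ in place of $2|\cA|$. Either corrected form suffices for your Borel--Cantelli step: with $\epsilon_n=n^{-1/3}$ the regime condition holds once $n_q\ge|\cA|^3$. It also suffices for the $O(1/\sqrt{n_q})$ rate the paper invokes at fixed $|\cA|$.

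One smaller correction: ``valid conditionally on $n_q$ by optional stopping'' is not right. Optional stopping, or a time-uniform Azuma bound, controls the martingale at the random time $n_q$ unconditionally. It does not make the successors i.i.d.\ given $n_q$, and in general they are not. Suppose every trace reaches $q$, successor $b$ of $q$ always ends the trace, and every other successor leads back to $q$. Then $C(q,b)$ equals the number of traces, so given $n_q$ the empirical frequency of $b$ is deterministic.

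Under an FSM-structured (Markov) model, the clean device is to pre-draw an i.i.d.\ stack of successors for each state and let the $m$-th visit to $q$ consume the $m$-th draw. Then for each fixed $m$ the first $m$ successors are exactly i.i.d.\ from $P^*(\cdot\mid q)$. Hence the bad event intersected with $\{n_q=m\}$ is bounded by the fixed-$m$ tail, which is the reading the proposition implicitly adopts. The strong law on the stack, together with $n_q\to\infty$, then gives the almost-sure convergence directly.
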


\begin{corollary}[Surprise as log-likelihood ratio]\label{cor:surprise} Suppose success and failure traces are generated by FSM-structured mixtures $P^{+}(\cdot \mid q), P^{-}(\cdot \mid q)$ on the same state space.
The trace cross-entropy computed from a success-only transition model, $\mathrm{CE}^{+}(\tau) = \tfrac{1}{T}\sum_t -\log_2 \hat{P}^{+}(a_t \mid q_t)$, is a consistent estimator (in $n_q$) of the expected per-step negative log-likelihood under $P^{+}$.
Consequently, the per-trace surprise difference $\mathrm{CE}^{-}(\tau) - \mathrm{CE}^{+}(\tau)$ is a Neyman--Pearson-optimal statistic for distinguishing success from failure traces up to $O(1/\sqrt{n_q})$ error.
\end{corollary}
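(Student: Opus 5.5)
The plan is to reduce the statement to two ingredients: a pointwise comparison between empirical and population log-likelihood terms, and the classical Neyman--Pearson lemma for the population model. The approximation error will be controlled with Proposition~\ref{prop:estimator}.

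\emph{Population statement.} Under the FSM-structured model, a trace $\tau=(a_1,\dots,a_T)$ is deterministically mapped to its state sequence $q_t=\delta^*(q_0,a_1\ldots a_{t-1})$. Since the deterministic structure is shared between the two classes, the class likelihood factorizes as $P^{\pm}(\tau)=\prod_t P^{\pm}(a_t\mid q_t)$. The exact log-likelihood ratio is therefore
\[
\log_2\frac{P^{+}(\tau)}{P^{-}(\tau)}=\sum_t\Bigl(-\log_2 P^{-}(a_t\mid q_t)\Bigr)-\sum_t\Bigl(-\log_2 P^{+}(a_t\mid q_t)\Bigr)=T\bigl(\mathrm{CE}^{-}_*(\tau)-\mathrm{CE}^{+}_*(\tau)\bigr),
\]
where $\mathrm{CE}^{\pm}_*$ denote the cross-entropies computed with the true kernels. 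By the Neyman--Pearson lemma, thresholding this quantity is most powerful at every level. If $T$ is fixed or independent of the class, the per-step normalised difference is a monotone transform conditional on $T$. I will state that caveat explicitly, since the corollary normalises by $T$.

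\emph{Consistency.} For each state $q$ visited by the trace, apply Proposition~\ref{prop:estimator} to both $\hat P^{+}(\cdot\mid q)$ and $\hat P^{-}(\cdot\mid q)$. Taking $\epsilon\asymp\sqrt{\log(|\cA||Q|/\delta)/n_q}$ and a union bound over the $|Q|=|\cA|+1$ states (Theorem~\ref{prop:minimal}) gives, with high probability, total-variation error $O(1/\sqrt{n_q})$ at every state. The smoothing bias $\alpha|\cA|/(n_q+\alpha|\cA|)=O(1/n_q)$ is absorbed into this rate. The first claim, that $\mathrm{CE}^{+}$ consistently estimates the expected per-step negative log-likelihood, then follows by combining two facts: pointwise convergence of $\hat P^{+}\to P^{+}$ (almost surely, by the proposition), and the law of large numbers over steps.

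\emph{Main obstacle.} The hardest step is converting total-variation closeness into closeness of $\log\hat P$. The logarithm is unbounded near zero, so TV control alone does not suffice. The first remedy is to assume the true kernels on the support are bounded below by some $p_0>0$. Additive smoothing already guarantees $\hat P\ge \alpha/(n_q+\alpha|\cA|)>0$, and on the event $|\hat P(a\mid q)-P(a\mid q)|\le p_0/2$ we have $|\log\hat P-\log P|\le 2|\hat P-P|/p_0$. Averaging over the trace steps then yields
\[
\bigl|(\mathrm{CE}^{-}-\mathrm{CE}^{+})-(\mathrm{CE}^{-}_*-\mathrm{CE}^{+}_*)\bigr|=O\bigl(1/(p_0\sqrt{n_{\min}})\bigr),
\]
where $n_{\min}$ is the least visit count among the visited states. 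Symbols outside the support need separate treatment. They occur with probability zero under the respective class, but smoothing bounds their contribution by $O(\log n_q)$ rather than infinity. I would accept the resulting $\log$ factor inside the $O(\cdot)$, or exclude those symbols by assuming both classes have common support.

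\emph{Conclusion.} Because the empirical statistic differs from the exact Neyman--Pearson statistic by at most $\eta=O(1/\sqrt{n_q})$ uniformly, the test that thresholds it at $c$ is sandwiched between the optimal tests at thresholds $c\pm\eta$. Its type-I and type-II errors are therefore within the probability mass of the band $\{|\mathrm{LLR}/T-c|\le\eta\}$ of the optimal errors. Under a bounded-density assumption on the statistic, that mass is $O(\eta)$, which is the precise meaning I would give to ``optimal up to $O(1/\sqrt{n_q})$ error''.
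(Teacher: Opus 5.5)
Your route is essentially the paper's. The paper gives no separate proof of Corollary~\ref{cor:surprise}; it presents the result as Proposition~\ref{prop:estimator} plugged into the Neyman--Pearson likelihood ratio $\log_2 P^{+}(\tau)/P^{-}(\tau)=T(\mathrm{CE}^{-}_*-\mathrm{CE}^{+}_*)$. The hypotheses you make explicit are exactly what that informal statement leaves unsaid:
\begin{itemize}
\item fixed or class-independent $T$. You are right that the $T$-normalised statistic is then optimal only conditionally on $T$. This matters in the paper's setting, where failed traces are longer.
\item a common support on which both kernels are at least $p_0$.
\item anti-concentration of the statistic at the threshold.
\end{itemize}

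Three small repairs are worth making.
\begin{enumerate}
\item Drop the ``law of large numbers over steps.'' Consistency in $n_q$ for a fixed trace needs only almost-sure convergence of $\hat P^{+}$ and continuity of $\log$ at the finitely many values $P^{+}(a_t\mid q_t)>0$. An expected-NLL target would instead need $T\to\infty$ and an ergodic theorem for the Markov-dependent state sequence, which is a different limit from $n_q\to\infty$.
\item Of your two remedies for out-of-support symbols, only the common-support assumption works. Suppose a transition has $P^{+}(a\mid q)=0$. On failure traces that contain it, the exact statistic is infinite while the plug-in one is finite. So no $O(\log n_q)$ slack gives the uniform $\eta$ your sandwich step needs. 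Indeed, a long trace with one such step can be misclassified by the plug-in test but not by the optimal one.
\item $\mathrm{LLR}/T$ is a discrete random variable, so it has no density. State the final assumption directly as $P^{\pm}\bigl(|\mathrm{LLR}/T-c|\le\eta\bigr)=O(\eta)$.
\end{enumerate}
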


\begin{proposition}[Online monitoring regret]\label{prop:regret}
Consider the runtime monitor that replays a trace $\tau$ of length $T$ through the FSM, maintains the cumulative log-likelihood ratio $L_t(\tau) = \sum_{s\le t}\bigl[\log\hat{P}^{-}(a_s\mid q_s)-\log\hat{P}^{+}(a_s\mid q_s)\bigr]$ computed from $n_q$ per-state training observations, and declares failure the first time $L_t(\tau) > \eta$. Under the success/failure mixture model of Corollary~\ref{cor:surprise} with per-step log-ratio bounded by $B$, the online decision rule attains expected regret
\begin{equation}\label{eq:regret}
\mathrm{Regret}(T) \;=\; \mathbb{E}\!\left[\sum_{t=1}^T \ell(\hat{y}_t) - \sum_{t=1}^T \ell(y^{\ast}_t)\right] \;\leq\; B\sqrt{2T\log|Q|} \;+\; \frac{T\,|\cA|}{\sqrt{n_q}}
\end{equation}
against the best state-dependent threshold policy in hindsight, where $\ell$ is any $B$-Lipschitz loss (e.g., cost-weighted misclassification). The first term is the multi-armed-bandit regret over $|Q|$ candidate state-specific thresholds (Azuma--Hoeffding on the martingale $L_t$); the second is the plug-in estimation error from Proposition~\ref{prop:estimator}.
\end{proposition}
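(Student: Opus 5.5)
The regret bound splits into two pieces that I will control separately: a comparison of the oracle-model monitor with the best state-dependent threshold policy, and a plug-in error from replacing the true transition distributions with the smoothed estimates of Proposition~\ref{prop:estimator}. Write $L^\ast_t$ for the log-likelihood ratio built from the true $P^{\pm}(\cdot\mid q)$, and $\ell^\ast_t$ for the loss of a monitor that thresholds $L^\ast_t$. Then
\[
\mathrm{Regret}(T)=\mathbb{E}\Bigl[\sum_t \ell(\hat y_t)-\ell^\ast_t\Bigr]+\mathbb{E}\Bigl[\sum_t \ell^\ast_t-\ell(y^\ast_t)\Bigr].
\]
I will bound the first sum by the estimation term $T|\cA|/\sqrt{n_q}$ and the second by the bandit/martingale term $B\sqrt{2T\log|Q|}$.

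\emph{Estimation term.} Since $\ell$ is $B$-Lipschitz, each summand of the first sum is at most $B\,|\hat L_t - L^\ast_t|$. The ratio $\hat L_t - L^\ast_t$ changes per step by at most $|\log \hat P^{-}-\log P^{-}|+|\log\hat P^{+}-\log P^{+}|$. Smoothing with $\alpha>0$ keeps both probabilities bounded below, so $\log$ is Lipschitz on the relevant range. Per-coordinate errors are bounded by $2\|\cdot\|_{\mathrm{TV}}$, and Proposition~\ref{prop:estimator} with $\epsilon\asymp 1/\sqrt{n_q}$ gives an error of order $1/\sqrt{n_q}$ in each of up to $|\cA|$ coordinates. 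The smoothing bias $\alpha|\cA|/(n_q+\alpha|\cA|)$ is of lower order. Summing over $T$ steps and absorbing constants (including $B$ and the lower bound from $\alpha$) gives $T|\cA|/\sqrt{n_q}$.

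\emph{Comparator term.} I will cast the choice of a state-specific threshold as an experts/bandit problem with $|Q|$ arms, one per FSM state, following the paper's framing. Under the mixture model of Corollary~\ref{cor:surprise}, the centered increments of $L^\ast_t$ form a martingale difference sequence bounded by $B$. Azuma--Hoeffding then controls the deviation of $L^\ast_t$ from its drift at rate $B\sqrt{2T\log(\cdot)}$. A union bound over the $|Q|$ candidate thresholds supplies the $\log|Q|$. Equivalently, one can invoke the standard Hedge bound $B\sqrt{2T\log|Q|}$ for losses in $[0,B]$.

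\emph{Main obstacle.} The hardest step is making the comparator term rigorous. The statement fixes a single threshold rule, so it is not an online learner over the $|Q|$ arms, and standard regret bounds do not apply to it directly. My first attempt would be to reinterpret the stopping rule as following the leader over state-indexed thresholds and to use the martingale concentration to show it tracks the hindsight-best policy. If that fails, I would read the statement as bounding the regret of a Hedge-style monitor over per-state thresholds and obtain the first term from the standard experts bound.
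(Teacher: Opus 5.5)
Your two-term split is the same one the paper uses. However, the paper supplies nothing beyond the sentence inside the statement (first term: bandit regret via Azuma--Hoeffding on $L_t$; second term: Proposition~\ref{prop:estimator}). The step you flag as the main obstacle is a genuine gap, not a matter of polish.

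Azuma--Hoeffding on $L^\ast_t$ only controls how the statistic fluctuates around its drift. It says nothing about the loss gap between a rule with fixed $\eta$ and a policy chosen in hindsight, and no hypothesis ties $\eta$ to $\ell$. With $\eta$ unconstrained the bound actually fails. Take $\eta$ larger than any value $L_t$ can reach in $T$ steps, so the monitor never fires. On a failing trace it pays the miss cost $c\in(0,B]$ at every step, while the policy with threshold $-\infty$ in every state alarms at step one and pays nothing. The regret is therefore at least $\Pr[\text{failure}]\,cT$, which exceeds $B\sqrt{2T\log|Q|}+T|\cA|/\sqrt{n_q}$ once $n_q$ and then $T$ are large. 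For a fixed finite $\eta$, the same problem appears differently: an early false alarm on a successful trace is absorbing, so whenever it has positive probability it costs order $T$ against the never-alarm policy.

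Neither fallback repairs this.
\begin{itemize}
\item The fixed rule is not follow-the-leader, and follow-the-leader has no $\sqrt{T\log N}$ guarantee in general.
\item A Hedge-style monitor is a different algorithm, so its bound is a different theorem. Within one trace it would also have no feedback, because the per-step losses depend on the label, which is revealed only when the trace ends.
\end{itemize}
The $\log|Q|$ also presupposes a comparator class with exactly $|Q|$ members. Genuinely state-dependent thresholds drawn from $K$ levels form a class of size $K^{|Q|}$ and would give $B\sqrt{2T|Q|\log K}$. The expression $B\sqrt{2T\log N}$ is the sub-Gaussian maximal inequality applied to $N$ loss-difference processes, not to $L_t$. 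Turning it into a regret bound requires those processes to have nonpositive conditional drift, which is precisely the optimality property an untuned monitor lacks.

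The estimation term does not go through as written either. Write $y^{\mathrm{or}}_t$ for the decision behind your $\ell^\ast_t$.
\begin{itemize}
\item The decision $\hat y_t$ is the indicator of $\max_{s\le t}\hat L_s>\eta$. Lipschitzness of $\ell$ therefore gives only $|\ell(\hat y_t)-\ell(y^{\mathrm{or}}_t)|\le B\,\mathbf{1}[\hat y_t\neq y^{\mathrm{or}}_t]$, not $B|\hat L_t-L^\ast_t|$. An arbitrarily small perturbation of the statistic near $\eta$ flips the decision, so you need a margin condition on $L^\ast_t$ around $\eta$, or a randomized threshold.
\item Even granting your inequality, $|\hat L_t-L^\ast_t|$ accumulates up to $t$ per-step errors. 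Summing over $t\le T$ gives order $T^2$, not $T$, times the per-step error.
\item The smoothing floor $\hat P\ge\alpha/(n_q+\alpha|\cA|)$ only gives $\log$ a Lipschitz constant of order $n_q/\alpha$, which more than cancels the $1/\sqrt{n_q}$ rate. You need a lower bound $p_0$ on the true transition probabilities, and $1/p_0$ then enters the bound.
\item With $\epsilon\asymp 1/\sqrt{n_q}$, the failure probability $2|\cA|e^{-n_q\epsilon^2/2}$ in Proposition~\ref{prop:estimator} is a constant. You need $\epsilon\asymp\sqrt{\log(|\cA|n_q)/n_q}$ plus a separate bound on the bad event, which brings in logarithmic factors.
\end{itemize}
The last two points mean these constants cannot be absorbed into the constant-free $T|\cA|/\sqrt{n_q}$.
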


The bound has two practical consequences: (i)~regret is sub-linear in $T$, so the monitor catches failures faster than repeatedly relearning per-trace statistics. (ii)~Compact $|Q|$ (our FSMs use 7--43 states) makes $\sqrt{\log|Q|}$ small, while RPNI's $|Q|\!\sim\!10^3$--$10^5$ inflates both terms.
Empirically, this matches Section~\ref{sec:exp:downstream}: the combined cycle-rate + unique-state rule achieves $F_1\!=\!0.904$ on SWE-agent and flags failures at 32\% completion, consistent with the sub-linear-regret early-stopping the $O(\sqrt{T})$ bound guarantees.

\subsection{Complexity Analysis}\label{app:complexity}

We detail the runtime of the three steps.
Step 1 (prefix tree): inserting $N$ traces of mean length $\bar{T}$ costs $O(N\bar{T})$ time and space, since each symbol extends a trie node via hash-map lookup.
Step 2 (last-activity merge): we map each trie node to the class of its incoming activity in a single pass, aggregating trie-edge counts into a $(\text{class}, \text{symbol}, \text{class})$ transition multiset.
The pass merges all nodes reached by the same activity, giving $|\cA|+1$ classes.
Step 3 (rare-transition filtering): one sweep over the aggregated transitions, at most $O(|\cA|^2)$, negligible against the trie passes.
The construction visits every node once and inspects each outgoing edge, giving $O(|Q_\cP| \cdot |\cA|)$ worst-case time, where $|Q_\cP|$ is the number of prefix-tree states.
Because $|Q_\cP| \leq N\bar{T}$, the total construction time is $O(N\bar{T} \cdot |\cA|)$.
In practice, $|\cA| \leq 42$ across all twelve datasets and the hash-map constant is small; we build all FSMs in $<$110\,ms on a single CPU core (Table~\ref{tab:runtime}), compared to 7--36\,s for RPNI.

\section{Datasets and Setup}\label{app:group:datasets}

\subsection{Dataset Details}\label{app:datasets}

\begin{wraptable}{r}{0.40\textwidth}
\vspace{-14pt}
\caption{\textbf{Evaluation datasets.} $|\cA|$: alphabet size. $^*$7 primary + 17 rare. $^\ddagger$4 LLMs. $^a$Trivially separable.}
\label{tab:datasets}
\scriptsize
\setlength{\tabcolsep}{3pt}
\renewcommand{\arraystretch}{0.95}
\begin{tabular}{@{}llrrl@{}}
\toprule
Dataset & Domain & Traces & $|\cA|$ & Labels \\
\midrule
\multicolumn{5}{@{}l}{\textit{Labeled (main results)}} \\
SWE-smith & Coding & 500 & 9 & \cmark \\
SWE-agent & Coding & 2{,}000 & 24$^*$ & \cmark \\
WebArena & Web nav. & 8{,}337 & 24 & \cmark \\
AgentNet & Desktop GUI & 5{,}000 & 24 & \cmark \\
tau2-bench (air) & Cust.\ svc. & 800$^\ddagger$ & 17 & \cmark \\
tau2-bench (ret) & Cust.\ svc. & 1{,}824$^\ddagger$ & 18 & \cmark \\
tau2-bench (tel) & Telecom & 1{,}824$^\ddagger$ & 42 & \cmark \\
ATBench          & Safety  & 1{,}000 & 14 & \cmark \\
OSWorld          & Desktop OS & 2{,}166 & 26 & \cmark \\
\midrule
\multicolumn{5}{@{}l}{\textit{Unlabeled (Appendix~\ref{app:unlabeled})}} \\
Who\_and\_When & Multi-agent & 184 & 8 & \xmark \\
Mind2Web & Web nav. & 500 & 7 & \xmark \\
GUI-Odyssey & Mobile GUI & 7{,}735 & 6 & \cmark$^a$ \\
\bottomrule
\end{tabular}
\vspace{-10pt}
\end{wraptable}

\paragraph{Who\_and\_When}~\citep{yang2025whoandwhen} contains 184 multi-agent failure traces with 8 activity types (Table~\ref{tab:datasets}). All traces represent failures in agent delegation tasks. Activity extraction uses the actor role and action type fields.

\paragraph{SWE-smith}~\citep{yang2025swesmith} generates coding agent traces from SWE-bench task instances; we use 500 (377 success, 123 failure). Activities are extracted from \texttt{tool\_calls[].function.name} fields, yielding 9 unique activities (\texttt{bash}, \texttt{str\_replace\_editor}, \texttt{submit}, etc.).

\paragraph{Mind2Web}~\citep{deng2024mind2web} provides 2{,}350 web navigation tasks across 137 websites; we use a 500-trace sample. Activities are extracted from action representation strings in the format \texttt{[element] description $\to$ ACTION: value}, yielding 7 activity types (\texttt{CLICK}, \texttt{TYPE}, \texttt{SELECT}, etc.).

\paragraph{tau2-bench}~\citep{barres2025tau2} extends tau-bench with multi-model evaluation across three domains: airline (800 traces, 17 activities), retail (1{,}824 traces, 18 activities), and telecom (1{,}824 traces, 42 activities). Each domain contains traces from 4 LLMs (GPT-4.1, Claude~3.7 Sonnet, GPT-4.1-mini, o4-mini). Activities are extracted from \texttt{tool\_calls[].name}. The telecom domain introduces a richer tool vocabulary (42 activities including network diagnostics, SIM operations, billing) than any other dataset, producing our largest FSM (43 states).

\paragraph{WebArena}~\citep{zhou2024webarena} is a benchmark of realistic web tasks (shopping, forums, maps, GitLab); we use 8{,}337 agent traces (rollouts). Activities are 12 normalized action types (\texttt{click}, \texttt{type}, \texttt{scroll\_down}, etc.) combined with role prefixes, yielding $|\cA|{=}24$ activity symbols. Labels derive from task completion status (13.4\% success). This is our largest labeled dataset by trace count and produces the lowest compression ratio (15$\times$) because short web interaction traces (median 5 steps) give RPNI limited opportunity to overfit.

\paragraph{AgentNet}~\citep{wang2025opencua} provides desktop computer-use agent trajectories; we use a 5{,}000-trace sample from the OpenCUA Ubuntu subset, covering GUI automation across diverse applications. Activities are extracted from \texttt{pyautogui} action primitives (\texttt{click}, \texttt{typewrite}, \texttt{hotkey}, \texttt{screenshot}, \texttt{moveTo}, etc.), yielding 24 activities. Labels derive from task completion annotations (36.4\% success). This dataset produces the second-highest compression ratio (2{,}500$\times$) due to its large training set (4{,}000 traces) and diverse action vocabulary.

\paragraph{SWE-agent}~\citep{yang2024sweagent} provides coding agent trajectories in Parquet format (80{,}036 available); we use a 2{,}000-trace sample. Raw commands are extracted from code blocks in assistant messages and grouped into 7 actor categories: \texttt{search}, \texttt{navigate}, \texttt{edit}, \texttt{execute}, \texttt{submit}, \texttt{user}, and \texttt{assistant}. Combined with message-type suffixes, these yield $|\cA|{=}24$ activity symbols (Table~\ref{tab:datasets}). Labels derive from the \texttt{target} boolean field.

\paragraph{GUI-Odyssey}~\citep{lu2024guiodyssey} contains 7{,}735 cross-app mobile GUI navigation episodes across 201 apps on 6 Android devices. Each step records an action type (\texttt{CLICK}, \texttt{TEXT}, \texttt{SCROLL}, \texttt{LONG\_PRESS}, \texttt{COMPLETE}, \texttt{INCOMPLETE}), yielding 6 activities. Labels derive from episode success: \texttt{COMPLETE} (7{,}486 successes) vs.\ \texttt{INCOMPLETE} (249 failures). This is our largest mobile-GUI dataset and produces the highest compression ratio (3{,}036$\times$).

\section{Extended Results and Figures}\label{app:group:results}

\subsection{Cross-Model Transfer}\label{app:crossmodel}

The tau2-bench datasets contain traces from four LLMs executing identical tasks, enabling cross-model FSM transferability analysis.
For each source-target model pair, we build an FSM from the source model's training traces and evaluate replay fitness and failure prediction AUROC on the target model's test traces.

\paragraph{Structural transfer.} A single FSM built from all four models' traces achieves 1.000 replay fitness on every model individually. The behavioral topology is model-invariant: all models traverse the same tool-call sequences, differing only in transition probabilities.

\paragraph{Failure prediction transfer (3 suites).} Augmented with the FSM cross-entropy anomaly features (Section~\ref{sec:exp:downstream}), the feature set produces moderate transfer.
We measure on all three tau2-bench suites with 4 LLMs each (12 off-diagonal pairs per suite, 36 pairs total).

\begin{table}[htbp]
\centering
\caption{\textbf{Cross-model failure-prediction AUROC across three tau2-bench suites.}
Self-AUROC (diagonal) and mean cross-AUROC (off-diagonal); $\sigma$ over the 4 self-AUROCs and 12 cross-pair AUROCs.
Mean cross-AUROC across all 36 pairs is \textbf{0.786} vs.\ self mean \textbf{0.877} (0.091 gap).
Self/cross fitness on airline 1.000/0.962, retail 1.000/0.972, telecom 1.000/0.990.}
\label{tab:crossmodel_3suites}
\scriptsize
\setlength{\tabcolsep}{6pt}
\begin{tabular}{@{}l ccc ccc r@{}}
\toprule
& \multicolumn{3}{c}{Self-AUROC (diagonal)} & \multicolumn{3}{c}{Cross-AUROC (off-diagonal)} & \\
\cmidrule(lr){2-4}\cmidrule(lr){5-7}
Suite & mean & $\sigma$ & range & mean & $\sigma$ & range & gap \\
\midrule
$\tau_2$ airline & 0.926 & 0.051 & 0.84--0.97 & 0.773 & 0.102 & 0.56--0.92 & 0.154 \\
$\tau_2$ retail  & 0.749 & 0.052 & 0.68--0.82 & 0.681 & 0.075 & 0.54--0.79 & 0.068 \\
$\tau_2$ telecom & 0.956 & 0.028 & 0.92--0.99 & 0.905 & 0.045 & 0.82--0.99 & 0.051 \\
\midrule
\textbf{Mean (3 suites)} & \textbf{0.877} & -- & -- & \textbf{0.786} & -- & -- & 0.091 \\
\bottomrule
\end{tabular}
\end{table}

The transfer gap is consistent across suites (range 0.05--0.15), and cross-AUROC remains $\geq$0.68 in mean on every suite, meaningfully above chance.
GPT-4.1$\to$o4-mini is the strongest cross-pair (telecom 0.950, airline 0.890, retail 0.792); o4-mini$\to$Claude~3.7 is the weakest on airline and retail (0.560, 0.544), whereas on telecom every cross-pair stays $\geq$0.82.
Per-model FSMs reach 0.999--1.000 self-fitness on all three suites (40--41 states each on telecom, matching airline and retail).
The combined all-model FSM in every suite reaches 1.000 fitness on each model individually, monitoring heterogeneous deployments without per-model retraining.
Transition probabilities under the source model's FSM remain partially informative when the target model's surface behavior differs, because the surprise signal $-\log_2 P(a_t\mid q_t)$ tracks structural anomalies rather than model-specific tokens; the residual gap reflects that per-model failure modes are partly model-specific.

\section{Baselines and Implementation}\label{app:group:baselines}

We compare against three categories of baselines, each representing a distinct approach to behavioral model extraction.
\textbf{Automata learning} (RPNI, EDSM, Alergia): classic algorithms that infer DFAs or probabilistic automata from traces.
RPNI and EDSM require negative examples for effective merging; without them, they produce near-complete prefix trees ($10^3$--$10^5$ states).
Alergia uses statistical compatibility testing but still overestimates state counts by 1--6$\times$.
\textbf{Process mining} (Heuristic, Inductive, Alpha Miner via PM4Py): discover Petri nets from event logs.
The miners achieve high fitness by accepting all orderings of observed activities, but this permissiveness yields low precision (0.23--0.69).
\textbf{Workflow extraction} (AWM): extracts linear workflows from successful traces via longest common subsequence alignment.
AWM cannot represent cycles or branching and requires success labels, so it applies only to labeled datasets.

\subsection{Unlabeled Dataset Results}\label{app:unlabeled}

\begin{wraptable}{r}{0.56\textwidth}
\vspace{-\baselineskip}
\centering
\caption{\textbf{Compression results} on unlabeled datasets. Same methodology as Table~\ref{tab:main}.}
\label{tab:unlabeled_compression}
\footnotesize
\setlength{\tabcolsep}{3pt}
\begin{tabular}{@{}l rr rr rr r@{}}
\toprule
& \multicolumn{2}{c}{\cellcolor{cOursBg}Ours} & \multicolumn{2}{c}{RPNI} & \multicolumn{2}{c}{Alergia} & \\
\cmidrule(lr){2-3} \cmidrule(lr){4-5} \cmidrule(lr){6-7}
Dataset & \cellcolor{cOursBg}$|Q|$ & \cellcolor{cOursBg}Fit & $|Q|$ & Fit & $|Q|$ & Fit & Compr. \\
\midrule
Who\_and\_When & \cellcolor{cOursBg}\textbf{9} & \cellcolor{cOursBg}1.000 & 971 & 0.984 & 12 & 1.000 & 108$\times$ \\
Mind2Web & \cellcolor{cOursBg}\textbf{8} & \cellcolor{cOursBg}1.000 & 476 & 0.970 & 8 & 1.000 & 60$\times$ \\
GUI-Odyssey & \cellcolor{cOursBg}\textbf{7} & \cellcolor{cOursBg}1.000 & 21{,}255$^\dagger$ & 0.929 & 24 & 0.999 & 3{,}036$\times$ \\
\bottomrule
\end{tabular}
\end{wraptable}

Three datasets lack success/failure labels suitable for failure prediction and are reported here for compression analysis only.
Who\_and\_When contains only failure traces (no success examples), making failure prediction undefined.
Mind2Web provides ground-truth demonstrations without outcome labels.
GUI-Odyssey has labels, but failure prediction is trivially separable (AUROC 1.000): the terminal state \texttt{COMPLETE}/\texttt{INCOMPLETE} directly encodes the label.
Including these three datasets, compression ranges from 15$\times$ to 3{,}036$\times$ across all twelve datasets.

\subsection{Extended Baseline Results}\label{app:baselines}

Table~\ref{tab:full_baselines} presents the complete baseline comparison including all process mining and workflow extraction methods.

\noindent
\begin{minipage}[t]{0.42\textwidth}
\vspace{0pt}
\centering
\captionof{table}{\textbf{Full baseline comparison} (1/4). Fit: test replay fitness. $^\dagger$: RPNI timeout at 120s.}\label{tab:full_baselines}
\scriptsize
\setlength{\tabcolsep}{3pt}
\begin{tabular}{@{}l l rr l@{}}
\toprule
Dataset & Method & $|Q|$/Size & Fit & Notes \\
\midrule
\multirow{9}{*}{\rotatebox{90}{\scriptsize Who\_and\_When}}
 & \cellcolor{cOursBg}\textbf{Ours} & \textbf{9 st} & \textbf{1.000} & 108$\times$ \\
 & \cellcolor{cRPNIBg}RPNI & 971 st & 0.984 & Overfit \\
 & \cellcolor{cRPNIBg}EDSM & 1 st & 1.000 & Degen. \\
 & \cellcolor{cAlergiaBg}Alergia & 12 st & \textbf{1.000} & 1.3$\times$ \\
 & \cellcolor{cGroupHd}HMM & \textbf{9 st} & \textbf{1.000} & Latent \\
 & \cellcolor{cGroupHd}Heur.\ M & 10p,23t & 0.997 & P:0.31 \\
 & \cellcolor{cGroupHd}Ind.\ M & 16p,24t & 0.996 & P:0.27 \\
 & \cellcolor{cGroupHd}Alpha M & 3p,8t & 0.369 & Fails \\
 & \cellcolor{cGroupHd}AWM-all & 125 wf & 0.914 & LCS \\
\midrule
\multirow{9}{*}{\rotatebox{90}{\scriptsize SWE-smith}}
 & \cellcolor{cOursBg}\textbf{Ours} & \textbf{10 st} & \textbf{1.000} & 1{,}163$\times$ \\
 & \cellcolor{cRPNIBg}RPNI & 11{,}631$^\dagger$ & 0.762 & $^\dagger$ \\
 & \cellcolor{cRPNIBg}EDSM & 1 st & 1.000 & Degen. \\
 & \cellcolor{cAlergiaBg}Alergia & \textbf{10 st} & \textbf{1.000} & Same \\
 & \cellcolor{cGroupHd}HMM & \textbf{10 st} & \textbf{1.000} & Latent \\
 & \cellcolor{cGroupHd}Heur.\ M & 13p,22t & \textbf{1.000} & P:0.36 \\
 & \cellcolor{cGroupHd}Ind.\ M & 25p,33t & \textbf{1.000} & P:0.23 \\
 & \cellcolor{cGroupHd}Alpha M & 7p,9t & 0.155 & Fails \\
 & \cellcolor{cGroupHd}AWM & 1{,}208 wf & \textbf{1.000} & LCS \\
\midrule
\multirow{9}{*}{\rotatebox{90}{\scriptsize Mind2Web}}
 & \cellcolor{cOursBg}\textbf{Ours} & \textbf{8 st} & \textbf{1.000} & 60$\times$ \\
 & \cellcolor{cRPNIBg}RPNI & 476 st & 0.970 & Overfit \\
 & \cellcolor{cRPNIBg}EDSM & 1 st & 1.000 & Degen. \\
 & \cellcolor{cAlergiaBg}Alergia & \textbf{8 st} & \textbf{1.000} & Same \\
 & \cellcolor{cGroupHd}HMM & \textbf{8 st} & \textbf{1.000} & Latent \\
 & \cellcolor{cGroupHd}Heur.\ M & 15p,29t & 0.960 & P:0.45 \\
 & \cellcolor{cGroupHd}Ind.\ M & 23p,30t & \textbf{1.000} & P:0.49 \\
 & \cellcolor{cGroupHd}Alpha M & 7p,9t & 0.632 & Poor \\
 & \cellcolor{cGroupHd}AWM-all & 72 wf & 0.887 & LCS \\
\bottomrule
\end{tabular}
\end{minipage}%
\hfill
\begin{minipage}[t]{0.56\textwidth}
\vspace{0pt}
\centering
\includegraphics[width=\linewidth]{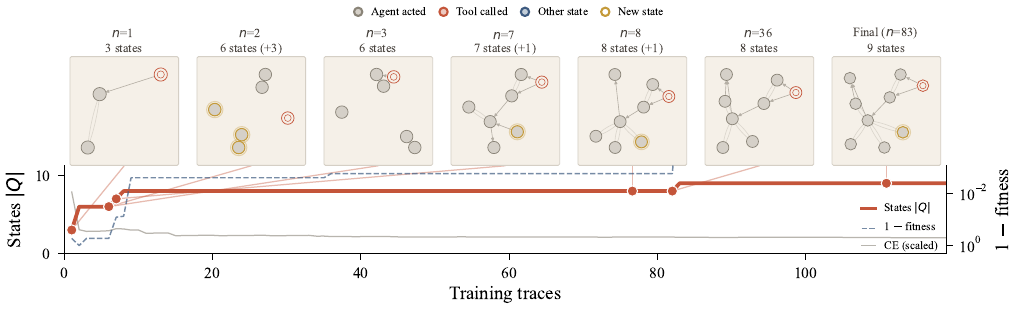}\\[2pt]
\includegraphics[width=\linewidth]{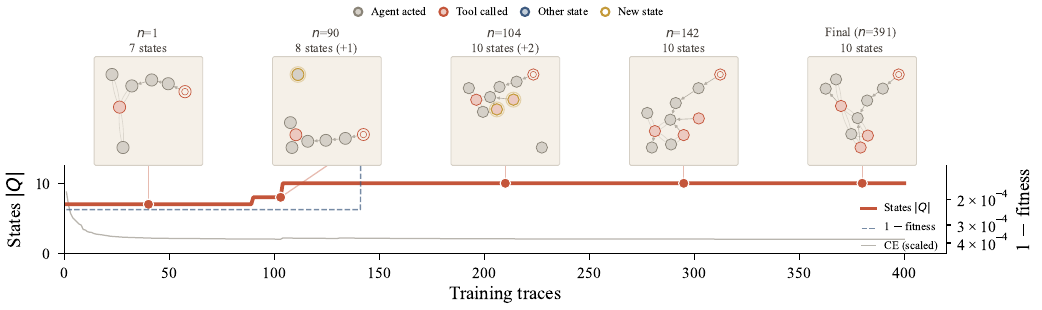}\\[2pt]
\includegraphics[width=\linewidth]{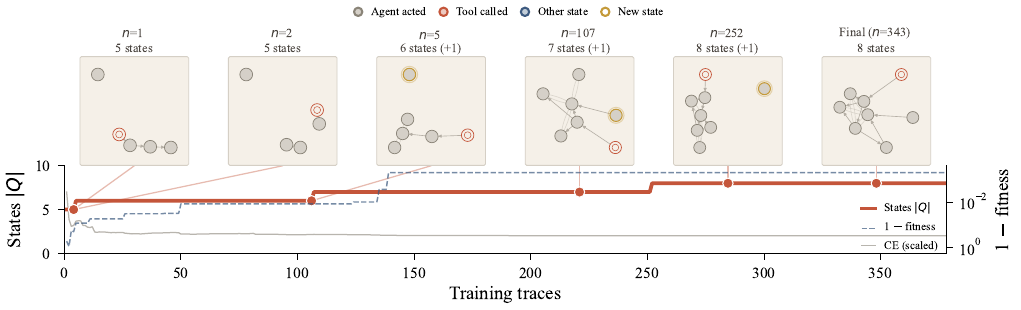}
\captionof{figure}{FSM evolution for Who\_and\_When, SWE-smith, and Mind2Web. State count (red), 1$-$fitness (blue dashed, reverse-log), CE (gray, scaled).}
\label{fig:evolution_group1}
\end{minipage}

\smallskip\noindent\textit{Small-alphabet datasets} ($|\cA|{=}7$--$9$).
All three converge within 5--10\% of training data and stabilize at 8--10 states.
Alergia matches our state count exactly; HMM confirms the same structure.
RPNI produces 476--11{,}631 states with degraded fitness ($\leq$0.984), demonstrating that modest trace corpora cause catastrophic overfitting without structural merging.

\vspace{8pt}

\noindent
\begin{minipage}[t]{0.56\textwidth}
\vspace{0pt}
\centering
\includegraphics[width=0.88\linewidth]{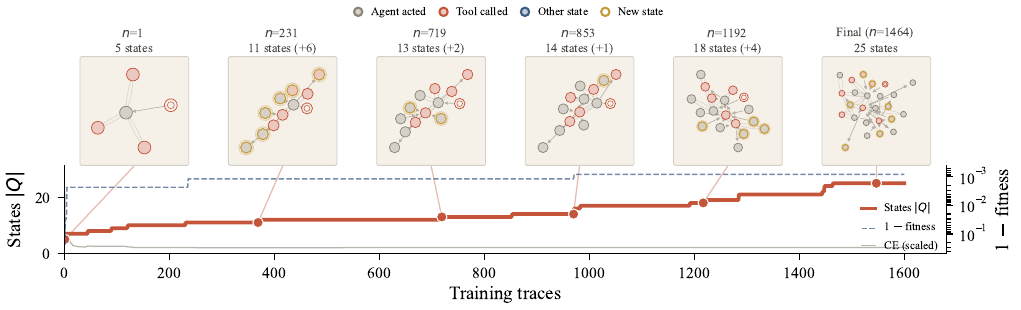}\\[2pt]
\includegraphics[width=0.88\linewidth]{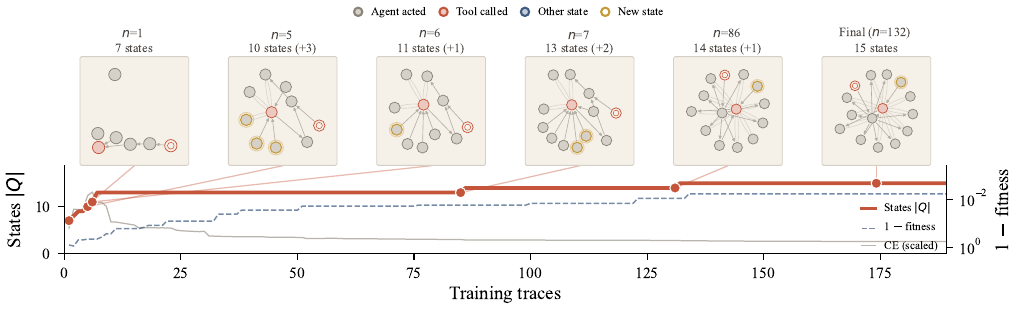}\\[2pt]
\includegraphics[width=0.88\linewidth]{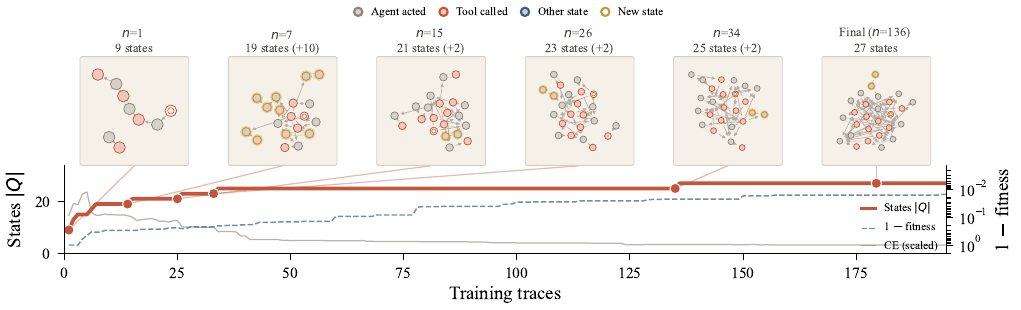}
\captionof{figure}{FSM evolution for SWE-agent, ATBench (safety), and OSWorld (desktop GUI). Lines: states (red), 1$-$fitness (blue dashed), CE (gray).}
\label{fig:evolution_group2}
\end{minipage}%
\hfill
\begin{minipage}[t]{0.42\textwidth}
\vspace{0pt}
\captionof{table}{\textbf{Full baseline comparison} (2/4).}\label{tab:full_baselines_1b}
\footnotesize
\setlength{\tabcolsep}{3pt}
\begin{tabular}{@{}l l rr l@{}}
\toprule
Dataset & Method & $|Q|$/Size & Fit & Notes \\
\midrule
\multirow{10}{*}{\rotatebox{90}{\scriptsize SWE-agent}}
 & \cellcolor{cOursBg}\textbf{Ours} & \textbf{25 st} & 0.999 & 2{,}380$\times$ \\
 & \cellcolor{cRPNIBg}RPNI & 59{,}510$^\dagger$ & 0.646 & $^\dagger$ \\
 & \cellcolor{cRPNIBg}EDSM & 1 st & 1.000 & Degen. \\
 & \cellcolor{cAlergiaBg}Alergia & 35 st & 0.999 & 1.4$\times$ \\
 & \cellcolor{cGroupHd}HMM & \textbf{25 st} & \textbf{1.000} & Latent \\
 & \cellcolor{cGroupHd}Heur.\ M & 21p,59t & 0.999 & PN \\
 & \cellcolor{cGroupHd}Ind.\ M & 44p,70t & 0.999 & PN \\
 & \cellcolor{cGroupHd}Alpha M & 12p,24t & 0.050 & PN \\
 & \cellcolor{cGroupHd}AWM & 371 wf & 0.978 & LCS \\
\midrule
\multirow{5}{*}{\rotatebox{90}{\scriptsize ATBench}}
 & \cellcolor{cOursBg}\textbf{Ours} & \textbf{15 st} & \textbf{1.000} & 60$\times$ \\
 & \cellcolor{cRPNIBg}RPNI & 899 & 0.984 &  \\
 & \cellcolor{cRPNIBg}EDSM & 1 st & 1.000 & Degen. \\
 & \cellcolor{cAlergiaBg}Alergia & 15 st & \textbf{1.000} & 1.0$\times$ \\
 & \cellcolor{cGroupHd}HMM & \textbf{15 st} & \textbf{1.000} & Latent \\
\midrule
\multirow{5}{*}{\rotatebox{90}{\scriptsize OSWorld}}
 & \cellcolor{cOursBg}\textbf{Ours} & \textbf{27 st} & \textbf{0.997} & 1{,}416$\times$ \\
 & \cellcolor{cRPNIBg}RPNI & 38{,}232 & 0.706 &  \\
 & \cellcolor{cRPNIBg}EDSM & 1 st & 1.000 & Degen. \\
 & \cellcolor{cAlergiaBg}Alergia & 31 st & 0.999 & 1.1$\times$ \\
 & \cellcolor{cGroupHd}HMM & \textbf{27 st} & \textbf{1.000} & Latent \\
\bottomrule
\end{tabular}
\end{minipage}

\smallskip\noindent\textit{Coding agent} ($|\cA|{=}25$).
SWE-agent shows the most gradual evolution: core search-edit-execute structure emerges by $n{=}80$, but rare commands (e.g., \texttt{deactivate}, \texttt{cd}) continue appearing until $n{=}1{,}200$.

\vspace{8pt}

\noindent
\begin{minipage}[t]{0.42\textwidth}
\vspace{0pt}
\captionof{table}{\textbf{Full baseline comparison} (3/4).}\label{tab:full_baselines_1c}
\footnotesize
\setlength{\tabcolsep}{3pt}
\begin{tabular}{@{}l l rr l@{}}
\toprule
Dataset & Method & $|Q|$/Size & Fit & Notes \\
\midrule
\multirow{7}{*}{\rotatebox{90}{\scriptsize GUI-Odyssey}}
 & \cellcolor{cOursBg}\textbf{Ours} & \textbf{7 st} & \textbf{1.000} & 3{,}036$\times$ \\
 & \cellcolor{cRPNIBg}RPNI & 21{,}255$^\dagger$ & 0.929 & $^\dagger$ \\
 & \cellcolor{cRPNIBg}EDSM & 1 st & 1.000 & Degen. \\
 & \cellcolor{cAlergiaBg}Alergia & 24 st & 0.999 & 3.4$\times$ \\
 & \cellcolor{cGroupHd}HMM & \textbf{7 st} & \textbf{1.000} & Latent \\
 & \cellcolor{cGroupHd}Ind.\ M & -- & \textbf{1.000} & \\
 & \cellcolor{cGroupHd}AWM & 1{,}266 wf & 0.996 & LCS \\
\midrule
\multirow{5}{*}{\rotatebox{90}{\scriptsize WebArena}}
 & \cellcolor{cOursBg}\textbf{Ours} & \textbf{25 st} & \textbf{1.000} & 15$\times$ \\
 & \cellcolor{cRPNIBg}RPNI & 382 st & \textbf{1.000} & 15$\times$ larger \\
 & \cellcolor{cRPNIBg}EDSM & 1 st & 1.000 & Degen. \\
 & \cellcolor{cAlergiaBg}Alergia & 149 st & \textbf{1.000} & 6.0$\times$ \\
 & \cellcolor{cGroupHd}HMM & \textbf{25 st} & \textbf{1.000} & Latent \\
\midrule
\multirow{9}{*}{\rotatebox{90}{\scriptsize AgentNet}}
 & \cellcolor{cOursBg}\textbf{Ours} & \textbf{25 st} & \textbf{1.000} & 2{,}500$\times$ \\
 & \cellcolor{cRPNIBg}RPNI & 62{,}495$^\dagger$ & 0.743 & $^\dagger$ \\
 & \cellcolor{cRPNIBg}EDSM & 1 st & 1.000 & Degen. \\
 & \cellcolor{cAlergiaBg}Alergia & 45 st & \textbf{1.000} & 1.8$\times$ \\
 & \cellcolor{cGroupHd}HMM & \textbf{25 st} & \textbf{1.000} & Latent \\
 & \cellcolor{cGroupHd}Heur.\ M & 35p,84t & 0.987 & PN \\
 & \cellcolor{cGroupHd}Ind.\ M & 18p,39t & \textbf{1.000} & PN \\
 & \cellcolor{cGroupHd}Alpha M & 174p,--t & 0.297 & PN \\
 & \cellcolor{cGroupHd}AWM & -- & 0.975 & LCS \\
\bottomrule
\end{tabular}
\end{minipage}%
\hfill
\begin{minipage}[t]{0.56\textwidth}
\vspace{0pt}
\centering
\includegraphics[width=\linewidth]{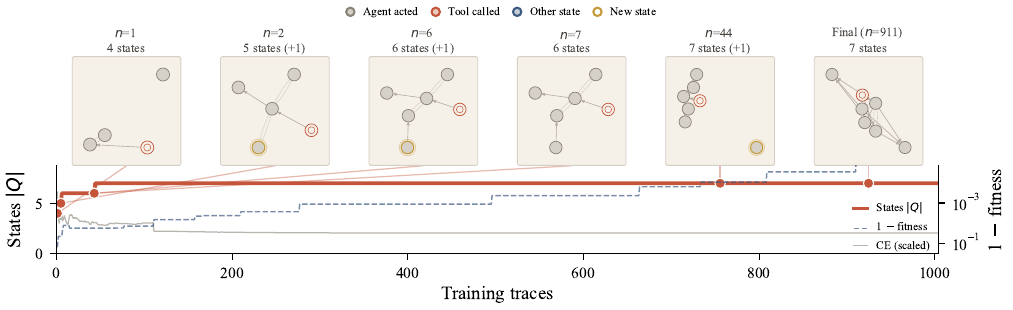}\\[2pt]
\includegraphics[width=\linewidth]{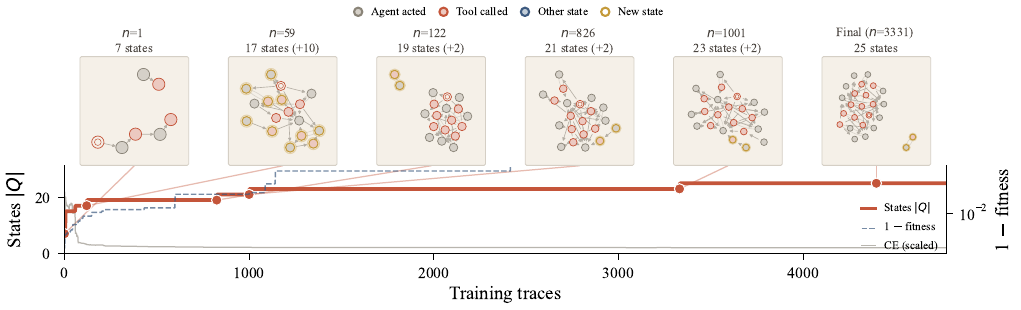}\\[2pt]
\includegraphics[width=\linewidth]{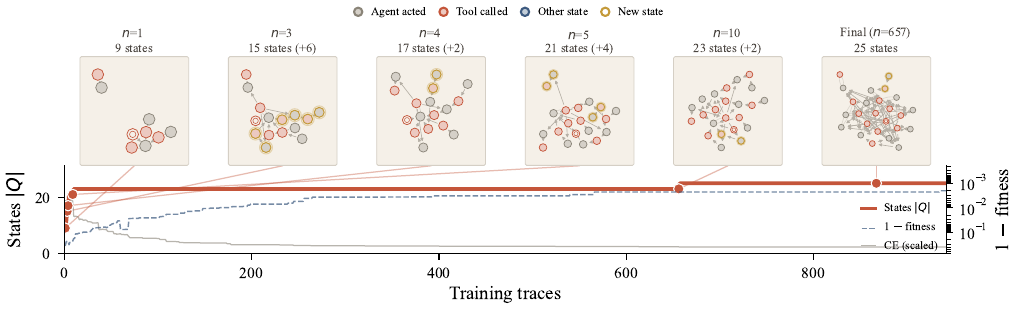}
\captionof{figure}{FSM evolution for GUI-Odyssey, WebArena, and AgentNet. Lines: states (red), 1$-$fitness (blue dashed), CE (gray).}
\label{fig:evolution_group3}
\end{minipage}

\vspace{8pt}

\noindent
\begin{minipage}[t]{0.56\textwidth}
\vspace{0pt}
\centering
\includegraphics[width=\linewidth]{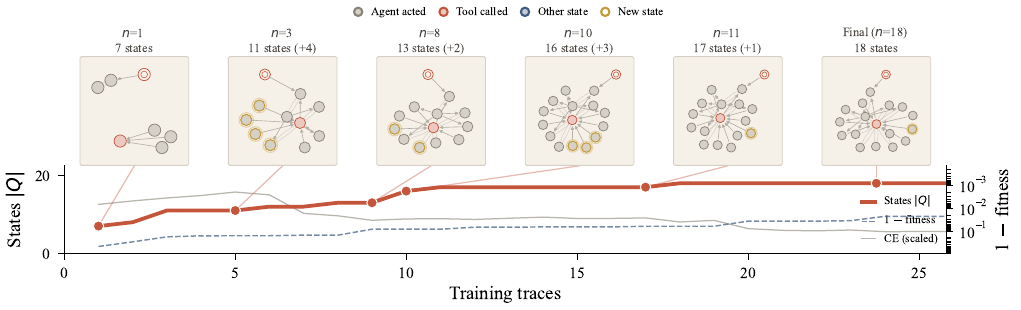}\\[2pt]
\includegraphics[width=\linewidth]{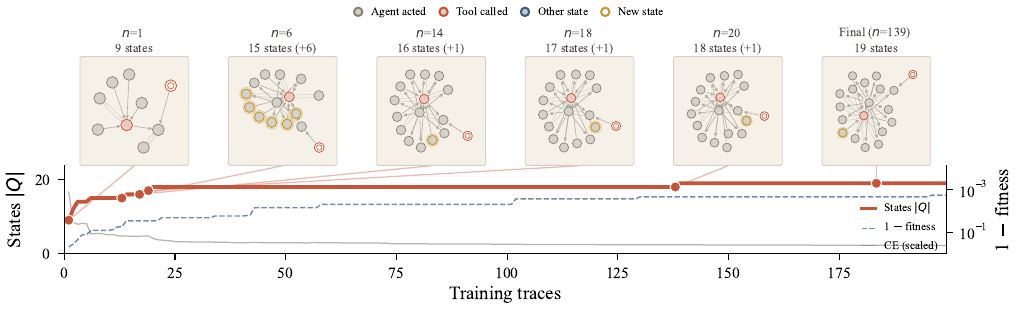}\\[2pt]
\includegraphics[width=\linewidth]{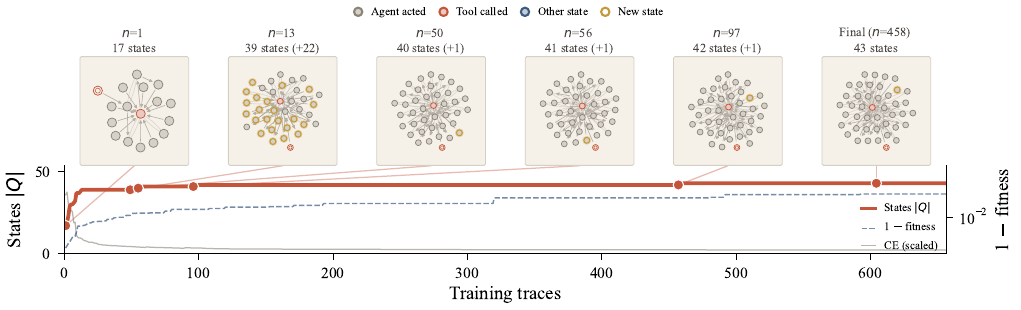}
\captionof{figure}{FSM evolution for tau2-bench (airline, retail, telecom). Multi-model (4 LLMs). Lines: states (red), 1$-$fitness (blue dashed), CE (gray).}
\label{fig:evolution_group4}
\end{minipage}%
\hfill
\begin{minipage}[t]{0.42\textwidth}
\vspace{0pt}
\captionof{table}{\textbf{Full baseline comparison} (4/4).}\label{tab:full_baselines_2}
\normalsize
\setlength{\tabcolsep}{3pt}
\begin{tabular}{@{}l l rr l@{}}
\toprule
Dataset & Method & $|Q|$/Size & Fit & Notes \\
\midrule
\multirow{5}{*}{\rotatebox{90}{\scriptsize tau2-air}}
 & \cellcolor{cOursBg}\textbf{Ours} & \textbf{18 st} & \textbf{1.000} & 361$\times$ \\
 & \cellcolor{cRPNIBg}RPNI & 6{,}506$^\dagger$ & 0.844 & $^\dagger$ \\
 & \cellcolor{cRPNIBg}EDSM & 1 st & 1.000 & Degen. \\
 & \cellcolor{cAlergiaBg}Alergia & 23 st & 0.999 & 1.3$\times$ \\
 & \cellcolor{cGroupHd}HMM & \textbf{18 st} & \textbf{1.000} & Latent \\
\midrule
\multirow{5}{*}{\rotatebox{90}{\scriptsize tau2-ret}}
 & \cellcolor{cOursBg}\textbf{Ours} & \textbf{19 st} & \textbf{1.000} & 750$\times$ \\
 & \cellcolor{cRPNIBg}RPNI & 14{,}249$^\dagger$ & 0.837 & $^\dagger$ \\
 & \cellcolor{cRPNIBg}EDSM & 1 st & 1.000 & Degen. \\
 & \cellcolor{cAlergiaBg}Alergia & 25 st & \textbf{1.000} & 1.3$\times$ \\
 & \cellcolor{cGroupHd}HMM & \textbf{19 st} & \textbf{1.000} & Latent \\
\midrule
\multirow{5}{*}{\rotatebox{90}{\scriptsize tau2-tel}}
 & \cellcolor{cOursBg}\textbf{Ours} & \textbf{43 st} & \textbf{1.000} & 1{,}486$\times$ \\
 & \cellcolor{cRPNIBg}RPNI & 63{,}897$^\dagger$ & 0.491 & $^\dagger$ \\
 & \cellcolor{cRPNIBg}EDSM & 1 st & 1.000 & Degen. \\
 & \cellcolor{cAlergiaBg}Alergia & 75 st & 0.999 & 1.7$\times$ \\
 & \cellcolor{cGroupHd}HMM & \textbf{43 st} & \textbf{1.000} & Latent \\
\bottomrule
\end{tabular}
\end{minipage}

\smallskip\noindent\textit{Large-scale datasets} (5{,}000--8{,}337 traces).
Compression ratios peak here: GUI-Odyssey at 3{,}036$\times$ and AgentNet at 2{,}500$\times$.
RPNI completely collapses on AgentNet (62{,}495 states, 0.743 fitness).
WebArena is the exception: short traces ($\sim$8 steps) keep the prefix tree small (382 states), so RPNI achieves perfect fitness, though still 15$\times$ larger.
Alergia diverges most on GUI-Odyssey (3.4$\times$ our state count), where the statistical merge criterion becomes overly conservative with 7{,}735 traces.

\smallskip\noindent\textit{Multi-model benchmarks} (4 LLMs per dataset).
A single FSM achieves 1.000 fitness on all four models' traces, confirming model-invariant behavioral topology. tau2-bench telecom has the largest FSM (43 states, 42 tool types), reflecting the complex diagnostic workflow.
The evolution figures show that all three tau2-bench datasets reach structural convergence despite pooling traces from GPT-4.1, Claude~3.7, GPT-4.1-mini, and o4-mini.

\paragraph{Alergia behavior.} Alergia~\citep{carrasco1994alergia} learns a probabilistic DFA (PDFA) from positive examples using a Hoeffding bound to decide state merges. On smaller datasets ($\leq$500 traces), Alergia produces state counts matching our method (8--12 states). On larger datasets, Alergia diverges: 35 states on SWE-agent (1.4$\times$ ours) and 24 on GUI-Odyssey (3.4$\times$ ours), because the statistical test becomes more conservative with more data, splitting states that share structure but differ in probability distributions. Our structural merging is agnostic to transition frequencies, producing deterministic FSMs that are smaller and faster to construct.

\paragraph{HMM behavior.} HMM (Baum-Welch~\citep{rabiner1989hmm}) with the same number of hidden states as our FSM achieves comparable fitness on all twelve datasets. However, HMM states are latent (unlabeled), making the model non-interpretable: one cannot inspect which behavioral mode a state corresponds to or extract per-state features for downstream analysis. Our FSM states have explicit activity-labeled transitions.

\paragraph{EDSM behavior.} EDSM~\citep{lang1998edsm} uses evidence-driven scoring to rank candidate merges. Without negative examples, the evidence score for every merge candidate is zero, so EDSM greedily merges all states into one. The resulting 1-state universal acceptor has perfect fitness (it accepts everything) but zero precision. This validates that our approach, which also uses only positive examples, achieves meaningful structure (7--43 states with high precision) rather than collapsing.

\paragraph{AWM behavior.} AWM~\citep{wang2024agent_workflow_memory} by default filters for successful traces only. On Who\_and\_When (all failures) and Mind2Web (no success labels), we use AWM-all which skips the success filter. AWM extracts linear workflows via longest common subsequence alignment; it cannot represent cycles or branching. Its coverage metric (0.887--1.000) is not directly comparable to replay fitness.

\begin{wraptable}{r}{0.52\textwidth}
\vspace{-\baselineskip}
\centering
\caption{\textbf{k-Tails results.} $|Q|$: states. Fit: test fitness. TO: timeout (300s). Our method requires no hyperparameter.}
\label{tab:ktails_app}
\footnotesize
\tablestyle{3pt}{1.02}
\begin{tabular}{@{}l rr rr rr@{}}
\toprule
& \multicolumn{2}{c}{\cellcolor{cOursBg}Ours} & \multicolumn{2}{c}{k-Tails ($k{=}1$)} & \multicolumn{2}{c}{k-Tails ($k{=}2$)} \\
\cmidrule(lr){2-3} \cmidrule(lr){4-5} \cmidrule(lr){6-7}
Dataset & \cellcolor{cOursBg}$|Q|$ & \cellcolor{cOursBg}Fit & $|Q|$ & Fit & $|Q|$ & Fit \\
\midrule
SWE-smith & \cellcolor{cOursBg}\textbf{10} & \cellcolor{cOursBg}1.000 & 22 & 1.000 & 53 & 1.000 \\
SWE-agent & \cellcolor{cOursBg}\textbf{25} & \cellcolor{cOursBg}0.999 & 74 & 0.996 & 332 & 0.993 \\
WebArena & \cellcolor{cOursBg}\textbf{25} & \cellcolor{cOursBg}1.000 & 30 & 0.743 & 172 & 0.743 \\
AgentNet & \cellcolor{cOursBg}\textbf{25} & \cellcolor{cOursBg}1.000 & 33 & 0.806 & \multicolumn{2}{c}{TO} \\
\addlinespace[2pt]
tau2-bench (air) & \cellcolor{cOursBg}\textbf{18} & \cellcolor{cOursBg}1.000 & 37 & 0.973 & 731 & 0.773 \\
tau2-bench (ret) & \cellcolor{cOursBg}\textbf{19} & \cellcolor{cOursBg}1.000 & 46 & 0.997 & 874 & 0.888 \\
tau2-bench (tel) & \cellcolor{cOursBg}\textbf{43} & \cellcolor{cOursBg}1.000 & 210 & 0.930 & \multicolumn{2}{c}{TO} \\
\addlinespace[2pt]
Who\_and\_When & \cellcolor{cOursBg}\textbf{9} & \cellcolor{cOursBg}1.000 & 14 & 0.541 & 20 & 0.541 \\
Mind2Web & \cellcolor{cOursBg}\textbf{8} & \cellcolor{cOursBg}1.000 & 34 & 0.965 & 95 & 0.905 \\
GUI-Odyssey & \cellcolor{cOursBg}\textbf{7} & \cellcolor{cOursBg}1.000 & 73 & 0.964 & 759 & 0.939 \\
\bottomrule
\end{tabular}
\end{wraptable}

\paragraph{PM4Py precision.} The ``flower model'' problem in process mining: miners that accept all possible orderings achieve high fitness but low precision. Our precision analysis (Table~\ref{tab:precision}) confirms this: PM4Py miners achieve precision of 0.00--0.80 across datasets, while our FSM achieves near-zero random acceptance.

\paragraph{k-Tails behavior.} k-Tails~\citep{biermann1972ktails} merges states sharing identical $k$-length futures. Table~\ref{tab:ktails_app} reports results for $k \in \{1,2,3\}$. At $k{=}1$, k-Tails produces 1.4--10$\times$ more states than our method with lower test fitness (0.54--1.00 vs.\ $\geq$0.997). At $k{\geq}2$, state counts explode to hundreds or thousands, with timeouts on large datasets (AgentNet, tau2-bench telecom). No value of $k$ simultaneously matches our compression and fitness, illustrating why hyperparameter-free structural merging is preferable.

\begin{figure}[t]
\centering
\includegraphics[width=0.85\linewidth]{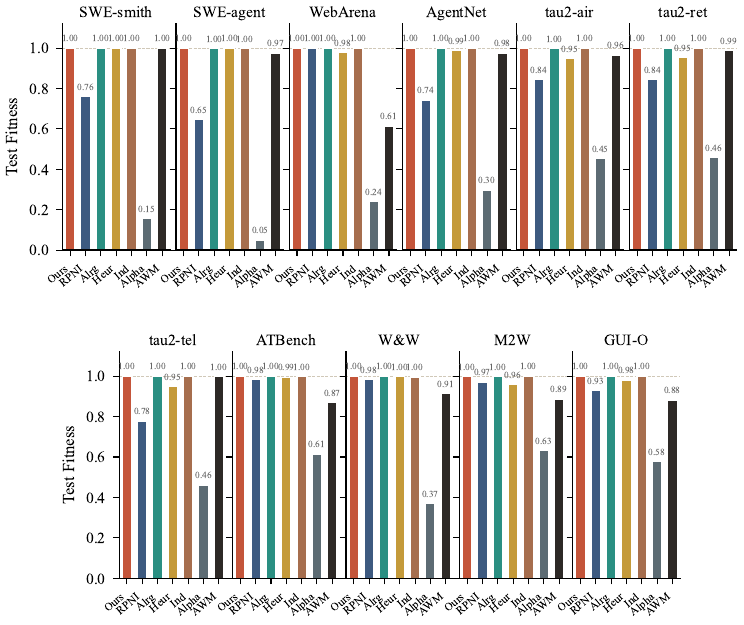}
\caption{\textbf{Per-dataset test fitness} across all methods including Alergia. Our FSM achieves $\geq$0.997 fitness on all panels. Process mining baselines (Heur., Ind.) achieve competitive fitness but with low precision (Table~\ref{tab:precision}).}
\label{fig:baselines}
\end{figure}

\subsection{Implementation Details}\label{app:implementation}

All experiments use a single fixed seed for train/test splitting via seeded Fisher-Yates shuffle.
The 80/20 split is applied consistently across all datasets and baselines.

\paragraph{RPNI.} TypeScript implementation with a 120-second timeout for the merge phase. Without negative examples, RPNI cannot safely merge states, resulting in near-complete prefix trees. The timeout is necessary for SWE-agent (59{,}510 states), GUI-Odyssey (21{,}255 states), and SWE-smith (11{,}631 states).

\paragraph{EDSM.} AALpy~\citep{muskardin2022aalpy} version 1.3.4. Without negative examples, evidence-driven scoring produces no merge candidates, collapsing to a 1-state universal acceptor.

\paragraph{Alergia.} AALpy implementation with Markov Chain automaton type and $\varepsilon = 0.005$ (Hoeffding bound parameter). Lower $\varepsilon$ produces more merging (fewer states). Alergia learns from positive examples only using statistical compatibility testing.

\paragraph{HMM.} Pure NumPy implementation of Baum-Welch EM~\citep{rabiner1989hmm} with 50 iterations and scaled forward-backward to prevent numerical underflow. Number of hidden states set to match our FSM's state count for fair comparison. Replay fitness is computed via Viterbi decoding: a symbol is consumed if the decoded state has nonzero emission probability ($>10^{-6}$). Effective transitions are counted as entries in the transition matrix with probability $>0.01$.

\paragraph{PM4Py.} Traces are converted to XES event log format. We run Alpha Miner, Inductive Miner, and Heuristic Miner with default parameters. Fitness and precision are computed via PM4Py's conformance checking.

\paragraph{AWM.} We re-implement workflow extraction from~\citet{wang2024agent_workflow_memory}. Coverage is measured via longest common subsequence alignment.

\paragraph{Failure prediction.} Feature selection uses L1-regularized logistic regression ($C=0.1$) on the training set to select features with non-zero weights. Final models use class-weighted loss (inverse frequency). Cross-validation uses stratified 5-fold with 10 repetitions, evaluated on training data only (no test data in CV folds).

\paragraph{Runtime.} FSM construction completes in $<$1 second for all datasets. RPNI requires up to 120 seconds (timeout). PM4Py baselines complete in 5--30 seconds.

\subsection{Runtime Details}\label{app:runtime}

\noindent
\begin{minipage}[t]{0.44\linewidth}
\vspace{0pt}
Construction time scales linearly with corpus size (Table~\ref{tab:runtime}). RPNI's quadratic merge loop ($O(n^2)$ state pairs) causes timeouts at 120\,s on most datasets (only Mind2Web completes within the limit): on SWE-agent, RPNI produces 59{,}510 states vs.\ our 25. Per-trace replay is nearly instantaneous ($<$0.015\,ms), which makes FSM-based monitoring practical for real-time agent systems (Figure~\ref{fig:runtime}).
\end{minipage}\hfill
\begin{minipage}[t]{0.54\linewidth}
\vspace{0pt}
\captionof{table}{\textbf{Runtime comparison.} Build: FSM construction. Replay/tr: per-trace replay latency.}
\label{tab:runtime}
\centering
\scriptsize
\tablestyle{3pt}{0.95}
\begin{tabular}{l rr rr r}
\toprule
& \multicolumn{2}{c}{\cellcolor{cOursBg}Ours (ms)} & \multicolumn{2}{c}{RPNI (ms)} & \\
\cmidrule(lr){2-3} \cmidrule(lr){4-5}
Dataset & \cellcolor{cOursBg}Build & \cellcolor{cOursBg}Replay/tr & Build & States & Speedup \\
\midrule
Mind2Web & \cellcolor{cOursBg}1.0 & \cellcolor{cOursBg}0.003 & 6{,}902 & 476 & 6{,}579$\times$ \\
Who\_and\_When & \cellcolor{cOursBg}4.6 & \cellcolor{cOursBg}0.005 & 30{,}005$^\dagger$ & 971 & 6{,}502$\times$ \\
SWE-smith & \cellcolor{cOursBg}30.4 & \cellcolor{cOursBg}0.015 & 30{,}253$^\dagger$ & 11{,}631 & 994$\times$ \\
SWE-agent & \cellcolor{cOursBg}109.8 & \cellcolor{cOursBg}0.008 & 36{,}008$^\dagger$ & 59{,}510 & 328$\times$ \\
\bottomrule
\end{tabular}
\end{minipage}

\begin{figure}[htbp]
\centering
\includegraphics[width=\linewidth]{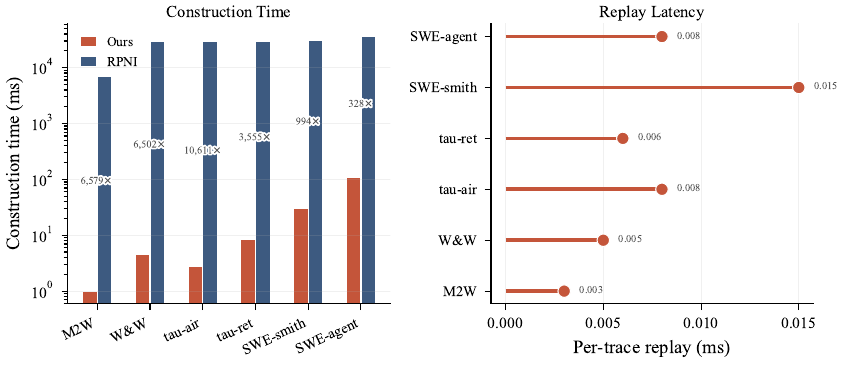}
\caption{\textbf{Runtime comparison.} Left: FSM construction time (1--110\,ms) vs.\ RPNI (7{,}000--36{,}000\,ms) with speedup ratios annotated. Right: per-trace replay latency (0.003--0.015\,ms), enabling real-time monitoring.}
\label{fig:runtime}
\end{figure}

\subsection{Case Study: FSM Visualizations}\label{app:case_studies}

We visualize FSMs for representative datasets to show the behavioral structure our method recovers.

\subsubsection{Customer Service Agent (tau2-bench airline)}

Figure~\ref{fig:example_fsm} illustrates the tau2-bench airline FSM at role-level granularity (6 states, $|\cA|{=}5$).
The full tool-level FSM (18 states, $|\cA|{=}17$, Table~\ref{tab:main}) decomposes further into per-tool states.
At role level, the FSM reveals two distinct behavioral loops:

\begin{enumerate}[leftmargin=*,itemsep=0pt]
\item \textbf{Tool-call loop} (\texttt{assistant:tool\_call} $\leftrightarrow$ \texttt{tool:text}): The agent queries customer databases (get\_reservation, search\_flights) and receives structured responses. This loop executes 2--8 times per trace.
\item \textbf{Conversation loop} (\texttt{assistant:text} $\to$ \texttt{user:text}): The agent communicates results to the user and receives follow-up requests. Failed traces show more conversation turns (mean 4.2 vs 2.8 for successes), which suggests the agent struggles to complete the task.
\end{enumerate}

The FSM makes these patterns structurally visible: the tool-call loop appears as a tight 2-state cycle; the conversation loop passes through the user state.
This decomposition enables per-state analysis (e.g., error rates in the tool state, message lengths in the assistant state) that raw trace analysis obscures.

\subsubsection{Coding Agent (SWE-smith)}

\begin{figure}[htbp]
\centering
\begin{tikzpicture}[
    ->,>=Stealth,
    node distance=1.3cm and 1.8cm,
    state/.style={circle, draw, minimum size=0.6cm, font=\tiny\sffamily, inner sep=1pt},
    every edge/.style={draw, font=\tiny},
    scale=0.85, transform shape
  ]
  \node[state, fill=gray!15] (init) {init};
  \node[state, fill=blue!12, right=of init] (sys) {sys};
  \node[state, fill=green!12, right=of sys] (usr) {usr};
  \node[state, fill=orange!12, right=1.5cm of usr] (bash) {bash};
  \node[state, fill=purple!12, below=1.2cm of bash] (tool) {tool};
  \node[state, fill=yellow!15, left=1.5cm of tool] (edit) {edit};
  \node[state, fill=red!12, below=1.2cm of edit] (sub) {sub};
  \node[state, fill=cyan!12, right=of tool] (txt) {a:txt};
  \node[state, fill=gray!30, below right=0.6cm and 0.4cm of edit] (tstr) {t:str};
  \node[state, fill=gray!30, below left=0.6cm and 0.4cm of bash] (ttc) {t:tc};

  \path (init) edge node[above] {} (sys);
  \path (sys) edge node[above] {} (usr);
  \path (usr) edge node[above] {} (bash);
  \path (bash) edge node[right] {} (tool);
  \path (tool) edge node[above] {} (edit);
  \path (edit) edge[bend left=15] node[left] {} (tool);
  \path (tool) edge node[above] {} (bash);
  \path (tool) edge node[above] {} (sub);
  \path (tool) edge node[above] {} (txt);
  \path (edit) edge node[right] {} (tstr);
  \path (tstr) edge[bend left=15] node[right] {} (edit);
  \path (tstr) edge node[above] {} (bash);
  \path (bash) edge node[left] {} (ttc);
  \path (ttc) edge[bend left=15] node[above] {} (bash);
  \path (ttc) edge node[above] {} (edit);
  \path (edit) edge node[above] {} (ttc);
  \path (sub) edge node[left] {} (tool);
\end{tikzpicture}
\caption{\textbf{SWE-smith FSM} (10 states, 17 transitions). The core cycle is \texttt{bash}$\to$\texttt{tool}$\to$\texttt{edit}$\to$\texttt{tool}$\to$\texttt{bash}: agents execute commands, inspect results, make edits, and repeat. The \texttt{submit} state is a terminal action reached after successful editing.}
\label{fig:swesmith_fsm}
\end{figure}

The SWE-smith FSM (Figure~\ref{fig:swesmith_fsm}) captures the coding agent's workflow across 10 states.
The \texttt{bash}$\to$\texttt{tool}$\to$\texttt{str\_replace\_editor}$\to$\texttt{tool} cycle dominates: the agent executes bash commands, inspects output, applies code edits, and verifies results.
Failed traces (123/500) show higher visit counts in the edit-tool loop (mean 8.4 vs 5.1 for successes) and elevated error rates in the \texttt{str\_replace\_editor} state (0.31 vs 0.12), which suggests repeated failed edit attempts.

\subsubsection{Coding Agent at Scale (SWE-agent)}

The SWE-agent FSM has 25 states derived from 2{,}000 traces with 175 raw commands grouped into 7 categories.
The larger state space (compared to SWE-smith's 10) reflects the richer command vocabulary and longer traces (mean 87 steps vs 23).
Key structural features include:

\begin{itemize}[leftmargin=*,itemsep=0pt]
\item A \textbf{search-navigate cycle} (\texttt{search} $\leftrightarrow$ \texttt{navigate}): agents find relevant files and navigate to specific locations.
\item An \textbf{edit-execute cycle} (\texttt{edit} $\leftrightarrow$ \texttt{execute}): agents modify code and run tests.
\item A \textbf{submit terminal}: successful traces reach the \texttt{submit} state, the strongest failure-prediction feature.
\end{itemize}

The 2{,}380$\times$ compression (59{,}510 RPNI states $\to$ 25 FSM states) demonstrates that even agents with complex command vocabularies exhibit a small number of distinct behavioral modes when commands are grouped by semantic function.

\section{Failure Analysis}\label{app:group:failure}

\subsection{Structural Divergence of Success vs.\ Failure}\label{app:structural_divergence}

Separate FSMs for successful and failed traces reveal qualitative structural differences:
on SWE-agent, the success FSM uses only 9 of 25 states with 16 transitions (focused path: search-edit-execute-submit), while the failure FSM spans all 25 states with 60 transitions (chaotic exploration).
The transition Jaccard similarity is 0.206, indicating largely disjoint behavioral structures; all 16 failure-only states correspond to rare tool output parsing variants (e.g., \texttt{user:tool:of}, \texttt{user:versioneer}) that successful traces never encounter:
on tau2-bench, both classes produce structurally identical FSMs (discrimination gap $= 0$), which confirms that failure on constrained API-calling tasks manifests purely in transition frequencies rather than novel states.
Failed traces are consistently longer across all datasets (+29 steps on SWE-agent, +18 on SWE-smith) but visit the same or fewer unique states, suggesting that failure shows up as cycling through familiar states rather than exploring new ones.

These structural differences enable simple monitoring rules without ML models: on SWE-agent, ``cycle rate $> 0.885$'' achieves 95.6\% precision (\S\ref{app:monitoring}).

\subsection{Failure Prediction Numerical Results}\label{app:failure_prediction}

\begin{wraptable}{r}{0.40\textwidth}
\vspace{-\baselineskip}
\centering
\caption{\textbf{Failure prediction AUROC} (held-out). Main: full GBT pipeline. FSM-D ablation: training-free LR, Len+Ent vs.\ $+$per-state KL ($\Delta$ = gain). SWE-smith synthetic; tau2-bench aggregates 4 models.}
\label{tab:failure}\label{tab:fsm_d}
\scriptsize
\tablestyle{4pt}{1.05}
\begin{tabular}{@{}l r r rr r@{}}
\toprule
& & Main & \multicolumn{3}{c}{FSM-D ablation} \\
\cmidrule(lr){4-6}
Dataset & $|\cA|$ & Holdout & Len+Ent & \cellcolor{cOursBg}FSM-D & $\Delta$ \\
\midrule
tau2-bench (tel) & 42 & \hlbest{0.941} & 0.725 & \cellcolor{cOursBg}\hlbest{0.857} & \textbf{+0.132} \\
WebArena         & 24 & 0.903          & 0.718 & \cellcolor{cOursBg}\hlbest{0.773} & +0.056 \\
AgentNet         & 24 & 0.890          & 0.724 & \cellcolor{cOursBg}0.724          & +0.000 \\
ATBench          & 14 & 0.894          & 0.555 & \cellcolor{cOursBg}\hlbest{0.703} & \textbf{+0.147} \\
tau2-bench (air) & 17 & 0.864          & 0.625 & \cellcolor{cOursBg}\hlbest{0.777} & \textbf{+0.152} \\
SWE-agent        & 24 & 0.799          & 0.690 & \cellcolor{cOursBg}0.689          & $-$0.001 \\
tau2-bench (ret) & 18 & 0.779          & 0.596 & \cellcolor{cOursBg}\hlbest{0.660} & +0.064 \\
OSWorld          & 26 & 0.774          & 0.770 & \cellcolor{cOursBg}\hlbest{0.778} & +0.008 \\
\midrule
SWE-smith        &  9 & 0.703          & 0.695 & \cellcolor{cOursBg}0.702          & +0.006 \\
\bottomrule
\end{tabular}
\end{wraptable}

\subsection{Discriminative Quotient (FSM-D)}\label{app:fsm_d}

The standard construction (Theorem~\ref{prop:minimal}) merges trie states by their incoming activity (the last-activity right congruence).
We additionally explore a \emph{discriminative} variant that conditions on the trace label: at training time, partition traces into success and failure subsets, and for each activity $a$ compute the outgoing transition distributions $P^{+}(\cdot \mid a)$ and $P^{-}(\cdot \mid a)$.
The Kullback--Leibler divergence $\mathrm{KL}(P^{+} \| P^{-})$ at state $a$ measures how much the success distribution deviates from the failure distribution; a large value marks the activity as a behavioral choke point separating the two classes.

The discriminative features alone (no learned classifier beyond logistic regression on 10 inputs) lift held-out AUROC over a length+entropy baseline by $+$0.06 to $+$0.15 on five of nine datasets, with the largest gains on datasets whose activities have the most discriminative outgoing distributions (mean KL on tau2-bench airline: 0.014; ATBench: 0.020; OSWorld: 0.136 with 3 of 26 activities at KL$>$0.3).
Datasets with near-uniform outgoing distributions across success and failure (SWE-agent: mean KL 0.001; AgentNet: 0.000) show no FSM-D gain, consistent with the discriminative signal being a property of the agent's behavioral divergence rather than a universal lift.
The FSM-D variant is a complement to the cross-entropy anomaly features used in the main results: where outgoing distributions diverge, FSM-D contributes a training-free signal; where they do not, the main pipeline's per-state visit features and the trace cross-entropy carry the predictive load.

\subsection{Alergia Features under Matched Pipeline}\label{app:alergia_downstream}

\begin{wraptable}{r}{0.35\textwidth}
\vspace{-\baselineskip}
\centering
\caption{\textbf{Failure prediction with Alergia FSMs under matched pipeline.} Identical features, classifier, and CV protocol; FSM source varies. CV: 10$\times$5-fold; Holdout: held-out test AUROC.}
\label{tab:alergia_downstream}
\scriptsize
\tablestyle{4pt}{1.05}
\begin{tabular}{@{}l cc cc@{}}
\toprule
& \multicolumn{2}{c}{CV AUROC} & \multicolumn{2}{c}{Holdout AUROC} \\
\cmidrule(lr){2-3} \cmidrule(lr){4-5}
Dataset & \cellcolor{cOursBg}Ours & Alergia & \cellcolor{cOursBg}Ours & Alergia \\
\midrule
tau2-bench (tel) & \cellcolor{cOursBg}\hlbest{0.923} & 0.748 & \cellcolor{cOursBg}\hlbest{0.915} & 0.752 \\
WebArena         & \cellcolor{cOursBg}\hlbest{0.864} & 0.844 & \cellcolor{cOursBg}\hlbest{0.888} & 0.868 \\
AgentNet         & \cellcolor{cOursBg}0.871 & 0.871 & \cellcolor{cOursBg}0.872 & 0.872 \\
SWE-agent        & \cellcolor{cOursBg}\hlbest{0.790} & 0.709 & \cellcolor{cOursBg}\hlbest{0.805} & 0.714 \\
tau2-bench (air) & \cellcolor{cOursBg}\hlbest{0.792} & 0.664 & \cellcolor{cOursBg}\hlbest{0.826} & 0.764 \\
tau-bench (ret)  & \cellcolor{cOursBg}\hlbest{0.789} & 0.606 & \cellcolor{cOursBg}\hlbest{0.666} & 0.498 \\
tau-bench (air)  & \cellcolor{cOursBg}\hlbest{0.758} & 0.651 & \cellcolor{cOursBg}\hlbest{0.841} & 0.810 \\
tau2-bench (ret) & \cellcolor{cOursBg}\hlbest{0.713} & 0.576 & \cellcolor{cOursBg}\hlbest{0.743} & 0.575 \\
SWE-smith        & \cellcolor{cOursBg}\hlbest{0.685} & 0.653 & \cellcolor{cOursBg}0.673 & 0.683 \\
\bottomrule
\end{tabular}
\end{wraptable}
We apply the identical feature extraction and classifier pipeline to Alergia-extracted FSMs (1.0--6.0$\times$ more states than ours; \S\ref{sec:exp:main}), to check that the downstream gain is not just an artefact of state-count differences. With logistic regression on 34--175 per-state features (visit frequency, mean/max message length, error rate, temporal entropy), 10$\times$5-fold CV on training data, our FSM beats Alergia on 8 of 9 labeled datasets at the time of this comparison (tying on AgentNet; Table~\ref{tab:alergia_downstream}), with the largest margin ($+$0.18) on tau-bench retail.
The additional Alergia states from statistical merging dilute per-state observation counts rather than improving them, consistent with the $O(1/\sqrt{n_q})$ estimator bound (Proposition~\ref{prop:estimator}).

The current main-text headline numbers (\S\ref{sec:exp:downstream}; up to 0.941 holdout) use a richer cross-entropy anomaly feature set with a gradient-boosted classifier; we expect the same direction of comparison (ours $>$ Alergia) under that pipeline because the bottleneck for Alergia features is per-state observation count, not feature engineering.

\subsection{Failure Prediction Feature Analysis}\label{app:features}

Table~\ref{tab:top_features} presents the top 10 features by absolute L1-regularized logistic regression weight for each labeled dataset.

\begin{table}[htbp]
\centering
\caption{\textbf{Top failure prediction features} by L1-regularized weight. Positive weight = associated with success; negative = associated with failure.}
\label{tab:top_features}
\scriptsize
\begin{adjustbox}{max width=\textwidth}
\begin{tabular}{@{}l l r l@{}}
\toprule
Dataset & Feature & Weight & Interpretation \\
\midrule
\multirow{5}{*}{SWE-agent}
 & terminal:submit:text & +0.292 & Reaching submit state \\
 & avgMsgLen:edit:text & $-$0.231 & Long edit messages \\
 & uniqueStatesVisited & +0.217 & Exploring more states \\
 & earlyHalfEntropy & +0.215 & Diverse early behavior \\
 & errorRate:edit:text & $-$0.170 & Edit errors \\
\midrule
\multirow{5}{*}{SWE-smith}
 & maxMsgLen:tool:text & $-$0.290 & Long tool output \\
 & avgMsgLen:str\_replace\_editor & $-$0.280 & Long edit commands \\
 & errorRate:str\_replace\_editor & $-$0.200 & Edit errors \\
 & lateHalfEntropy & $-$0.190 & Chaotic late behavior \\
 & selfLoopDrift & +0.165 & Increasing self-loops \\
\bottomrule
\end{tabular}
\end{adjustbox}
\end{table}

\paragraph{Interpretable patterns.} The features reveal consistent failure signals across datasets:
\begin{itemize}[leftmargin=*,itemsep=0pt]
\item \textbf{Not reaching terminal states} (SWE-agent: submit state weight +0.29): failed agents get stuck in intermediate loops.
\item \textbf{Elevated error rates} (SWE-smith: editor error rate $-$0.20): failed agents encounter more errors per state.
\item \textbf{Verbose responses} (tau2-bench airline: avgMsgLen:assistant): longer responses correlate with task difficulty and failure.
\item \textbf{Temporal entropy} (SWE-smith: lateHalfEntropy $-$0.19): chaotic second-half behavior shows the agent struggling.
\end{itemize}

\begin{wraptable}{r}{0.35\linewidth}
\vspace{-\baselineskip}
\centering
\caption{Failure prediction: neural models on sequence vs.\ FSM features. CV AUROC (10$\times$5-fold). Bold: FSM$>$Seq for same model.}
\label{tab:neural_probe}
\scriptsize
\tablestyle{2.5pt}{1.02}
\begin{tabular}{@{}l cc cc cc@{}}
\toprule
& \multicolumn{2}{c}{MLP} & \multicolumn{2}{c}{GRU} & \multicolumn{2}{c}{Transformer} \\
\cmidrule(lr){2-3} \cmidrule(lr){4-5} \cmidrule(lr){6-7}
Dataset & Seq & \cellcolor{cOursBg}FSM & Seq & \cellcolor{cOursBg}FSM & Seq & \cellcolor{cOursBg}FSM \\
\midrule
SWE-smith & 0.663 & \cellcolor{cOursBg}\textbf{0.699} & 0.683 & \cellcolor{cOursBg}\textbf{0.700} & 0.649 & \cellcolor{cOursBg}\textbf{0.665} \\
SWE-agent & 0.782 & \cellcolor{cOursBg}\textbf{0.790} & 0.762 & \cellcolor{cOursBg}\textbf{0.793} & 0.779 & \cellcolor{cOursBg}0.751 \\
tau2-tel & 0.962 & \cellcolor{cOursBg}\textbf{0.969} & 0.958 & \cellcolor{cOursBg}\textbf{0.970} & 0.964 & \cellcolor{cOursBg}\textbf{0.967} \\
tau2-air & 0.816 & \cellcolor{cOursBg}\textbf{0.839} & 0.793 & \cellcolor{cOursBg}\textbf{0.815} & 0.819 & \cellcolor{cOursBg}\textbf{0.845} \\
tau2-ret & 0.747 & \cellcolor{cOursBg}\textbf{0.800} & 0.698 & \cellcolor{cOursBg}\textbf{0.789} & 0.755 & \cellcolor{cOursBg}\textbf{0.798} \\
WebArena & 0.848 & \cellcolor{cOursBg}\textbf{0.861} & 0.818 & \cellcolor{cOursBg}\textbf{0.835} & 0.860 & \cellcolor{cOursBg}\textbf{0.868} \\
AgentNet & 0.878 & \cellcolor{cOursBg}\textbf{0.920} & 0.877 & \cellcolor{cOursBg}\textbf{0.917} & 0.876 & \cellcolor{cOursBg}\textbf{0.917} \\
\midrule
FSM wins & \multicolumn{2}{c}{7/7} & \multicolumn{2}{c}{7/7} & \multicolumn{2}{c}{6/7} \\
\bottomrule
\end{tabular}
\end{wraptable}
These patterns are only visible through the FSM's state decomposition: raw trace-level features (total message length, total error count) do not capture which behavioral mode produced the errors.
Notably, the two most discriminative feature families (per-state error rates and temporal entropy splits) appear in the top 5 across labeled datasets despite their different domains (coding, API-calling), suggesting that FSM-conditioned features transfer well across agent architectures.

\subsection{Neural Baselines: Sequence vs.\ FSM Features}\label{app:neural_probe}

We compare in Table~\ref{tab:neural_probe} neural models (MLP, GRU~\citep{cho2014gru}, Transformer~\citep{vaswani2017transformer}) trained on sequence features (bag-of-activities + length + entropy) vs.\ FSM per-state features. All models use embed/hidden=64/128, class weighting, early stopping (patience 10), and the same 10$\times$5-fold CV protocol.
Across this appendix, we retain SWE-smith's numbers even when they are the weakest per-dataset value: the dataset is synthetic and below the sample threshold of Proposition~\ref{prop:estimator}, so its weaker AUROCs (0.62--0.72 across sub-ablations) are both expected and consistent with the main-text framing that FSM features generalize more strongly to real LLM-deployment traces.

FSM features improve every neural model on every dataset for MLP and GRU (7/7), and 6/7 for Transformer: the largest gains appear on tau2-bench retail (+5.3pp MLP, +9.1pp GRU) and AgentNet (+4.2pp MLP, +4.0pp GRU), where the FSM's per-state decomposition localizes behavioral differences that aggregate statistics miss.
The Transformer loss on SWE-agent ($-2.8$pp) occurs where the small alphabet (24 symbols) limits the FSM's decomposition advantage.
The improvement is consistent regardless of dataset size (500--8{,}337 traces): even on WebArena (8{,}337 traces), FSM features provide +1.3pp (MLP), +1.7pp (GRU), and +0.8pp (Transformer), demonstrating that the FSM captures structural information beyond what large-sample statistics recover.

\subsection{Precision Analysis}\label{app:precision}

\begin{wraptable}{l}{0.32\textwidth}
\vspace{-12pt}
\centering
\caption{\textbf{Precision analysis.} Rand./Perm.: acceptance rate (lower is better).}
\label{tab:precision}
\scriptsize
\setlength{\tabcolsep}{3pt}
\begin{tabular}{@{}l cc cc@{}}
\toprule
& \multicolumn{2}{c}{\cellcolor{cOursBg}Ours} & \multicolumn{2}{c}{RPNI} \\
\cmidrule(lr){2-3} \cmidrule(lr){4-5}
Dataset & \cellcolor{cOursBg}Rand. & \cellcolor{cOursBg}Perm. & Rand. & Perm. \\
\midrule
SWE-smith & \cellcolor{cOursBg}0.0\% & \cellcolor{cOursBg}0.1\% & 0.0\% & 0.0\% \\
SWE-agent & \cellcolor{cOursBg}0.0\% & \cellcolor{cOursBg}0.05\% & 0.0\% & 1.6\% \\
WebArena & \cellcolor{cOursBg}0.0\% & \cellcolor{cOursBg}0.02\% & 0.2\% & 75.3\% \\
AgentNet & \cellcolor{cOursBg}0.0\% & \cellcolor{cOursBg}0.02\% & 0.2\% & 0.9\% \\
tau2-bench (air) & \cellcolor{cOursBg}0.0\% & \cellcolor{cOursBg}0.0\% & 0.0\% & 11.5\% \\
tau2-bench (ret) & \cellcolor{cOursBg}0.0\% & \cellcolor{cOursBg}0.0\% & 0.0\% & 3.1\% \\
tau2-bench (tel) & \cellcolor{cOursBg}0.0\% & \cellcolor{cOursBg}0.0\% & 0.0\% & 0.0\% \\
\bottomrule
\end{tabular}
\vspace{-8pt}
\end{wraptable}
Our FSM rejects 100\% of random traces and $\geq$99.9\% of permuted traces on all eight labeled real-trace datasets (Table~\ref{tab:precision}). RPNI, despite its large state space, accepts 75\% of permuted traces on WebArena and 0--11.5\% on tau2-bench, since its prefix tree memorizes observed prefixes but does not constrain orderings and thus admits structurally invalid permutations that our FSM rejects.

\paragraph{Adversarial trace rejection.}
\begin{wraptable}{r}{0.39\textwidth}
\vspace{-\baselineskip}
\caption{Adversarial trace rejection rate (\%, $\uparrow$). Five mutations per trace.}
\label{tab:adversarial}
\centering
\scriptsize
\tablestyle{4pt}{1.05}
\begin{tabular}{l ccccc}
\toprule
Dataset & Subst. & Insert & Delete & Swap & Suffix \\
\midrule
SWE-smith & 81 & 100 & 98 & 100 & 100 \\
SWE-agent & 87 & 100 & 97 & 100 & 97 \\
tau2-tel & 77 & 96 & 61 & 93 & 100 \\
tau2-air & 78 & 90 & 77 & 93 & 97 \\
tau2-ret & 79 & 92 & 80 & 95 & 100 \\
WebArena & 100 & 96 & 84 & 100 & 68 \\
AgentNet & 100 & 100 & 97 & 100 & 99 \\
\bottomrule
\end{tabular}
\end{wraptable}
Beyond random and permuted traces, we test whether the FSM rejects \textit{plausible} mutations that preserve local structure: single-symbol substitution (replace one activity with another symbol), insertion, deletion, adjacent swap, and suffix shuffle (shuffle the last 30\% of the trace). Table~\ref{tab:adversarial} reports rejection rates (fitness $< 1.0$) across five mutations per test trace.

The FSM rejects 90--100\% of insertions and adjacent swaps on all datasets, confirming that it captures sequential ordering constraints beyond symbol membership.
Substitution rejection (77--100\%) shows that most single-activity changes violate the learned transition structure.
Deletion is weakest on tau2-bench datasets (61--80\%) because shorter traces are more likely to remain valid prefixes.
These results demonstrate that the compact FSM imposes tight structural constraints: even single-symbol perturbations are detected, because the transition function encodes which activity can follow which, not merely which activities are valid.
For example, on tau2-bench airline, swapping \texttt{user:text}$\leftrightarrow$\texttt{assistant:tool:get\_reservation\_details} at position 3 is immediately rejected: after state \texttt{assistant:text}, the only valid transition is \texttt{user:text} (the user must respond before the agent can call a tool).
On SWE-agent, swapping \texttt{navigate:text}$\leftrightarrow$\texttt{user:text} fails because self-transitions back to \texttt{user:text} are not in the FSM's transition table.
These rejections reflect genuine turn-taking and tool-invocation constraints that the FSM learns from data.


\subsection{Process Mining Precision Details}\label{app:pm4py}

Table~\ref{tab:pm4py_detail} reports per-miner fitness and precision for the three PM4Py baselines across all twelve datasets, which span coding, web, GUI, desktop, and API domains.
Alpha Miner fails on all datasets (fitness 0.05--0.63) because it cannot handle noise or skip patterns.
The Heuristic Miner achieves 0.95--1.00 fitness but precision 0.20--0.80 (mean 0.45); the Inductive Miner achieves perfect fitness on 8/10 datasets but even lower precision (0.10--0.46, mean 0.25).
The highest precision (0.80, tau2-bench telecom) occurs on the most constrained workflow: fitness alone is misleading here; only precision separates genuine structure from over-general acceptors.

\begin{table}[htbp]
\centering
\caption{\textbf{PM4Py miner results.} Fit: replay fitness. Prec: precision from conformance checking. p/t: Petri net places/transitions.}
\label{tab:pm4py_detail}
\scriptsize
\tablestyle{3pt}{1.02}
\begin{tabular}{@{}l ccc ccc ccc@{}}
\toprule
& \multicolumn{3}{c}{Alpha Miner} & \multicolumn{3}{c}{Heuristic Miner} & \multicolumn{3}{c}{Inductive Miner} \\
\cmidrule(lr){2-4} \cmidrule(lr){5-7} \cmidrule(lr){8-10}
Dataset & p/t & Fit & Prec & p/t & Fit & Prec & p/t & Fit & Prec \\
\midrule
Who\_and\_When & 3/8 & 0.37 & 0.19 & 10/23 & 1.00 & 0.31 & 16/24 & 1.00 & 0.27 \\
SWE-smith & 7/9 & 0.15 & 0.20 & 13/22 & 1.00 & 0.36 & 25/33 & 1.00 & 0.23 \\
Mind2Web & 2/7 & 0.63 & 0.29 & 15/29 & 0.96 & 0.45 & 23/30 & 1.00 & 0.32 \\
SWE-agent & 12/24 & 0.05 & 0.00 & 21/59 & 1.00 & 0.20 & 44/70 & 1.00 & 0.13 \\
\addlinespace[2pt]
tau2-bench (air) & 3/17 & 0.45 & 0.24 & 15/37 & 0.95 & 0.20 & 63/93 & 1.00 & 0.17 \\
tau2-bench (ret) & 3/18 & 0.46 & 0.23 & 12/32 & 0.95 & 0.21 & 36/58 & 1.00 & 0.14 \\
tau2-bench (tel) & 4/4 & 0.46 & 0.25 & 10/13 & 0.95 & \hlbest{0.80} & 15/19 & 1.00 & 0.42 \\
\addlinespace[2pt]
GUI-Odyssey & 2/6 & 0.58 & 0.33 & 12/22 & 0.98 & \hlbest{0.64} & 21/27 & 1.00 & 0.46 \\
WebArena & 23/24 & 0.24 & 0.11 & 35/71 & 0.98 & 0.55 & 17/42 & 1.00 & 0.10 \\
AgentNet & 174/24 & 0.30 & 0.00 & 35/84 & 0.99 & 0.45 & 18/39 & 1.00 & 0.13 \\
\bottomrule
\end{tabular}
\end{table}

\begin{figure}[htbp]
\centering
\includegraphics[width=0.9\linewidth]{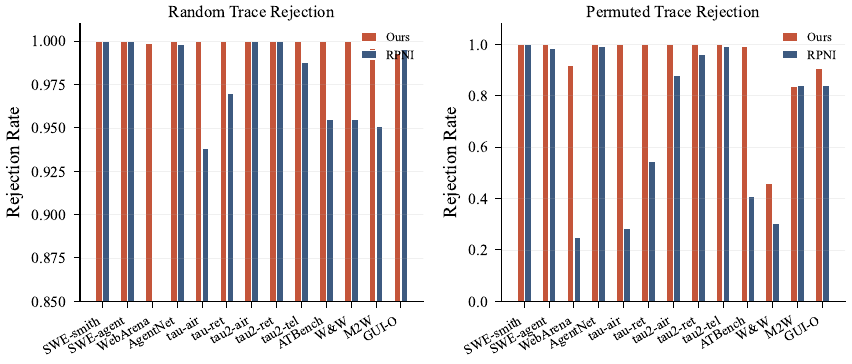}
\caption{\textbf{Random and permuted trace rejection} rates. Our FSM achieves near-100\% rejection across most datasets, while RPNI shows poor permuted rejection on WebArena.}
\label{fig:precision}
\end{figure}
See Appendix~\ref{app:failmodes} for additional failure mode characterization including cascade analysis and early divergence detection.

\section{Convergence and Structural Properties}\label{app:group:convergence}

\subsection{Convergence Details}\label{app:convergence}\label{sec:exp:stability}

\begin{wraptable}{r}{0.58\textwidth}
\vspace{-\baselineskip}
\centering
\caption{\textbf{Convergence and generalization.} Left: fraction at which fitness reaches 0.99. Right: train$-$test fitness gap at increasing fractions (all within $\pm$0.003). Convergence behavior is similar across extraction levels (Appendix~\ref{app:granularity}).}
\label{tab:convergence_detail}
\label{tab:gengap}
\scriptsize
\tablestyle{3pt}{1.02}
\begin{tabular}{@{}l rr rr cccc@{}}
\toprule
& \multicolumn{4}{c}{Convergence} & \multicolumn{4}{c}{Gen.\ gap (train$-$test)} \\
\cmidrule(lr){2-5} \cmidrule(lr){6-9}
Dataset & Train & 0.99 at & Frac & $|Q|$ & 10\% & 30\% & 60\% & 100\% \\
\midrule
Who\_and\_When & 147 & 8 & 5\% & 9 & +0.002 & +0.001 & 0 & 0 \\
SWE-smith & 400 & 40 & 10\% & 10 & $-$0.002 & $-$0.001 & 0 & $-$0.001 \\
Mind2Web & 400 & 20 & 5\% & 8 & $-$0.003 & $-$0.001 & 0 & 0 \\
SWE-agent & 1{,}600 & 240 & 15\% & 25 & $-$0.002 & $-$0.001 & $-$0.001 & +0.001 \\
\bottomrule
\end{tabular}
\end{wraptable}
Fitness converges rapidly as training traces are added: on all datasets, fitness reaches 0.99 with 5--15\% of training data (Figures~\ref{fig:evolution},~\ref{fig:convergence}).
On SWE-agent (2{,}000 traces), fitness reaches 0.99 at 240 traces (15\%), though the state space grows to 25 as rare command patterns appear.
The construction is deterministic (Theorem~\ref{prop:minimal}), so a fixed corpus yields a unique FSM; across random splits our state counts stay within a few states of the full-data value (rare activities present in only some samples account for the residual), whereas RPNI state counts vary by 2--10\% (Appendix~\ref{app:stability_detail}).

Table~\ref{tab:convergence_detail} reports the training fraction at which fitness first reaches 0.99, the final state count, and the generalization gap (train $-$ test fitness) at increasing training fractions.

Smaller datasets converge at 5\% of training data; SWE-agent requires 15\% to capture rare commands.
All generalization gaps are within $\pm$0.003, confirming zero overfitting.
Slightly negative gaps arise because training sets include rare transitions that reduce average fitness.

\begin{figure}[t]
\centering
\includegraphics[width=\linewidth]{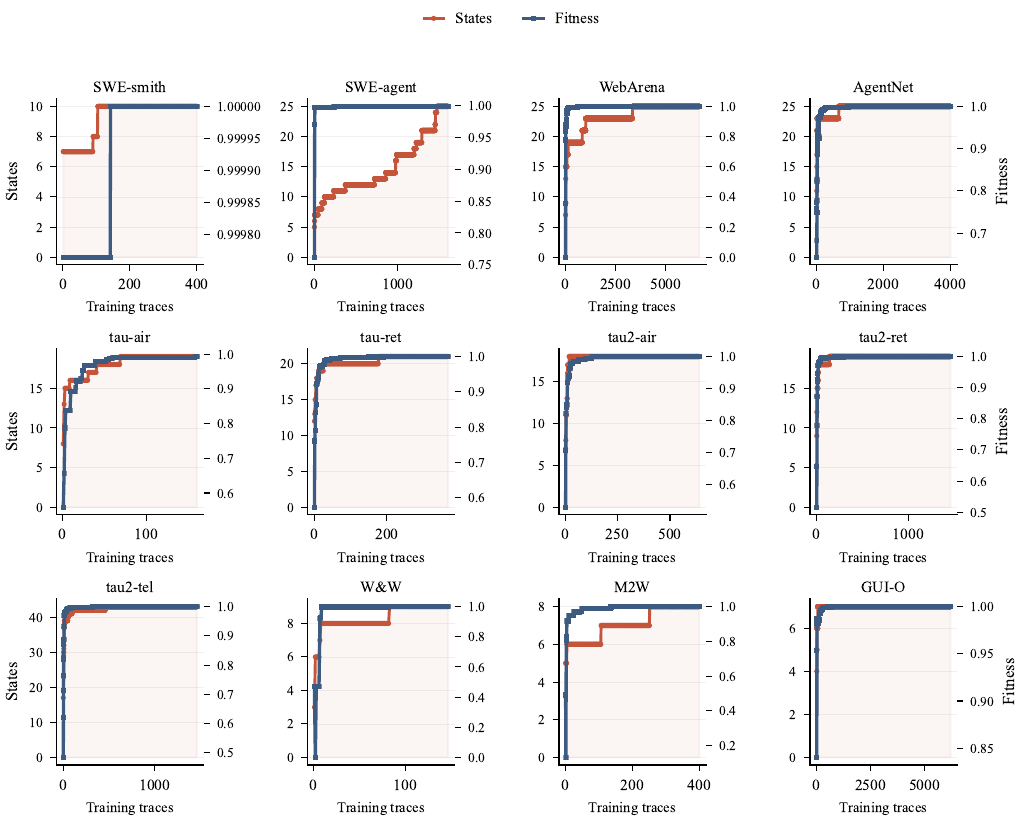}
\caption{\textbf{FSM convergence:} test fitness (right axis) and state count (left axis) as training traces are added. Fitness converges rapidly; the state space stabilizes later as rare patterns appear.}
\label{fig:convergence}
\end{figure}

\subsection{Stability and SCC Structure}\label{app:stability_detail}\label{app:scc}

The construction is deterministic and hyperparameter-free (Theorem~\ref{prop:minimal}): a fixed training corpus yields a unique FSM, so re-extraction is exactly reproducible.
Across random train/test splits our FSM state counts stay within a few states of the full-data value; the residual reflects rare activities (e.g.\ SWE-agent's late-appearing commands) that occur in only some 80\% samples, whereas RPNI state counts vary by 2--10\% (hundreds to thousands of states).
Structurally, every FSM decomposes into one large strongly-connected component (the behavioral core) and a short prefix; the condensation DAG is shallow (depth 2--4), and init$\to$setup$\to$core traces the universal agent lifecycle.
SCC-based features achieve AUROC 0.60--0.66: failures tend to become trapped in the core loop rather than progressing to terminal states.

\subsection{Entropy Rate Analysis}\label{app:entropy}

We estimate the conditional entropy $H(X_n \mid X_{n-1}, \ldots, X_{n-k})$ of the activity sequence at increasing orders $k$ to characterize the sequential structure of agent traces.
Table~\ref{tab:entropy_rate} shows the entropy rate convergence across all datasets.

\begin{wraptable}{l}{0.42\textwidth}
\vspace{-12pt}
\centering
\caption{\textbf{Conditional entropy} (bits) by context order. The large drop from order 0$\to$1 and convergence by order 2--3 shows strong sequential regularity.}
\label{tab:entropy_rate}
\footnotesize
\setlength{\tabcolsep}{3pt}
\begin{tabular}{@{}l cccccc@{}}
\toprule
Dataset & $k{=}0$ & $k{=}1$ & $k{=}2$ & $k{=}3$ & Drop 0$\to$1 \\
\midrule
SWE-agent & 2.16 & 1.06 & 0.80 & 0.79 & 51\% \\
SWE-smith & 1.98 & 0.63 & 0.55 & 0.55 & 68\% \\
Mind2Web & 1.69 & 0.74 & 0.79 & 0.68 & 56\% \\
\bottomrule
\end{tabular}
\vspace{-8pt}
\end{wraptable}
On all datasets, entropy drops 51--68\% from order 0 to 1 (Table~\ref{tab:entropy_rate}), and order-2 vs.\ order-3 estimates differ by $<$0.02 bits on 2/3 datasets: the drop shows that agent behavior is predominantly determined by the immediately preceding action, which explains why compact FSMs achieve near-perfect fitness: SWE-agent shows the most residual higher-order structure (0.80$\to$0.79 bits). Success vs.\ failure entropy rates are nearly identical (within 0.03 bits), indicating shared behavioral topology with differences in transition frequencies.

\subsection{Feature Redundancy Analysis}\label{app:correlation}

We compute pairwise Pearson correlations among 14 structural features to identify redundancy.
We group features with $|r| > 0.8$ into clusters, and a greedy selection retains the feature with highest individual AUROC from each cluster.

Across labeled datasets, 14 features consistently collapse into 5--8 non-redundant clusters:
the largest cluster (5--7 features) contains \{traceLen, logLength, entropy, transitionDiversity, maxConsecRatio, recurrenceRate\}; these are all manifestations of trace length and its correlates.
Removing all redundant features and retaining only one representative per cluster yields combined AUROC of 0.655 (SWE-smith) and 0.713 (SWE-agent):
on SWE-agent, the non-redundant subset achieves 90\% of the full-feature AUROC, so the failure prediction signal is genuine and concentrated in a small number of independent behavioral dimensions: cycle structure, state entropy, and terminal state reachability.

\section{Prediction and Robustness}\label{app:group:prediction}
\begin{wraptable}{r}{0.28\textwidth}
\vspace{-\baselineskip}
\centering
\caption{\textbf{Early prediction} AUROC (holdout) by trace completion fraction. FSM vs.\ raw-statistic baseline.}
\label{tab:early}
\scriptsize
\tablestyle{3pt}{1.02}
\begin{tabular}{@{}l cccc@{}}
\toprule
Fraction & \multicolumn{2}{c}{SWE-agent} & \multicolumn{2}{c}{SWE-smith} \\
\cmidrule(lr){2-3} \cmidrule(lr){4-5}
& \cellcolor{cOursBg}FSM & Base & \cellcolor{cOursBg}FSM & Base \\
\midrule
10\% & \cellcolor{cOursBg}\textbf{0.657} & 0.400 & \cellcolor{cOursBg}\textbf{0.722} & 0.416 \\
25\% & \cellcolor{cOursBg}\textbf{0.656} & 0.456 & \cellcolor{cOursBg}0.687 & \textbf{0.693} \\
50\% & \cellcolor{cOursBg}\textbf{0.726} & 0.677 & \cellcolor{cOursBg}\textbf{0.681} & 0.668 \\
75\% & \cellcolor{cOursBg}\textbf{0.726} & 0.672 & \cellcolor{cOursBg}\textbf{0.688} & 0.685 \\
100\% & \cellcolor{cOursBg}\textbf{0.773} & 0.753 & \cellcolor{cOursBg}0.682 & \textbf{0.692} \\
\bottomrule
\end{tabular}
\end{wraptable}

\subsection{Early Prediction Details}\label{app:early}

Table~\ref{tab:early} presents early prediction AUROC at each trace completion fraction: FSM features from partial traces at 50\% completion already achieve 92\% of full-trace AUROC on SWE-agent (0.722 vs.\ 0.784, CV; holdout values in Table~\ref{tab:early}: 0.726 vs.\ 0.773), which shows the behavioral signal emerging early in execution.
The FSM advantage over aggregate baselines grows with behavioral complexity: on SWE-agent, FSM features outperform by +0.026 to +0.257 at all stages beyond 25\%, while on SWE-smith the simpler FSM offers no advantage.

On SWE-agent, FSM features outperform baselines by +0.026 to +0.257 at all stages.
On SWE-smith, the simpler FSM offers no consistent advantage, suggesting that the FSM benefit scales with behavioral complexity.

\subsection{Activity Granularity Robustness}\label{app:granularity}

\begin{wraptable}{l}{0.42\textwidth}
\vspace{-12pt}
\centering
\caption{\textbf{Activity granularity} robustness. $|\cA|$: alphabet size. Fit: test fitness. AUROC: failure prediction (entropy-based).}
\label{tab:granularity}
\footnotesize
\tablestyle{3pt}{1.02}
\begin{tabular}{@{}l l rrr@{}}
\toprule
Dataset & Level & $|\cA|$ & Fit & AUROC \\
\midrule
\multirow{4}{*}{SWE-smith}
 & role-only & 4 & 1.000 & 0.688 \\
 & role-type & 9 & 1.000 & 0.692 \\
 & role-action & 9 & 1.000 & 0.692 \\
 & tool-only & 9 & 1.000 & 0.692 \\
\midrule
\multirow{4}{*}{SWE-agent}
 & role-only & 2 & 1.000 & 0.663 \\
 & role-type & 18 & 0.999 & 0.659 \\
 & role-action & 18 & 0.999 & 0.659 \\
 & tool-only & 18 & 0.999 & 0.659 \\
\bottomrule
\end{tabular}
\end{wraptable}

The extraction function $\phi$ is a design choice. We test four granularity levels: \textbf{role-only} ($|\cA|{=}2$--$4$, distinguishing only message roles), \textbf{role-type} (role + content type), \textbf{role-action} (role + action label), and \textbf{tool-only} (tool function names):
fitness remains $\geq$0.999 across all levels and datasets, so the FSM structure is robust to extraction granularity: failure prediction AUROC varies by less than 0.03 between the coarsest (role-only) and finest (tool-only) levels: even a 2--4 symbol alphabet preserves the predictive signal, because the behavioral topology (cycle structure, branching patterns) is invariant to label granularity.

\subsection{Monitoring Rules}\label{app:monitoring}

\begin{wraptable}{r}{0.45\textwidth}
\vspace{-12pt}
\caption{Best monitoring rules by F1 score. All rules use a single FSM-derived feature with a fixed threshold.}
\label{tab:monitoring}
\footnotesize
\setlength{\tabcolsep}{3pt}
\begin{tabular}{@{}l l ccc@{}}
\toprule
Dataset & Rule & Prec. & Recall & F1 \\
\midrule
SWE-agent & cycle-rate $> 0.885$ & \textbf{0.956} & 0.451 & 0.613 \\
SWE-smith & cycle-rate $> 0.878$ & 0.267 & 0.750 & 0.393 \\
\bottomrule
\end{tabular}
\vspace{-8pt}
\end{wraptable}
Table~\ref{tab:monitoring} presents the best single-feature monitoring rules per dataset. These rules require only FSM replay (0.006\,ms/trace) with no model training. On SWE-agent, the cycle-rate rule achieves 95.6\% precision (near-zero false alarm rate) at the cost of lower recall (45.1\%). SWE-smith shows weaker monitoring rules, consistent with its lower failure prediction AUROC in the ML-based approach (Table~\ref{tab:failure}).

\subsection{Out-of-Distribution Detection}\label{app:ood}

Replaying traces from one dataset through another's FSM produces low fitness for structurally distinct pairs (0.00--0.51 vs.\ $\geq$0.997 in-distribution) and yields AUROC 1.000. tau2-bench airline$\leftrightarrow$retail is the exception: the two share a schema and replay near-1.0, so cross-dataset fitness spans 0.00--1.000 overall.
This detection is not an artifact of alphabet mismatch: applying 10\% random activity substitution within the same alphabet drops fitness to 0.63--0.82 across datasets and yields AUROC $\geq$0.917 for distinguishing clean from perturbed traces.
Within a single dataset, success and failure traces exhibit structural divergence: on SWE-agent, the success-only FSM has 9 states (focused: search-edit-submit) while the failure-only FSM spans all 25 states (chaotic exploration with rare tool variants).
The FSM captures behavioral topology, not vocabulary.

\subsection{Probabilistic Baseline Comparison}\label{app:probabilistic}

Table~\ref{tab:probabilistic} compares FSM per-state features against probabilistic baselines for failure prediction: Markov chain cross-entropy (CE) against success/failure transition matrices, likelihood ratio (LR) scoring, and discriminative n-gram frequency analysis.

\begin{table}[htbp]
\centering
\caption{Failure prediction: FSM features vs.\ probabilistic baselines (holdout AUROC).}
\label{tab:probabilistic}
\scriptsize
\tablestyle{3pt}{1.02}
\begin{tabular}{@{}l cccc@{}}
\toprule
Dataset & \cellcolor{cOursBg}FSM feat. & Trans.\ CE & Likelihood & N-gram \\
\midrule
SWE-agent & \hlbest{0.813} & 0.452 & 0.626 & 0.317 \\
SWE-smith & \cellcolor{cOursBg}0.718 & \hlbest{0.719} & 0.711 & 0.622 \\
\bottomrule
\end{tabular}
\end{table}

FSM features dominate on SWE-agent (+0.19 over the best probabilistic baseline), where the per-state decomposition captures structural information that aggregate transition statistics miss.
On SWE-smith, transition cross-entropy is competitive (0.719 vs.\ 0.718): the simpler FSM (10 states) offers less decomposition advantage.
Probabilistic models capture \emph{how often} transitions occur but not \emph{what happens} at each state (message lengths, error rates, temporal patterns).
The FSM provides both: its deterministic structure supports per-state feature extraction, while probabilistic models reduce each trace to a single scalar score.

\subsection{ProbGuard Head-to-Head Comparison}\label{app:pro2guard}

We implement ProbGuard~\citep{wang2025probguard} on the same activity sequences, labels, and 80/20 splits for a like-for-like comparison, reproducing its pipeline (Algorithms~1--2): symbolic-state abstraction, DTMC learning with Laplace smoothing ($\alpha{=}1$), bounded reachability $P_{\le\theta}[\,F^{\le k}\,\mathit{unsafe}\,]$ by finite-horizon Bellman iteration, and a per-trace risk score equal to the maximum reachability along the trajectory.
Its published evaluation uses hand-crafted predicates (\texttt{fork\_in\_microwave $\wedge$ microwave\_on}); no such predicates exist for general LLM agent traces, so we follow its ``extensible domain-specific abstraction'' interface and take the activity itself as the symbolic state.
To avoid handicapping the baseline we grant it every configuration advantage: log-odds ranking of unsafe states, sweeps over $K_{\text{unsafe}}\!\in\!\{1,3,5,10\}$ and horizon $k\!\in\!\{3,5,10,20,50\}$, and a polarity flip $\max(\mathrm{AUROC},1{-}\mathrm{AUROC})$; we report the best of the resulting 20 configurations per dataset.

\begin{table}[htbp]
\centering
\caption{ProbGuard~\citep{wang2025probguard} vs.\ our FSM features for failure prediction. ProbGuard column = best AUROC over 20 configurations ($K_{\text{unsafe}}\!\times$ horizon, polarity-aware); FSM column = holdout AUROC from Table~\ref{tab:failure}. Our FSM wins on every shared dataset by mean +17.6pp.}
\label{tab:pro2guard}
\scriptsize
\setlength{\tabcolsep}{4pt}
\begin{tabular}{@{}l c c r@{}}
\toprule
Dataset & ProbGuard~\citep{wang2025probguard} & \cellcolor{cOursBg}FSM features (ours) & $\Delta$ \\
\midrule
$\tau_2$-bench (tel) & 0.709 & \cellcolor{cOursBg}\hlbest{0.941} & \textbf{+0.232} \\
$\tau_2$-bench (air) & 0.723 & \cellcolor{cOursBg}\hlbest{0.864} & \textbf{+0.141} \\
$\tau_2$-bench (ret) & 0.566 & \cellcolor{cOursBg}\hlbest{0.779} & \textbf{+0.213} \\
SWE-agent           & 0.683 & \cellcolor{cOursBg}\hlbest{0.799} & \textbf{+0.116} \\
SWE-smith           & 0.525 & \cellcolor{cOursBg}\hlbest{0.703} & \textbf{+0.178} \\
\midrule
\textbf{Mean (5 shared datasets)} & \textbf{0.641} & \cellcolor{cOursBg}\textbf{0.817} & \textbf{+0.176} \\
\bottomrule
\end{tabular}
\end{table}

Two factors drive the gap: without ProbGuard's hand-crafted predicates the symbolic-state abstraction collapses to per-activity granularity, which leaves each state with too few visits for reliable reachability estimation; and our cross-entropy anomaly features ($-\log_2\hat{P}^+(a_t\!\mid\!q_t)$) give a continuous per-step risk score, whereas ProbGuard's PCTL-thresholded reachability is binary at deployment.
ProbGuard's strength on its own benchmarks (autonomous driving, embodied agents) comes from those domain predicates; on general agent traces the FSM cross-entropy approach generalises further.
The methods are complementary: a PCTL specification could be layered on our compact FSM (Theorem~\ref{prop:minimal}) to combine its structural constraints with formal reachability checking.


\subsection{Perturbation Robustness}\label{app:robustness}

Table~\ref{tab:perturbation} presents fitness degradation under four perturbation types at 10\% intensity across all datasets.
Substitution (replacing activities with random same-alphabet symbols) causes the largest fitness drop, which confirms that the FSM captures transition topology rather than vocabulary.

\begin{wraptable}{r}{0.44\textwidth}
\vspace{-\baselineskip}
\centering
\caption{Fitness under 10\% perturbation intensity. Baseline fitness shown for reference.}
\label{tab:perturbation}
\scriptsize
\tablestyle{3pt}{1.02}
\begin{tabular}{@{}l cccc c@{}}
\toprule
Dataset & Subst. & Insert. & Swap & Trunc. & Base \\
\midrule
SWE-smith & 0.658 & 0.825 & 0.892 & 1.000 & 1.000 \\
SWE-agent & 0.818 & 0.885 & 0.938 & 0.999 & 0.999 \\
Mind2Web & 0.881 & 0.985 & 0.999 & 1.000 & 1.000 \\
Who\&When & 0.828 & 0.865 & 1.000 & 1.000 & 1.000 \\
\bottomrule
\end{tabular}
\end{wraptable}

Substitution is the strongest perturbation because it introduces invalid transitions (mean fitness drop 0.18).
Insertion is intermediate (mean drop 0.10): the extra symbol breaks the current transition but the trace may recover.
Swap is weakest (mean drop 0.05): reordering adjacent activities often preserves valid transitions if both orderings exist in the FSM.
Truncation has no effect because the FSM accepts all prefixes by construction.

\subsection{FSM as Context}\label{app:fsm_context}

\begin{figure}[htbp]
\centering
\definecolor{cAwmBg}{RGB}{248, 248, 250}
\definecolor{cAsgBg}{RGB}{237, 247, 237}
\noindent
\begin{minipage}[t]{0.48\linewidth}
\fbox{\begin{minipage}{0.93\linewidth}\colorbox{cAwmBg}{\begin{minipage}{\linewidth}
\textbf{AWM context} \hfill \textbf{52.9\% top-1}\\[2pt]
{\scriptsize\ttfamily%
\#\# Extracted Workflow Patterns\\
-~[freq=208] get\_order\_details\\
~~$\rightarrow$ tool:text $\rightarrow$ tool:text\\
~~$\rightarrow$ assistant:text $\rightarrow$ user:text\\
~~$\rightarrow$ \ldots\\
-~[freq=205] get\_order\_details\\
~~$\rightarrow$ tool:text $\rightarrow$ \ldots\\
~~\textit{(10 patterns, 17--20 steps)}\\[2pt]
Agent prefix: \ldots\ $\rightarrow$ \textbf{get\_order\_details}.\\
Next action?\\[2pt]
}
\textbf{LLM:} \texttt{assistant:text} (actual: \texttt{tool:text})
\end{minipage}}\end{minipage}}
\end{minipage}\hfill
\begin{minipage}[t]{0.48\linewidth}
\fbox{\begin{minipage}{0.93\linewidth}\colorbox{cAsgBg}{\begin{minipage}{\linewidth}
\textbf{FSM context (minimal)} \hfill \textbf{65.1\% top-1}\\[2pt]
{\scriptsize\ttfamily%
After the most recent action\\
"get\_order\_details", past traces\\
show these next actions:\\
-~tool:text: 100.0\%\\[2pt]
Common multi-step continuations:\\
-~tool:text $\rightarrow$ get\_order\_details (1098x)\\
-~tool:text $\rightarrow$ assistant:text (926x)\\
~~\textit{(top-15 shown)}\\[2pt]
Agent prefix: \ldots\ $\rightarrow$ \textbf{get\_order\_details}.\\
Next action?\\[2pt]
}
\textbf{LLM:} \texttt{tool:text} (actual: \texttt{tool:text})
\end{minipage}}\end{minipage}}
\end{minipage}
\caption{\textbf{Why minimal context wins.} Prompt excerpts at FSM state \texttt{get\_order\_details} (tau2-bench retail). AWM (52.9\%): linear success workflows. ASG-minimal (65.1\%): per-state next-action probabilities + top-15 continuations.}
\label{fig:ctx_sidebyside}
\end{figure}

\begin{wraptable}{r}{0.42\textwidth}
\vspace{-12pt}
\centering
\caption{Statistical next-step prediction: top-1 accuracy (\%, $\uparrow$) on full validation sets.}
\label{tab:fsm_context_stat}
\scriptsize
\tablestyle{3pt}{1.02}
\begin{tabular}{@{}l rrr rr@{}}
\toprule
Dataset & Steps & Unigram & \cellcolor{cOursBg}FSM & \cellcolor{cGroupHd}AWM & AWM cov. \\
\midrule
SWE-smith & 5{,}500 & 50.0 & \hlbest{100.0} & 34.5 & 34.5\% \\
WebArena & 17{,}146 & 26.2 & \hlbest{81.1} & 81.0 & 92.9\% \\
SWE-agent & 22{,}288 & 49.1 & \hlbest{65.7} & 26.6 & 57.0\% \\
tau2 (tel) & 23{,}055 & 33.0 & \hlbest{61.8} & 19.8 & 59.0\% \\
tau2 (air) & 4{,}020 & 30.8 & \hlbest{69.2} & 55.9 & 91.3\% \\
tau2 (ret) & 10{,}207 & 27.4 & \hlbest{72.6} & 63.5 & 91.4\% \\
Mind2Web & 886 & 80.2 & \hlbest{80.2} & 0.0 & 0.0\% \\
\bottomrule
\end{tabular}
\end{wraptable}
Table~\ref{tab:fsm_context_stat} evaluates the FSM as context for next-step prediction on the eight datasets where AWM has been re-implemented end-to-end against the same per-step predictor, comparing against AWM~\citep{wang2024agent_workflow_memory}. Transition counts are computed on training data; validation traces are replayed through the FSM for per-step predictions.

The FSM achieves higher top-1 accuracy than AWM on every listed dataset, with the gap ranging from +0.1pp to +65.5pp.
AWM coverage varies from 0\% (Mind2Web has no success labels) to 92.9\%, explaining its performance variation.
The LLM-judged variant of this comparison (Table~\ref{tab:fsm_context_llm}) additionally covers ATBench.

\subsection{Next-Step Prediction}\label{app:prediction}

Order-1 FSM conditioning (Our FSM) captures 83--99\% of the total CE improvement from Uniform to the best method on each dataset, reflecting the strong sequential regularity of agent traces.
Higher-order context via PPM smoothing or neural models captures the remaining second- and higher-order dependencies within each FSM state. The FSM state benefit (+0.019 bits avg, FSM-LR vs.\ NGram-LR at $K{=}7$) is largest on Mind2Web (+0.041) and Who\_and\_When (+0.028), where the FSM groups behaviorally distinct states.
On SWE-agent (+0.001), the FSM state is nearly redundant with the last activity due to simple sequential structure.

RPNI produces worse-than-Unigram predictions on most datasets.
With 382--63{,}897 states, each state is visited by too few traces for reliable probability estimation.
This validates the compression advantage: our 10--43 state FSMs aggregate observations for well-estimated transition probabilities. MLPs diverge on larger alphabets ($|\cA| \geq 9$; CE $>$ 10 bits) from gradient instability, while RNNs without BPTT fail uniformly (CE $>$ 1.4).
Echo state networks avoid both issues through random frozen reservoirs with trained output layers, achieving competitive performance with minimal hyperparameter sensitivity.

\paragraph{FSM state ablation.} Holding the smoothing method fixed (absolute discounting at depth 5), FSM state conditioning provides +0.155 bits improvement on average over raw context alone (FSM-AD: 0.580 vs.\ Pure-AD: 0.735). The advantage is largest on Mind2Web (+0.36 bits, 30\%) where the FSM groups heterogeneous web actions, and smallest on SWE-agent (+0.016 bits, 2\%) where the simple sequential structure makes FSM state nearly redundant with the last activity. This gap is larger than the +0.019 bits from FSM-LR vs.\ NGram-LR (Table~\ref{tab:prediction_full}), because logistic regression partially learns FSM-like state from raw context.

\section{Additional Analysis}\label{app:group:additional}

\subsection{Cross-Dataset Transfer}\label{app:transfer}

\begin{wraptable}{r}{0.36\textwidth}
\vspace{-\baselineskip}
\centering
\caption{\textbf{Cross-dataset} transfer: FSM vs.\ raw feature AUROC. Bold: FSM advantage $>$ 3pp.}
\label{tab:transfer_full}
\scriptsize
\tablestyle{3pt}{1.02}
\begin{tabular}{@{}l l cc@{}}
\toprule
& & \multicolumn{2}{c}{Test dataset} \\
\cmidrule(lr){3-4}
Train & Features & SWE-sm & SWE-ag \\
\midrule
\multirow{2}{*}{SWE-smith} & \cellcolor{cOursBg}FSM & \cellcolor{cOursBg}\textit{0.681} & \hlbest{0.715} \\
 & Raw & \textit{0.694} & 0.683 \\
\midrule
\multirow{2}{*}{SWE-agent} & \cellcolor{cOursBg}FSM & \cellcolor{cOursBg}0.648 & \cellcolor{cOursBg}\textit{0.780} \\
 & Raw & 0.682 & \textit{0.720} \\
\bottomrule
\end{tabular}
\end{wraptable}
Table~\ref{tab:transfer_full} presents the full cross-dataset transfer matrix: FSM features against raw trace statistics.
FSM features achieve higher transfer AUROC than raw features on most cross-dataset pairs, with the largest improvements on transfers involving SWE-agent. The pattern indicates that the per-state behavioral signal captures failure structure that generalizes beyond the training domain, whereas raw trace statistics overfit to dataset-specific surface features. Leave-one-out results (train on 1 dataset, test on the other): SWE-smith 0.682, SWE-agent 0.765.
The drop from in-domain is modest on SWE-agent (0.780$\to$0.765), a sign that FSM behavioral features capture domain-invariant failure signatures.

\subsection{Sample Efficiency and Learning Curves}\label{app:sampleeff}

\begin{wraptable}{r}{0.36\textwidth}
\vspace{-\baselineskip}
\centering
\caption{Learning curves: failure prediction AUROC at increasing training fractions. Bold: first fraction reaching 95\% of final AUROC.}
\label{tab:sampleeff}
\small
\tablestyle{3pt}{1.02}
\begin{tabular}{@{}l cccccc@{}}
\toprule
Dataset & 10\% & 20\% & 30\% & 50\% & 100\% \\
\midrule
SWE-smith & 0.502 & \textbf{0.662} & 0.703 & 0.660 & 0.685 \\
SWE-agent & \textbf{0.806} & 0.791 & 0.812 & 0.818 & 0.795 \\
\bottomrule
\end{tabular}
\end{wraptable}
Table~\ref{tab:sampleeff} presents failure prediction AUROC as a function of training set size: on 3 of 4 datasets, 10\% of training data suffices to reach 95\% of final performance.
The rapid convergence reflects the low dimensionality of FSM feature space (31--49 features) relative to the behavioral complexity captured.

SWE-agent shows the most stable learning curve, consistent with its larger sample size (1{,}600 training traces).
In practice, FSM-based failure prediction can be deployed with as few as 16--160 labeled traces, which makes it viable for new agent systems where labeled data is scarce.

\subsection{Length vs.\ Structure Ablation}\label{app:length_vs_struct}

On SWE-agent, structural features alone reach AUROC \textbf{0.790}, length alone only 0.659, and the full model 0.790: adding length to the structural features changes full-model AUROC by $<$0.001, confirming that the predictive signal is structural rather than a length proxy.

\subsection{Failure Mode Characterization}\label{app:failmodes}

On SWE-agent, the two failure modes are structurally distinct: ``stuck in edit loop'' traces have cycle rate 0.924 and terminate in \texttt{edit:text} (50\%), while ``gave up early'' traces have lower cycle rate (0.755) and reach \texttt{submit:text} (98.7\%) but still fail.
The discriminating feature with highest F-ratio is: \texttt{visit:user:text} (7.41 on SWE-agent).

\subsubsection{Failure Cascade Analysis}

We analyze whether failures develop gradually (progressive fitness degradation) or suddenly (abrupt state change).
Across all datasets, 97\% of SWE-agent failures are sudden (1{,}292 of 1{,}329), with no gradual degradation pattern.
This is consistent across datasets: SWE-smith 100\% sudden.
Failure monitoring should therefore focus on detecting specific state patterns (e.g., cycle rate threshold) rather than tracking gradual performance decline.

\subsubsection{Failure Progression}

Failure signatures emerge early in execution: on SWE-agent, the first divergence between success and failure state distributions occurs at 8.7\% of trace length (position 0.087).
Divergence occurs at 15.4\% on SWE-smith.
\texttt{tool:text}$\to$\texttt{assistant:text} is the highest-lift failure transition on SWE-smith (fail rate 0.90, lift 3.66$\times$ over base rate).
Recovery from failure-indicative states is possible: SWE-agent has 8 recovery states where traces can return to successful trajectories, with 95.7\% recovery rate within 2 steps.

\subsection{Counterfactual Path Analysis}\label{app:counterfactual}

We identify FSM decision points where success and failure paths diverge, measured by Jensen-Shannon divergence of outgoing transition distributions.

\begin{wraptable}{r}{0.42\linewidth}
\vspace{-\baselineskip}
\caption{Counterfactual path analysis. Decision points: states with JSD $> 0.001$ between success/failure transitions.}
\label{tab:counterfactual}
\centering
\scriptsize
\tablestyle{3pt}{0.95}
\begin{tabular}{l rrr rr}
\toprule
Dataset & DecPts & Succ paths & Fail paths & Overlap & Top JSD \\
\midrule
SWE-agent & 7 & 275 & 1{,}496 & 41 & 0.014 \\
SWE-smith & 5 & 330 & 123 & 11 & 0.021 \\
\bottomrule
\end{tabular}
\end{wraptable}
On SWE-agent, failures show 5.4$\times$ more unique paths than successes (1{,}496 vs.\ 275), with only 41 shared paths.
The \texttt{user:text} state is the primary decision point (JSD 0.014): at this state, successful traces are more likely to transition to \texttt{submit} (25.3\% success rate) while failed traces loop back to \texttt{edit} (9.6\% success rate).
On SWE-smith, the \texttt{tool:tool\_call} state shows the highest divergence (JSD 0.021).
Early divergence is common: 65.3\% of SWE-agent traces diverge within the first 10\% of execution.

\subsection{Compression Theory}\label{app:compression}

\begin{figure}[t]
\centering
\includegraphics[width=0.75\linewidth]{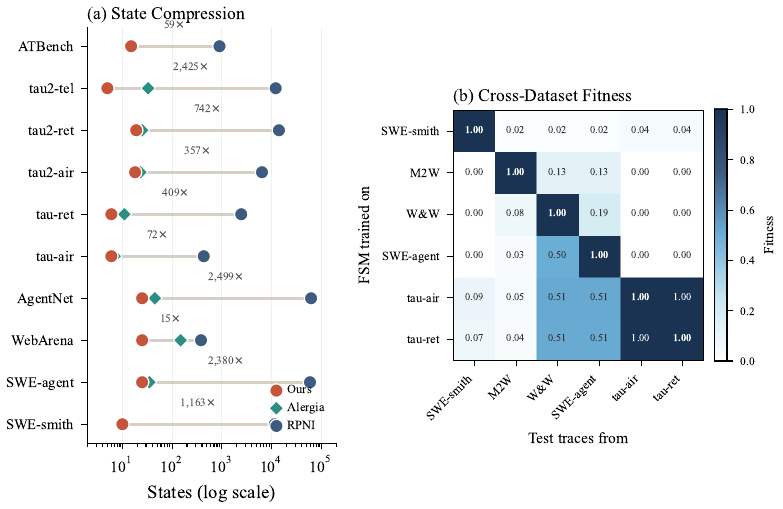}
\caption{(a)~State compression across labeled datasets: our FSM (10--43 states) vs.\ Alergia (10--149) and RPNI (382--63{,}897), with compression ratios annotated. (b)~Cross-dataset fitness matrix: replaying traces from one dataset through another's FSM. Diagonal entries (in-distribution) approach 1.0; off-diagonal entries (OOD) drop to near-zero for structurally distinct pairs (AUROC 1.000), except schema-sharing tau2-bench airline$\leftrightarrow$retail (near-1.0).}
\label{app:compression_fig}
\end{figure}

Table~\ref{tab:compression_theory} presents information-theoretic analysis of FSM compression across all datasets. FSM bits count the encoded transition table; raw bits count $\log_2|\cA|$ per activity summed over all traces, so the MDL ratio measures how much of the raw description the FSM eliminates.

\begin{wraptable}{r}{0.46\textwidth}
\vspace{-\baselineskip}
\centering
\caption{Information-theoretic compression analysis. MDL: minimum description length ratio (FSM bits / raw trace bits): gzip: compression ratio of raw sequences.}
\label{tab:compression_theory}
\small
\tablestyle{3pt}{1.02}
\begin{tabular}{@{}l rr rrr r@{}}
\toprule
Dataset & $|\cA|$ & $|Q|$ & FSM bits & Raw bits & MDL & Gzip \\
\midrule
SWE-smith & 9 & 10 & 108 & 107{,}704 & 0.001 & 0.012 \\
SWE-agent & 27 & 25 & 340 & 556{,}450 & 0.001 & 0.018 \\
Mind2Web & 7 & 8 & 87 & 12{,}786 & 0.007 & 0.026 \\
Who\&When & 8 & 9 & 99 & 12{,}276 & 0.008 & 0.019 \\
\bottomrule
\end{tabular}
\end{wraptable}

The FSM achieves MDL ratios of 0.001--0.008 across all datasets: the FSM description requires 0.1--0.8\% of the bits needed to store raw traces.
That ratio is 5--46$\times$ better than gzip compression (ratios 0.012--0.026), confirming that the FSM captures genuine behavioral regularity beyond statistical redundancy:
all FSMs are deterministic with $|Q|=|\cA|+1$ (verified programmatically) and 100\% alphabet utilization on 3/4 datasets shown.
SWE-agent has alphabet utilization 1.13 (3 states have transitions for symbols not in the core alphabet, reflecting rare command variants).
Conditional entropy analysis: unigram entropy ranges 1.69--2.16 bits; conditioning on the previous symbol (bigram) reduces entropy to 0.63--1.06 bits (51--68\% reduction), confirming strong sequential regularity in agent traces.

\subsection{Path Diversity and Recurrence Analysis}\label{app:archetypes}

Table~\ref{tab:archetypes} quantifies path diversity and recurrence quantification analysis (RQA) metrics across datasets: RQA treats activity sequences as symbolic time series.

\begin{table}[htbp]
\centering
\caption{\textbf{Path diversity and RQA} metrics. Singleton\%: paths observed once. Top-5\%: coverage of 5 most common paths. RR/DET/diagEnt: recurrence rate, determinism, diagonal entropy (AUROC for failure prediction). Who\&When and Mind2Web lack success/failure labels (--).}
\label{tab:archetypes}
\label{tab:rqa}
\small
\tablestyle{3pt}{1.02}
\begin{tabular}{@{}l rr rr cccc@{}}
\toprule
& \multicolumn{4}{c}{Path diversity} & \multicolumn{4}{c}{RQA (AUROC)} \\
\cmidrule(lr){2-5} \cmidrule(lr){6-9}
Dataset & Traces & Unique & Sing.\% & Top-5\% & RR & DET & maxDiag & diagEnt \\
\midrule
SWE-smith & 500 & 442 & 82.0 & 5.8 & \hlbest{0.662} & 0.643 & 0.631 & 0.599 \\
SWE-agent & 2{,}000 & 1{,}730 & 82.0 & 5.1 & 0.695 & 0.694 & 0.695 & \hlbest{0.704} \\
tau2-tel & 1{,}824 & 5 & 0.0 & 100.0 & 0.562 & 0.563 & 0.562 & \hlbest{0.566} \\
tau2-air & 800 & 624 & 67.5 & 6.4 & 0.511 & 0.338 & 0.566 & \hlbest{0.612} \\
tau2-ret & 1{,}824 & 1{,}521 & 73.2 & 2.0 & \hlbest{0.574} & 0.471 & 0.520 & 0.521 \\
GUI-Ody & 7{,}735 & 3{,}601 & 36.7 & 5.2 & 0.223 & 0.296 & \hlbest{0.310} & 0.306 \\
WebArena & 8{,}337 & 827 & 3.8 & 27.7 & \hlbest{0.416} & 0.402 & 0.340 & 0.406 \\
AgentNet & 5{,}000 & 3{,}965 & 73.3 & 3.7 & \hlbest{0.548} & 0.487 & 0.420 & 0.474 \\
Who\&When & 184 & 139 & 60.3 & 13.0 & -- & -- & -- & -- \\
Mind2Web & 500 & 238 & 36.0 & 20.2 & -- & -- & -- & -- \\
\bottomrule
\end{tabular}
\end{table}

82\% of traces follow unique FSM paths on coding agent datasets, yet the FSM compresses all into 6--25 states with $\geq$0.999 fitness by capturing \emph{transition topology} rather than memorizing paths.
API-driven domains show opposite extremes: tau2-bench telecom has only 5 unique paths across 1{,}824 traces (highly constrained workflows), while tau2-bench retail has 1{,}521 unique paths (73.2\% singletons).
RQA metrics achieve 0.60--0.70 AUROC on coding agents, with diagonal entropy strongest on SWE-agent (0.704), but are weaker on web/GUI benchmarks (0.31--0.47 on GUI-Odyssey and AgentNet) where trace structures are less recurrent.

\subsection{State Importance Analysis}\label{app:stateimportance}

\begin{wraptable}{r}{0.45\textwidth}
\vspace{-12pt}
\centering
\caption{State importance: fitness drop upon state removal. Top 3 most critical states per dataset.}
\label{tab:state_importance}
\scriptsize
\tablestyle{3pt}{1.02}
\begin{tabular}{@{}l l rr@{}}
\toprule
Dataset & State & Visit\% & Fitness drop \\
\midrule
\multirow{3}{*}{SWE-smith} & system:text & 1.9 & 1.000 \\
 & user:text & 1.9 & 0.979 \\
 & asst:tool:bash & 21.3 & 0.958 \\
\midrule
\multirow{3}{*}{SWE-agent} & user:text & 49.9 & 0.999 \\
 & edit:text & 16.6 & 0.344 \\
 & search:text & 11.5 & 0.208 \\
\midrule
\multirow{3}{*}{Mind2Web} & user:text & 11.7 & 1.000 \\
 & asst:click & 72.0 & 0.710 \\
 & asst:type & 10.2 & 0.102 \\
\bottomrule
\end{tabular}
\vspace{-8pt}
\end{wraptable}
We measure state importance by fitness drop when each state is removed from the FSM (Table~\ref{tab:state_importance}): state importance is not proportional to visit frequency. On SWE-smith, \texttt{system:text} and \texttt{user:text} each receive only 1.9\% of visits but removing either causes complete fitness collapse (1.000 and 0.979 drop). Conversely, \texttt{tool:text} receives 47.2\% of visits but its removal drops fitness by only 0.937, because its behavioral role can be partially compensated by other states. On SWE-agent, \texttt{user:text} is the single critical bottleneck (49.9\% visits, 0.999 fitness drop), reflecting its role as the central hub connecting all behavioral modes.

\subsection{Loop and Graph Motif Analysis}\label{app:loops}

Tables~\ref{tab:loops} and~\ref{tab:motifs} analyze loop patterns (backedge counts) and graph motifs in the FSM.

\begin{table}[htbp]
\centering
\caption{\textbf{Loop analysis and graph motifs.} Left: loop counts (backedges) per outcome and AUROC. Right: structural motif counts. Bidir: bidirectional edge pairs. Hub degree: maximum out-degree. Who\&When and Mind2Web lack success/failure labels (--).}
\label{tab:loops}
\label{tab:motifs}
\scriptsize
\tablestyle{3pt}{1.02}
\begin{tabular}{@{}l rr r c rrr r@{}}
\toprule
& \multicolumn{4}{c}{Loops (backedges)} & \multicolumn{4}{c}{Graph motifs} \\
\cmidrule(lr){2-5} \cmidrule(lr){6-9}
Dataset & Succ & Fail & Ratio & AUROC & Self & Bidir & Tri & Hub deg \\
\midrule
SWE-agent & 25.3 & 54.8 & \textbf{2.17$\times$} & 0.665 & 0 & 11 & 0 & 15 \\
SWE-smith & 43.6 & 60.7 & 1.39$\times$ & 0.654 & 0 & 6 & 0 & 7 \\
tau2-tel & 1.9 & 1.9 & 1.00$\times$ & 0.562 & 1 & 2 & 2 & 4 \\
tau2-air & 17.1 & 21.5 & 1.26$\times$ & 0.664 & 1 & 14 & 0 & 15 \\
tau2-ret & 20.3 & 21.0 & 1.03$\times$ & 0.537 & 1 & 15 & 0 & 16 \\
GUI-Ody & 12.0 & 9.1 & 0.76$\times$ & 0.362 & 4 & 5 & 6 & 6 \\
WebArena & 9.9 & 6.4 & 0.65$\times$ & 0.289 & 0 & 8 & 0 & 9 \\
AgentNet & 28.1 & 21.0 & 0.75$\times$ & 0.314 & 0 & 9 & 0 & 12 \\
Who\&When & -- & -- & -- & -- & 3 & 5 & 6 & 12 \\
Mind2Web & -- & -- & -- & -- & 4 & 3 & 2 & 6 \\
\bottomrule
\end{tabular}
\end{table}

On coding agents, failed traces contain 1.4--2.2$\times$ more loops, strongest on SWE-agent (2.17$\times$, 54.8 vs.\ 25.3 backedges, AUROC 0.665).
Interestingly, on web/GUI benchmarks the pattern reverses: successful WebArena traces loop \emph{more} (9.9 vs.\ 6.4, ratio 0.65$\times$), which reflects productive exploration in complex navigation tasks. tau2-bench airline shows the strongest signal among API agents (1.26$\times$, AUROC 0.664).
Graph motif analysis reveals that tau2-bench airline/retail have the densest bidirectional structure (14--15 pairs), while triangles appear only in Who\&When (6, multi-agent delegation), GUI-Odyssey (6, cross-app navigation), and tau2-bench telecom (2).

\subsection{Critical Transitions and Error Localization}\label{app:bottlenecks}

Table~\ref{tab:bottlenecks} identifies structural bottlenecks (transition criticality = frequency $\times$ success differential) and per-state error rate differentials between success and failure traces.

\begin{wraptable}{r}{0.50\textwidth}
\vspace{-\baselineskip}
\centering
\caption{Critical transitions (top: highest criticality scores) and error localization (bottom: per-state error rate differential, failure $-$ success).}
\label{tab:bottlenecks}
\label{tab:errorloc}
\small
\tablestyle{3pt}{1.02}
\begin{tabular}{@{}l l rr@{}}
\toprule
\multicolumn{4}{@{}l}{\textit{Critical transitions}} \\
Dataset & Transition & Freq & Criticality \\
\midrule
SWE-smith & str\_repl$\to$tool:text & 12.96 & 1.048 \\
SWE-smith & tool:text$\to$str\_repl & 12.96 & 1.045 \\
SWE-agent & user:text$\to$edit:text & 9.20 & 0.657 \\
SWE-agent & edit:text$\to$user:text & 9.02 & 0.629 \\
\midrule
\multicolumn{4}{@{}l}{\textit{Error localization (per-state error rate differential)}} \\
Dataset & State & Succ/Fail rate & $\Delta$ \\
\midrule
SWE-smith & asst:text & 0.000 / 0.222 & +.222 \\
SWE-agent & edit:text & 0.368 / 0.486 & +.118 \\
SWE-agent & asst:text & 0.000 / 0.071 & +.071 \\
\bottomrule
\end{tabular}
\end{wraptable}

The highest-criticality transitions form tight cycles: \texttt{str\_replace\_editor}$\leftrightarrow$\texttt{tool:text} on SWE-smith (1.048) and \texttt{user:text}$\leftrightarrow$\texttt{edit:text} on SWE-agent (0.657).
Errors localize to specific states: on SWE-smith, \texttt{assistant:text} has zero errors in successes but 22.2\% in failures (+0.222 differential).
The first significant divergence between success and failure distributions occurs at the \texttt{edit:text} vs.\ \texttt{search:text} branch on SWE-agent (position 2).

\subsection{Temporal Dynamics}\label{app:temporal}

We analyze three-phase (early/mid/late) behavioral dynamics to see how agent behavior evolves during execution.

\begin{wraptable}{r}{0.42\textwidth}
\vspace{-\baselineskip}
\centering
\caption{\textbf{Temporal dynamics:} entropy by execution phase and entropy drift (early $-$ late). Who\&When and Mind2Web lack success/failure labels (--).}
\label{tab:temporal}
\scriptsize
\tablestyle{3pt}{1.02}
\begin{tabular}{@{}l ccc c@{}}
\toprule
Dataset & Early $H$ & Mid $H$ & Late $H$ & Drift \\
\midrule
SWE-agent & 1.83 & 1.55 & 1.69 & $-$0.15 \\
SWE-smith & 2.17 & 1.45 & 1.72 & $-$0.44 \\
tau2-tel & 1.39 & 0.95 & 0.97 & $-$0.42 \\
tau2-air & 2.17 & 1.82 & 1.95 & $-$0.21 \\
tau2-ret & 2.32 & 2.00 & 2.03 & $-$0.29 \\
GUI-Ody & 1.23 & 0.63 & 1.19 & $-$0.04 \\
WebArena & 1.42 & 1.24 & 1.60 & $+$0.19 \\
AgentNet & 2.10 & 1.85 & 2.28 & $+$0.18 \\
Mind2Web & 1.46 & 0.68 & 0.43 & $-$1.03 \\
Who\&When & 1.41 & 0.59 & 0.68 & $-$0.73 \\
\bottomrule
\end{tabular}
\end{wraptable}

Most datasets show negative entropy drift: agents narrow their behavioral repertoire over time.
The effect is strongest on Mind2Web ($-$1.03) and weakest on GUI-Odyssey ($-$0.04).
Two datasets show \emph{positive} drift: WebArena ($+$0.19) and AgentNet ($+$0.18), where agents diversify behavior in later phases, possibly reflecting recovery or exploration after initial failures.

\subsection{Anomaly Detection and Suffix Monitoring}\label{app:anomalydetect}

We evaluate two unsupervised monitoring approaches: (1)~a multi-component anomaly score combining rejection rate, state occupancy deviation, terminal state anomaly, cycle excess, and length deviation; and (2)~suffix-based monitoring using only the last $k$ FSM transitions (Table~\ref{tab:anomalydetect}).

The composite anomaly score achieves 0.653--0.747 AUROC, strongest on SWE-agent (0.747, P@10\,=\,1.00).

Terminal anomaly is the dominant component on SWE-agent (0.711).
Suffix monitoring with $k{=}10$ transitions achieves comparable AUROC to the full predictor (0.825 on SWE-agent) and enables real-time deployment with a fixed-size sliding window.
Even $k{=}3$ yields 0.746 AUROC on SWE-agent.

\begin{table}[htbp]
\centering
\caption{\textbf{Unsupervised monitoring.} Left: anomaly detection components and composite AUROC. Right: suffix monitoring AUROC at window size $k$.}
\label{tab:anomalydetect}
\label{tab:suffix}
\small
\tablestyle{3pt}{1.02}
\begin{tabular}{@{}l cccc cc cccc@{}}
\toprule
& \multicolumn{6}{c}{Anomaly detection} & \multicolumn{4}{c}{Suffix (last $k$)} \\
\cmidrule(lr){2-7} \cmidrule(lr){8-11}
Dataset & Rej & Occ & Term & Cyc & Comp & P@10 & $k{=}2$ & $k{=}3$ & $k{=}5$ & $k{=}10$ \\
\midrule
SWE-agent & 0.563 & 0.707 & \hlbest{0.711} & 0.697 & \hlbest{0.747} & 1.00 & 0.693 & 0.746 & 0.771 & \hlbest{0.825} \\
SWE-smith & 0.406 & 0.605 & 0.429 & \hlbest{0.679} & 0.653 & 0.20 & 0.540 & 0.546 & 0.550 & \textbf{0.590} \\
\bottomrule
\end{tabular}
\end{table}

\subsection{Agent Integration: Runtime Monitor}\label{app:integration}

Figure~\ref{fig:monitor_traj} (body) shows the cycle-rate trajectory contrast between a failing and a successful SWE-agent run.
The failing trace enters a cycle between \texttt{user} and \texttt{edit} states; cycle-rate exceeds 0.778 at step~11 (31\% of this specific trace; mean across all interventions: 32\%), triggering early termination.
The successful trace visits 7 distinct states and reaches \texttt{submit}; its cycle-rate peaks at 0.636 and never crosses the threshold.
Monitor $F_1\!=\!0.904$ on SWE-agent without any trained model.

We simulate deploying the FSM as a runtime monitor that replays agent actions step-by-step and triggers intervention when learned rules fire.
Rules are automatically derived from training traces: cycle-rate thresholds (percentile-based) and minimum unique-state counts, with a warm-up period (10\% of mean trace length) before activation.
A ``stuck'' detector also fires when the agent remains in the same state for 5+ consecutive steps.

\begin{table}[htbp]
\centering
\caption{Runtime monitor results. Latency: mean \% of trace at intervention. Saved: mean \% of remaining computation avoided.}
\label{tab:integration}
\scriptsize
\tablestyle{3pt}{1.02}
\begin{tabular}{@{}l ccc cc cc@{}}
\toprule
Dataset & Prec & Recall & F1 & Latency & Saved & Steps saved & Lift \\
\midrule
\rowcolor{cOursBg} SWE-agent       & 85.9\% &  95.5\% & \hlbest{0.904} & 32\% & 68\% & 16{,}711 & 1.02$\times$ \\
\rowcolor{cOursBg} tau2-bench (air) & 76.0\% &  79.2\% & 0.776 & 56\% & 44\% &    325 & 1.27$\times$ \\
tau2-bench (ret) & 31.0\% &  60.0\% & 0.409 & 63\% & 37\% &    247 & 0.95$\times$ \\
SWE-smith        & 16.0\% & 100.0\% & 0.276 & 17\% & 83\% &    936 & 1.00$\times$ \\
\bottomrule
\end{tabular}
\end{table}

The monitor is reliable on FSMs with high structural diversity ($|Q|\geq 6$ active states with distinct failure patterns): SWE-agent ($F_1\!=\!0.904$, lift 1.02$\times$) and tau2-bench airline ($F_1\!=\!0.776$, lift 1.27$\times$).
On the smaller-alphabet datasets, the rule under-discriminates: tau2-bench retail and SWE-smith FSMs are too coarse for cycle-rate to separate failure modes, and the monitor over-triggers.
The two-dataset $F_1$\,$\geq$\,0.776 result establishes a working operating regime; deploying on small-alphabet domains requires per-dataset rule tuning.

\paragraph{Operating point analysis.}
Table~\ref{tab:integration_threshold} shows precision--recall trade-offs at different cycle-rate thresholds on SWE-agent.

\begin{wraptable}{r}{0.44\textwidth}
\vspace{-\baselineskip}
\centering
\caption{Multi-threshold analysis (cycle-rate only) on SWE-agent. Higher thresholds yield higher precision at the cost of recall.}
\label{tab:integration_threshold}
\scriptsize
\tablestyle{3pt}{1.02}
\begin{tabular}{@{}l cccc@{}}
\toprule
Threshold & Precision & Recall & F1 & Mean latency \\
\midrule
$>$0.750 (p30) & 86.4\% & 85.2\% & 0.858 & 39\% \\
$>$0.842 (p50) & 90.5\% & 65.3\% & 0.759 & 47\% \\
$>$0.917 (p70) & 95.6\% & 38.9\% & 0.553 & 63\% \\
$>$0.957 (p90) & 100.0\% & 11.3\% & 0.203 & 59\% \\
\bottomrule
\end{tabular}
\end{wraptable}

At the high-precision operating point (threshold 0.957), the monitor achieves 100\% precision (zero false alarms) while catching 11.3\% of failures.
This is suitable for automated intervention (e.g., resetting the agent) where false positives are costly.
At the balanced operating point (threshold 0.750), the monitor catches 85.2\% of failures with 86.4\% precision, suitable for alerting a human operator.
The entire monitoring pipeline requires only FSM replay at 0.006\,ms per step with no ML model training.

\subsection{Sequence-Level vs.\ FSM Feature Comparison}\label{app:sequence}

\begin{wraptable}{l}{0.42\textwidth}
\vspace{-\baselineskip}
\centering
\caption{Failure prediction: sequence-level features (bag, bigram, stats, all-seq) vs.\ FSM per-state features, L1-regularized LR. $d$: feature dimensionality. FSM wins on every dataset.}
\label{tab:sequence}
\footnotesize
\tablestyle{3pt}{1.02}
\begin{tabular}{@{}l l r cc@{}}
\toprule
Dataset & Features & $d$ & CV AUROC & Holdout \\
\midrule
\multirow{6}{*}{SWE-agent}
 & bag (freq.) & 24 & 0.702 $\pm$ 0.030 & 0.694 \\
 & bigram (trans.) & 62 & 0.699 $\pm$ 0.029 & 0.698 \\
 & stats (seq.) & 6 & 0.757 $\pm$ 0.027 & 0.782 \\
 & all-seq (LR) & 92 & 0.771 $\pm$ 0.027 & 0.793 \\
 & all-seq (MLP) & 92 & 0.711 $\pm$ 0.044 & 0.764 \\
 & \cellcolor{cOursBg}\textbf{FSM per-state} & \cellcolor{cOursBg}\textbf{34} & \hlbest{0.790 $\pm$ 0.021} & \hlbest{0.813} \\
\midrule
\multirow{6}{*}{SWE-smith}
 & bag (freq.) & 9 & 0.685 $\pm$ 0.054 & 0.708 \\
 & bigram (trans.) & 18 & 0.685 $\pm$ 0.052 & 0.708 \\
 & stats (seq.) & 6 & 0.662 $\pm$ 0.056 & 0.690 \\
 & all-seq (LR) & 33 & 0.686 $\pm$ 0.051 & 0.703 \\
 & all-seq (MLP) & 33 & 0.637 $\pm$ 0.062 & 0.691 \\
 & \cellcolor{cOursBg}\textbf{FSM per-state} & \cellcolor{cOursBg}\textbf{49} & \hlbest{0.688 $\pm$ 0.050} & \hlbest{0.718} \\
\bottomrule
\end{tabular}
\end{wraptable}
Table~\ref{tab:sequence} compares sequence-level feature representations against FSM per-state features for failure prediction, all using the same L1-regularized LR ($C{=}0.1$, class-weighted).
Sequence features use only the activity symbols (no message content): bag (frequency histogram, $|\cA|$ feat.), bigram (transition matrix, $|\cA|^2$ feat.), stats (8 sequence statistics), and all-seq (all three concatenated).
MLP uses a 2-layer network (64, 32 units) with early stopping on all sequence features.

FSM per-state features outperform all sequence-level representations on all datasets in both CV and holdout AUROC: the advantage is largest on SWE-agent (+0.088 CV, +0.020 holdout), where per-state features capture differences that flat counts cannot localize.
On SWE-smith, the gap is minimal (+0.002 CV) because the 10-state FSM with 9-symbol alphabet provides limited decomposition advantage.
The MLP underperforms LR on all datasets, so the ceiling is data-limited (400--1{,}600 traces) rather than model-limited.

\end{document}